\documentclass{article}
\usepackage{amsmath,amssymb,amsfonts,bm}
\usepackage{multirow}
\usepackage{mathrsfs}
\usepackage{booktabs}
\usepackage{authblk}
\usepackage[ruled,vlined]{algorithm2e}
\usepackage{ragged2e} % \justifying¿ÉÓÃ
\usepackage{booktabs}
\usepackage{amsthm}
\usepackage{latexsym}
\usepackage{graphicx}
\usepackage{rotating}
\usepackage{subfig}
\usepackage{color,varioref}
\usepackage{longtable}
\allowdisplaybreaks[4]
\usepackage[colorlinks=true,linkcolor=blue]{hyperref}

\newtheorem{theorem}{Theorem}[section]
\newtheorem{assumption}{Assumption}[section]
\newtheorem{definition}{Definition}[section]
\newtheorem{lemma}{Lemma}[section]
\newtheorem{remark}{Remark}[section]
\newtheorem{corollary}{Corollary}[section]
\newtheorem{scheme}{Scheme}[section]
\usepackage{geometry}
\usepackage{enumitem}

\usepackage{float}

\def\L{{\mathcal L}}
\def\R{{\mathbb  R}}
\def\bw{\mathbf{w}}
\def\bz{\mathbf{z}}
\def\by{{\mathbf y}}
\def\bx{{\mathbf x}}
\def\bg{{\boldsymbol  g}}
\def\blam{{\boldsymbol  \lambda}}
\def\tauk{\mathcal{N}^{\tau_{k+1}}}
\def\tk{\tau_{k+1}}
\def\oby{\overline{\mathbf y}}
\def\obx{\overline{\mathbf x}}

\def\ADMM{{\tt FedADMM}}
\def\FA{{\tt FedAvg}}
\def\FP{{\tt FedProx}}
\def\FAT{{\tt FedAlt}}
\def\FS{{\tt FedSim}}
\def\MA{{\tt MiFedADMM}}
\def\1A{{\tt 1-iFedADMM}}
\def\2A{{\tt 2-iFedADMM}}
\def\3A{{\tt 3-iFedADMM}}

\usepackage{titlesec}
\begin{document}
	
\title{A Multi-Step Inertial ADMM Algorithm for Federated Learning\footnote{This research is supported by the National Natural Science Foundation of China grants (12131004, 12401415, 12471282), the 111 Project (D23017), the Natural Science Foundation of Hunan Province (2025JJ60009), and the Postgraduate Scientific Research Innovation Project of Hunan Province (CX20250939, LXBZZ2024114).}}

\author[1]{Hao Li}
\author[1]{Qingsong Wang}
\author[2]{Xihong Yan}
\author[1]{Chengcheng Yan}
\author[3]{Deren Han\thanks{Corresponding author. Email: {\tt handr@buaa.edu.cn.}}}

\affil[1]{School of Mathematics and Computational Science, Xiangtan University, Xiangtan 411105, China}	
\affil[2]{Shanxi Key Laboratory for Intelligent Optimization Computing and Block-chain Technology, School of Mathematics and Statistics, Taiyuan Normal University, Jinzhong 030619, China}
\affil[3]{School of Mathematical Sciences, Beihang University, Beijing 100191, China}

\date{}	
\maketitle

\begin{abstract}
Federated learning has recently emerged as an important paradigm for large-scale distributed model training, which can be naturally formulated as a class of distributed learning problems. The alternating direction method of multipliers (ADMM) is a widely used approach for solving such problems. However, its practical performance is often hindered by significant communication overhead and system heterogeneity.
In this paper, we propose a multi-step inertial ADMM algorithm, called MiFedADMM, for federated learning. The proposed method leverages multiple historical iterates on the server side to construct a higher-order extrapolation after each aggregation to accelerate the convergence behavior. Under mild assumptions, we establish convergence results for the single-step variant of the proposed method and derive an $\mathcal{O}(1/K)$ convergence rate.
Numerical experiments on both synthetic and real-world datasets demonstrate the effectiveness and improved performance of the proposed method compared with existing approaches.
\end{abstract}
	
\begin{keywords}
Federated optimization; Alternating direction method of multipliers; Inertial method; Multi-step scheme; Convergence analysis.
\end{keywords}
	
\maketitle
\section{Introduction}
Federated learning (FL) has become an important paradigm for distributed model training, where multiple clients collaboratively learn a global model under the coordination of a central server while keeping their local data locally \cite{konevcny2015federated, konevcny2016federated}. Such a paradigm can be naturally formulated as a class of distributed optimization problems. Owing to its privacy-preserving nature, FL has attracted considerable attention and has been widely applied in various areas such as mobile edge computing \cite{mao2017survey, yang2020federated, ye2022decentralized, zhou2021communication}, the Internet of Things \cite{9475501}, digital health \cite{rieke2020future}, and vehicular networks \cite{elbir2020federated,pokhrel2020federated, posner2021federated, samarakoon2019distributed}.

The typical FL training process, illustrated in Figure \ref{fig:FL}, consists of iterative communication between a central server and multiple clients. At every communication round (CR), each client first performs several local updates using its own dataset and uploads the updated parameters to the server. The server then aggregates these local parameters to produce a new global model and broadcasts it back to all clients.
\begin{figure}[htbp]
	\centering
	\includegraphics[width=\linewidth]{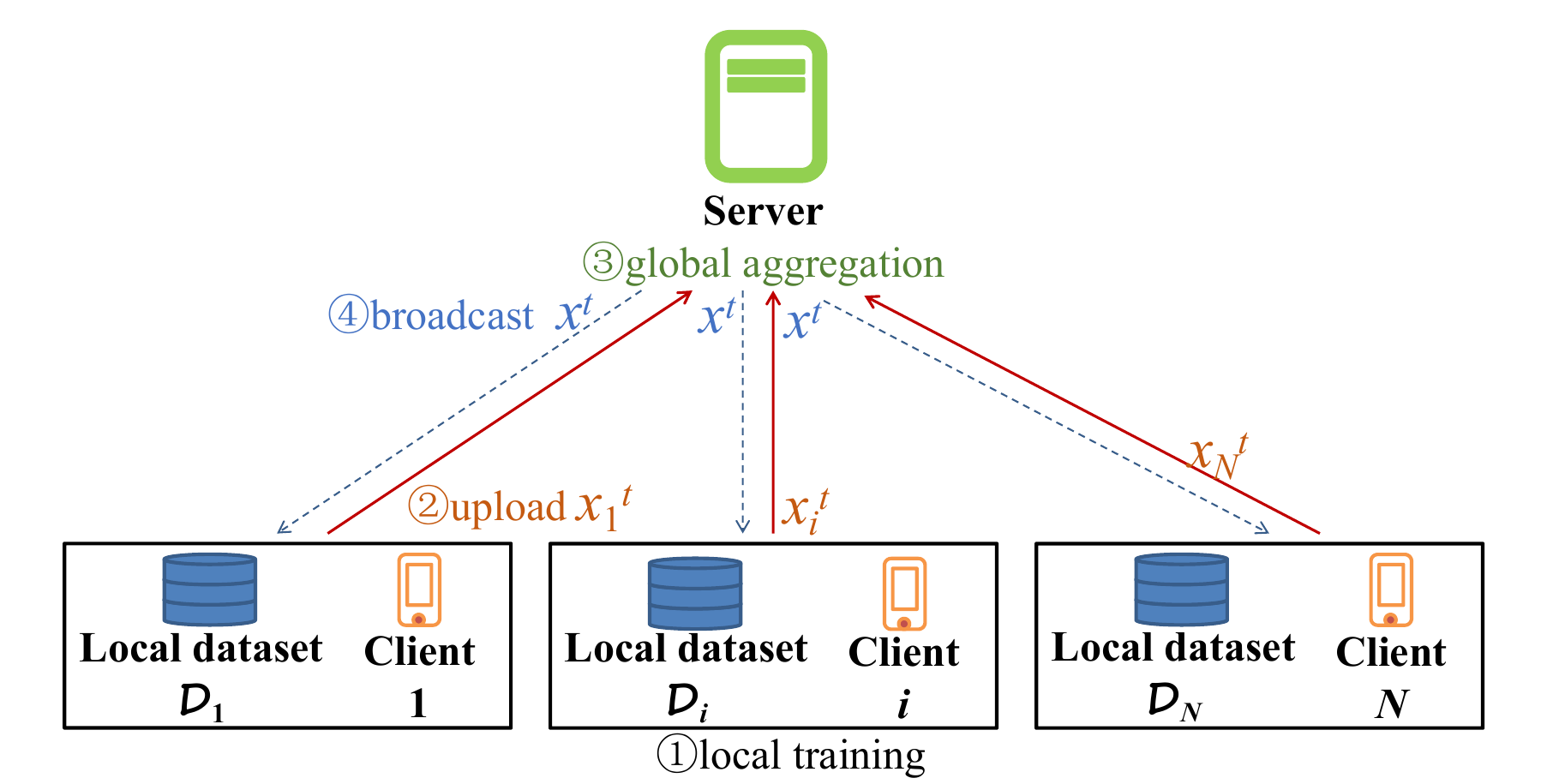}
	\caption{Process of FL.}
	\label{fig:FL}
\end{figure}
The above learning task can be formulated as the following problem. Consider a system with $N$ clients, each possessing a local dataset $\mathcal{D}_i$. The local loss function of client $i$ is defined as:
\begin{eqnarray*}
\begin{array}{lll}
f_i(\bx) := \frac{1}{|\mathcal{D}_i|}\sum\limits_{\xi\in\mathcal{D}_i} \ell_i(\bx; \xi),
\end{array}
\end{eqnarray*}
where $\xi$ denotes a data sample from $\mathcal{D}_i$, $\ell_i(\cdot; \xi_i): \mathbb{R}^n \to \mathbb{R}$ is a continuous and lower-bounded loss function, $|\mathcal{D}_i|$ denotes the cardinality of $\mathcal{D}_i$, and $\bx \in \mathbb{R}^n$ is the model parameter. The global objective is given by
\begin{eqnarray} \label{FL-opt}
\begin{array}{lll}
\underset\bx\min f(\bx):=\sum\limits_{i=1}^{N}  p_i f_i(\bx),
\end{array}
\end{eqnarray}
where $p_i > 0$ are weights satisfying $\sum_{i=1}^N p_i = 1$. 
While this formulation preserves data privacy by avoiding the transmission of raw data, multiple CRs may incur significant communication overhead, which becomes a major bottleneck for large-scale FL.

A foundational algorithm for FL is \FA\ \cite{mcmahan2017communication}, which aggregates local stochastic gradient descent (SGD) updates and has demonstrated strong empirical performance. However, under data heterogeneity (i.e., non-independent and identically distributed (non-i.i.d.) data across diverse clients), \FA\ suffers from client drift, where local models diverge toward their respective optima, thereby slowing down or even preventing convergence to a global optimum \cite{charles2021convergence,  karimireddy2020scaffold, li2019convergence, malinovskiy2020local,  mitra2021linear, pathak2020fedsplit}.
To alleviate this issue, several methods have been proposed. For instance, \FP\ \cite{li2020federatedprox} introduces a proximal term to penalize local deviations from the global model, though its performance is sensitive to the choice of the regularization parameter \cite{wang2022fedadmm}. {\tt SCAFFOLD} \cite{karimireddy2020scaffold} employs control variates to reduce the variance of local updates and correct the client drift. Nevertheless, it incurs additional communication overhead by requiring the transmission of auxiliary variables, which can be prohibitive in communication-limited environments. These limitations suggest that existing approaches mainly address client drift by modifying local updates, without fully exploiting the consensus-based structure of distributed learning.

Alternatively, the alternating direction method of multipliers (ADMM) \cite{boyd2011distributed,glowinski1975approximation, han2022survey} has recently attracted increasing attention as an effective framework for distributed learning in FL, owing to its theoretical guarantees and robustness to data heterogeneity. To apply ADMM, we reformulate problem \eqref{FL-opt} into the following consensus form by introducing auxiliary variables $\by_i=\bx, i\in[N]:=\{1,2,\cdots ,N\}$:
\begin{eqnarray} \label{FL-opt-ver1}
	\begin{array}{lll}
		\min\limits_{\bx, \by_i}~~\sum\limits_{i=1}^N  p_if_i(\by_i),\\
		{\rm s.t.}~~~\by_i=\bx,~~i\in[N].
	\end{array}
\end{eqnarray}
This formulation enables ADMM to decompose the problem into tractable local subproblems via dual variables and an augmented Lagrangian. In particular, the dual variables serve to correct the bias in local updates induced by non-i.i.d. data, while the quadratic penalty term enforces consensus among clients by balancing local objectives and global consistency. As a result, ADMM-based FL algorithms,   including {\tt FedPD} \cite{zhang2021fedpd}, \ADMM\ \cite{zhou2023federated}, and {\tt FedADMM-InSa} \cite{song2025fedadmm}, have demonstrated improved robustness to client drift compared with \FA.

However, to accommodate system heterogeneity (e.g., varying computational capabilities across clients), most existing ADMM-based FL methods rely on a fully synchronous round-based update scheme, which enforces strict coordination across clients at every CR. While such strategies ensure stability, they significantly increase the number of required CRs, resulting in substantial communication overhead and slow practical convergence, especially under limited communication budgets. This limitation becomes more pronounced in large-scale heterogeneous FL systems, where synchronization delays can severely degrade overall efficiency. 

To overcome the above limitations, we introduce a multi-step inertial technique into the global updates of {\tt FedADMM}, inspired by classical acceleration schemes \cite{anderson1965iterative, ortega1970iterative,nesterov1983method} and recent developments in inertial algorithms \cite{dong2019mikm, liang2016multi, zhang2020multi}. 
Specifically, at each CR, after aggregating local updates, the server performs a multi-step extrapolation on the aggregated global model. The extrapolation direction is constructed using multiple past global iterates, enabling the method to exploit historical information for accelerating convergence.
Importantly, the mechanism is implemented entirely on the server side, while keeping client-side update rules unchanged, incurring no additional communication cost per round, and posing no extra privacy risks, which makes it well-suited for practical heterogeneous FL scenarios.

It is worth distinguishing our approach from momentum-based methods (e.g., {\tt FedNAG} \cite{9891808}). Momentum methods operate on gradients by accumulating past gradients and performing a single-step extrapolation. 
In contrast, our multi-step inertial strategy constructs extrapolation directions by leveraging differences between successive past iterates, leading to a multi-step extrapolation mechanism that more effectively exploits historical information.
Moreover, the proposed approach does not require additional gradient tracking or accumulation, making it fundamentally different from momentum-based schemes.

More precisely, our main contributions are summarized as follows.

\begin{enumerate}[leftmargin=17pt, label=\alph*)]	
\item \textbf{Algorithm}. We propose \MA, a novel ADMM-based FL algorithm that incorporates a multi-step inertial acceleration mechanism into the server-side updates. Specifically, after each global aggregation, the server performs a multi-step linear extrapolation based on differences of multiple past aggregated iterates. This fully server-side design introduces no additional communication overhead and requires no modification to client-side computations.

\item \textbf{Convergence Analysis}. Under standard assumptions, including gradient Lipschitz continuity and coercivity, we establish global convergence of the proposed algorithm to a stationary point. Moreover, we derive a sublinear convergence rate of $\mathcal{O}(1/K)$ in terms of the stationarity measure $\min_{s=1,\dots,K}\|\nabla f(\bx^{\tau_{s}})\|^2$.

\item \textbf{Experimental Validation}. Extensive experiments on both synthetic datasets (linear regression) and real-world datasets (logistic regression) demonstrate that \MA\ significantly improves communication efficiency by reducing the number of required CRs, outperforming several state-of-the-art methods, including \ADMM, \FA, \FP, \FAT, and \FS\ \cite{pillutla2022federated}.
\end{enumerate}

The remainder of the paper is organized as follows. Section \ref{sec2} provides necessary preliminaries. Section \ref{sec3} presents the proposed algorithm and its key properties. In Section \ref{sec:convergence}, we present the convergence results, including global convergence and the $\mathcal{O}(1/K)$ sublinear convergence rate of our algorithm. Numerical experiments and conclusions are presented in Sections \ref{sec:num} and \ref{sec:conclusion}, respectively.

\section{Preliminaries}\label{sec2}
In this section, we introduce the notations frequently used in this paper, the optimality conditions for problems \eqref{FL-opt} and \eqref{FL-opt-ver1}, and the assumptions for the convergence proof of the proposed algorithm.
\subsection{Notation}
Scalars, vectors, and matrices are denoted by plain, bold, and capital letters, respectively (e.g., $k$, $\beta$; $\bx$, $\blam$; $Y$, $\Lambda$). Let $\lceil t \rceil$ denote the smallest integer greater than or equal to $t$ (e.g., $\lceil 1.1 \rceil = 2$, $\lceil 2 \rceil = 2$). Define $[N]:=\{1, 2,\cdots, N\}$ and let $\mathbb{N}_+ := \{1, 2, \cdots\}$ be the sequence of positive integers. The $n$-dimensional Euclidean space is denoted by $\R^n$, equipped with the inner product $\langle \cdot, \cdot \rangle$ defined as $\langle \bx, \blam \rangle := \sum_i^n x_i \lambda_i$. The induced 2-norm is denoted by $\|\cdot\|$, satisfying $\|\bx\|^2 = \langle \bx, \bx \rangle$. The gradient of a function $f$ with respect to $\bx$ is written as $\nabla f(\bx)$. For clarity, we aggregate groups of vectors $\by_i,\blam_i\in\R^n$ into the following matrix forms:
$$Y:=(\by_1,\by_2,\cdots ,\by_N),~~ \Lambda :=(\blam_1,\blam_2,\cdots ,\blam_N).$$
The same notation applies to their sequences, e.g., ($Y^k$, $\Lambda^k$) at iteration $k$, optima ($Y^*$, $\Lambda^*$), or accumulation points ($Y^\infty$, $\Lambda^\infty$), as referenced in Corollary \ref{L-global-convergence}. For simplicity, we also define the following objective function:
\begin{eqnarray*}
	\arraycolsep=1.0pt\def\arraystretch{1.5}
	\begin{array}{lll}
		F(Y):=\sum\limits_{i=1}^{N} p_i f_i(\by_i).
	\end{array}
\end{eqnarray*}
Note that $F(\bx,\bx,\cdots ,\bx)=f(\bx)$.
\subsection{Stationary point}
We define the stationarity conditions for problems \eqref{FL-opt-ver1} and \eqref{FL-opt}.
\begin{definition}\label{def-sta}
	A point $(\bx^*, Y^*,\Lambda^*)$ is called a stationary point of   problem  (\ref{FL-opt-ver1}) if it satisfies:
	\begin{eqnarray}\label{opt-con-FL-opt-ver1}
		\arraycolsep=1.0pt\def\arraystretch{1.5}
		\left\{\begin{array}{rcll}
			p_i\nabla  f_i(\by_i^*)+\blam_i^* &=&   0, ~~&i\in[N],  \\
			\by_i^*-\bx^* &=&0,&i\in[N],\\
			\sum\limits_{i=1}^{N} \blam_i^* &=& 0.
		\end{array} \right.
	\end{eqnarray}
\end{definition}
It can be verified that any local optimum of \eqref{FL-opt-ver1} satisfies \eqref{opt-con-FL-opt-ver1}. Furthermore, if each $f_i$ is convex for $i \in [N]$, then a point is a global optimum if and only if it meets condition \eqref{opt-con-FL-opt-ver1}.

Additionally, any stationary point $(\bx^*, Y^*,\Lambda^*)$ of \eqref{FL-opt-ver1} implies:
\begin{eqnarray}\label{grad-x-*=0}
	\arraycolsep=1.0pt\def\arraystretch{1.5}
	\begin{array}{rcll}
		\nabla   f(\bx^*) = \sum\limits_{i=1}^{N}   p_i \nabla   f_i(\bx^*) =  - \sum\limits_{i=1}^{N}  \blam_i^*=0.
	\end{array}
\end{eqnarray}
which shows that $\bx^*$ is also a stationary point of the original problem \eqref{FL-opt}.
\subsection{Assumptions}
We impose the following standard assumptions on the local and global loss functions, which ensure the well-definedness of problem \eqref{FL-opt} and underpin the convergence analysis of the proposed algorithm.
\begin{assumption}\label{ass-fi}
	Each local function $ f_i $, for $ i \in [N] $, is gradient Lipschitz continuous with constant $ L_i > 0 $; that is, for all $ \bx, \mathbf{z} \in \R^n $,
	\begin{align}\label{Lip-r}
		\| \nabla f_i(\bx) - \nabla f_i(\mathbf{z}) \| \leq L_i \| \bx - \mathbf{z} \|.
	\end{align}
\end{assumption}

\begin{assumption}\label{ass-f}
	The global function $ f $ is coercive, i.e.,
	\[
	\lim_{\| \bx \| \to \infty} f(\bx) = +\infty.
	\]
\end{assumption}

\section{Multi-step inertial ADMM for FL}\label{sec3}
This section details the proposed algorithm, including an efficient strategy for solving the resulting subproblems that do not admit closed-form solutions.
\subsection{Algorithmic design}
To solve problem \eqref{FL-opt-ver1} using the ADMM framework, we first define the augmented Lagrangian function:
\begin{eqnarray} \label{Def-L}
	\arraycolsep=0.5pt\def\arraystretch{1.5}
\begin{array}{lll}
\L(\bx,Y,\Lambda)&:=& \sum\limits_{i=1}^N  \L_i(\bx,\by_i ,\blam_i)\\
\L_i(\bx,\by_i ,\blam_i)&:=& p_i f_i(\by_i){+}    \langle \by_i{-}\bx, \blam_i\rangle {+} \frac{\beta_i}{2}\|\by_i{-}\bx\|^2,
\end{array}
\end{eqnarray}
where $\Lambda$ is the Lagrange multiplier, and $\beta _i>0,i\in[N]$. Define $\beta:=\sum_{i=1}^N\beta_i$. The standard ADMM iterations proceed as follows: given an initial  point  $(\bx^0, Y^0, \Lambda^0)$, for each $k \geq 0$,
\begin{eqnarray*}\label{framework-ADMM-equ}
	\arraycolsep=0.5pt\def\arraystretch{1.5}
	\left\{\begin{array}{llll}
		\bx^{k+1} &=&\arg\underset\bx\min\L(\bx,Y^k, \Lambda^k)
		=\frac{1}{\beta}\sum\limits_{i=1}^{N}  ( \beta _i\by^k_i+\blam^k_i), \\
		\by_i^{k+1} &=&\arg\underset{{\by_i}}\min\L_i(\bx^{k+1},\by_i, \blam_i^k),& i\in[N], \\
		\blam^{k+1}_i &=&\blam_i^k + \beta _i(\by_i^{k+1}-\bx^{k+1}),& i\in[N].
	\end{array}\right.
\end{eqnarray*}
In the FL setting, $\bx^{k+1}$ corresponds to the global update performed by the server, while $(\by_i^{k+1}, \blam_i^{k+1})$ are local updates computed by client $i$.
Based on the above framework, \ADMM\ proposed in \cite{zhou2023federated} offers three key advantages:
(i) reduction of communication overhead through multiple local updates,
(ii) alleviation of computational burden via inexact $\by_i^{k+1}$-subproblem solutions, and
(iii) mitigation of the straggler effect through partial clients participation.

The stragglers' effect arises from system heterogeneity, where clients possess diverse computational and communication capabilities. Slower clients delay the collective training process, which forces the entire system to wait for the slowest.

Building upon \ADMM, we further propose \MA, an accelerated ADMM-based FL algorithm that incorporates a multi-step inertial extrapolation mechanism into the server-side global updates, thereby improving convergence efficiency without increasing communication overhead. Figure~\ref{fig:MI} illustrates this technique via a three-step example, and the full implementation is provided in Algorithm~\ref{algorithm-ICEADMM}.

\begin{figure}[ht]
	\centering
	\includegraphics[width=\linewidth]{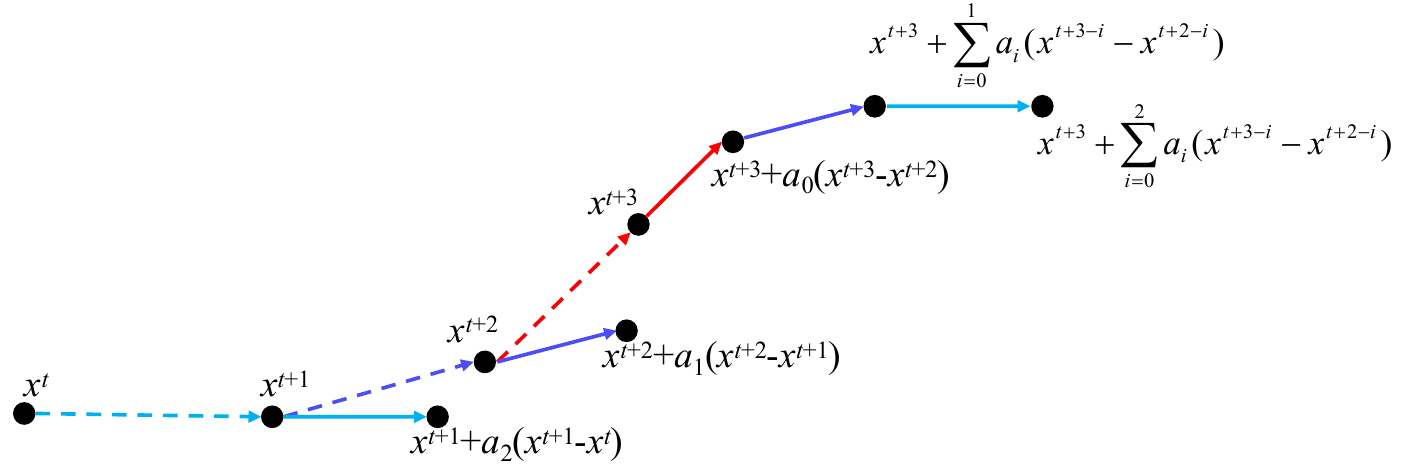}
	\caption{Multi-step inertial technique.}
	\label{fig:MI}
\end{figure}
%%%%%%%%%%%%%%%%%%%%%%%%%%%%%%%%%
\begin{algorithm*}[!th]
\caption{\MA}\label{algorithm-ICEADMM}
\SetAlgoLined
\KwIn{
$T,k_0,s\in\mathbb{N}_+$. $\hat\bx^0$. $\varepsilon_i^0$, $\beta_i$, $\kappa_i\in\mathbb{N}_+$, $\theta_i\in[1/2,1)$, $\forall i\in[N]$. $a_{i,\tk}\in (0, 1/2)$, $\text{for}~i = 0,1,\dots,s-1$, $\text{where}~\tk:=\lceil (k+1)/k_0\rceil~\text{denotes the CR index}$.
}
\KwOut{$\bx^{\tau_T}$.}
\textbf{Initialize}: $\mathcal{N}^0=[N]$. $\hat\bx^{-s}=\ldots=\hat\bx^{-1}=\hat\bx^0$, $\by_i^0=\hat\bx^0$, $\blam_i^0=-p_i\nabla f_i(\by_i^0)$,   $\mathbf{z}_i^0=\beta_i\by_i^0+\blam_i^0$, $\forall i\in[N]$.\\
\For{$k=0,1,2,\cdots,T-1$}{
\textbf{Global Updates:}\\
\If{$k\in\mathcal{K}:=\{0,k_0,2k_0,3k_0,\cdots\}$}{
Clients in $\mathcal{N}^{\tau_k}$ send their parameters $\{\mathbf{z}^k_i:i\in \mathcal{N}^{\tau_k}\}$  to the server,\
\vspace{-0.5em}
\begin{align}
\hat\bx^{\tk}&=\textstyle\frac{1}{\beta}\sum\limits_{i=1}^N  \mathbf{z}^k_i, \label{iceadmm-sub1}\\
\bx^{\tk}&=\textstyle\hat\bx^{\tk}+\sum\limits_{i=0}^{s-1} a_{i, \tk}\left(\hat\bx^{\tk-i}-\hat\bx^{\tk-i-1}\right),\label{accelerated}
\end{align}
\vspace{-1em}
			
The  server randomly selects clients in $[N]$ to form a subset $\tauk$ and broadcasts them parameter $\bx^{\tk}$.
		}
		
\textbf{Local Updates:}\\
		\For{every client $i\in \tauk$}{
			\vspace{-1em}
			\begin{align}
				\varepsilon_i^{k+1}&=\theta_{i}\varepsilon_i^k,\label{iceadmm-sub20}\\
				\by^{k+1}_i&=GD(p_i, L_i, \beta_i, \bx^{\tk}, \blam_i^k, \varepsilon_i^{k+1}, \kappa_i),\qquad\text{\# Algorithm \ref{algorithm2}}\label{iceadmm-sub2}\\
				\blam^{k+1}_i&=\blam_i^k+\beta_i(\by_i^{k+1}-\bx^{\tk}),\label{iceadmm-sub3}\\
				\mathbf{z}^{k+1}_i&=\beta _i\by_i^{k+1}+\blam_i^{k+1}.\label{iceadmm-sub4}
			\end{align}
			\vspace{-2em}
		}
		
		\For{every client $i\notin \tauk$}{
			\vspace{-1em}
			\begin{align}\label{iceadmm-sub5}
				(\varepsilon_i^{k+1}, \by^{k+1}_i,\blam^{k+1}_i,\mathbf{z}^{k+1}_i) =(\varepsilon_i^k, \by^k_i,\blam^k_i,\mathbf{z}^k_i).
			\end{align}
			\vspace{-2em}
		}

	}
\end{algorithm*}
\begin{remark}
For the case of $s=0$, where the multi-step inertial technique is effectively disabled (as there are no parameters to apply), \MA\ reduces to \ADMM.
\end{remark}
\begin{remark}
The multi-step inertial update \eqref{accelerated} introduces a modest computational overhead on the server side. Compared to \ADMM, it requires storing the past \(s\) aggregated iterates and performing \(O(s \cdot n)\) additional floating-point operations (FLOPs) per CR. This overhead is negligible for typical model sizes (e.g., \(n \leq 10^5\)) and is justified by the substantial gain in communication efficiency, i.e., reaching the target accuracy with significantly fewer CRs. This trade-off is particularly favorable in IoT edge networks, where communication is often the bottleneck.
\end{remark}

\subsection{Efficient inexact subproblem solving}
This part details the procedure for solving the local subproblem $\min_{\by_i} \L_i(\bx^{\tk},\by_i, \blam^k)$.  Notably, to ensure overall convergence while maintaining computational efficiency, an exact solution is not required. Instead, we adopt the following inexact surrogate:
\begin{eqnarray}\label{GD}
\arraycolsep=0.5pt\def\arraystretch{1.5}
\begin{array}{rcll}
\bw_i^{\ell+1}&=& \arg\min\limits_{\by_i}~
		\langle \by_i-\bx^{\tk}, \blam_i^k\rangle + \frac{\beta_i}{2}\|\by_i-\bx^{\tk}\|^2
	+p_i (f_i(\bw _i^{\ell}) + \langle \nabla f_i(\bw _i^{\ell}), \by_i- \bw _i^{\ell}  \rangle  + \frac{L_i}{2}\|\by_i-\bw _i^{\ell}\|^2) \\
		&=&\frac{1}{p_iL_i   +\beta_i}(p_iL_i \bw_i^{\ell} +\beta_i \bx^{\tk} - ( p_i\nabla f_i(\bw_i^{\ell}) +  \blam_i^k)),
\end{array}
\end{eqnarray}
where $\ell$ denotes the inner iteration index. This update corresponds to a gradient descent with step size $1/(p_i L_i + \beta_i)$ applied to the subproblem. The complete procedure is summarized in Algorithm~\ref{algorithm2}, which corresponds to \eqref{iceadmm-sub2} in Algorithm \ref{algorithm-ICEADMM}.

\begin{algorithm}[htbp]
	\caption{Gradient descent (GD) for the $\by_i^{k+1}$ subproblem \eqref{iceadmm-sub2}}\label{algorithm2}
	\SetAlgoLined
	\KwIn{$p_i, L_i, \beta_i, \bx^{\tk}, \blam_i^k, \varepsilon_i^{k+1}, \kappa_i$.}
	\KwOut{$\by_i^{k+1}$.}
	\textbf{Initialize}: $\bw_i^0=\bx^{\tk}$.\\
	\For{$\ell=0,1,2,\cdots,\kappa_i$}{
		
		compute $\bw_i^{\ell+1}$ via \eqref{GD}
		
		\If{$\left\|p_i\nabla f_i(\bw_i^{\ell+1})+\blam_i^k+\beta_i(\bw_i^{\ell+1}-\bx^{\tk})\right\|^2\leq\varepsilon_i^{k+1}$}{
			$\by_i^{k+1}=\bw_i^{\ell+1}$\;
			\textbf{break}\;
		}
	}
\end{algorithm}

In Algorithm~\ref{algorithm2}, $\kappa_i$ represents the maximum number of iterations for updating the sequence $\{\bw_i^{\ell}\}$. The other termination condition relies on evaluating the gradient of the augmented Lagrangian $\mathcal{L}_i$ with respect to $\bw_i$, evaluated at $\bw_i^{k+1}$, i.e.,
\begin{eqnarray*}
	\arraycolsep=1pt\def\arraystretch{1.5}
	\begin{array}{lll}
		\nabla_{\bw_i} \mathcal{L}_i ( \bx^{\tk}, \bw_i^{k+1}, \blam^k ) 
		= \! p_i\nabla f_i(\bw_i^{k+1}) + \blam_i^k + \beta_i (\bw_i^{k+1} - \bx^{\tk}),
	\end{array}
\end{eqnarray*}
which is checked against the tolerance $\varepsilon_i^{k+1}$. By definition of $\bw_i^*$,  we have $\nabla_{\bw_i} \mathcal{L}_i(\bx^{\tk}, \bw_i^*, \blam^k)= 0$.

The following theorem establishes that Algorithm~\ref{algorithm2} computes an approximate solution $\by_i^{k+1}$ satisfying a predefined accuracy within at most $\kappa_i$ iterations.
\begin{theorem}\label{finite-stop}
	Suppose Assumption \ref{ass-fi} holds, $ \nabla^2 f_i \succeq -\sigma_i I$ with $\sigma_i\geq0$, and $\beta_i\geq p_i \sigma_i+c  p_iL_i/2 $ with $c>1$. Then  client $i\in[N]$ can compute $\by_i^{k+1}$ using Algorithm \ref{algorithm2} in at most $\kappa_i$ iterations, where
	\begin{eqnarray*}
		\arraycolsep=1.0pt\def\arraystretch{1.5}
		\begin{array}{rcll}
			\kappa_i = \log_{c}\left \lceil \frac{2(p_i^2L_i^2+\beta_i^2)\|\bx^{\tk}-\bw_i^*\|^2}{\varepsilon_i^{k+1}}  \right \rceil -1.
		\end{array}
	\end{eqnarray*}
\end{theorem}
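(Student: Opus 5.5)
We will view Algorithm~\ref{algorithm2} as plain gradient descent on the strongly convex, smooth function
\[
\phi_i(\bw):=p_i f_i(\bw)+\langle \bw-\bx^{\tk},\blam_i^k\rangle+\tfrac{\beta_i}{2}\|\bw-\bx^{\tk}\|^2 .
\]
Since $\nabla^2 f_i\succeq-\sigma_i I$, we have $\nabla^2\phi_i\succeq(\beta_i-p_i\sigma_i)I\succeq \tfrac{c\,p_iL_i}{2}I$. By Assumption~\ref{ass-fi}, $\nabla\phi_i$ is $(p_iL_i+\beta_i)$-Lipschitz. So $\phi_i$ has a unique minimizer $\bw_i^*$, with $\nabla\phi_i(\bw_i^*)=0$. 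By \eqref{GD},
\[
\bw_i^{\ell+1}=\bw_i^\ell-\tfrac{1}{p_iL_i+\beta_i}\nabla\phi_i(\bw_i^\ell).
\]

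The plan has three steps, carried out in order.

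\emph{Step 1: contraction of the iterates.} Write $\bw_i^{\ell+1}-\bw_i^*=\big(I-\tfrac{1}{p_iL_i+\beta_i}H_\ell\big)(\bw_i^\ell-\bw_i^*)$, where $H_\ell$ is the averaged Hessian of $\phi_i$ along the segment, so $(\beta_i-p_i\sigma_i)I\preceq H_\ell\preceq(\beta_i+p_iL_i)I$. This gives
\[
\|\bw_i^{\ell+1}-\bw_i^*\|^2\le \rho^2\|\bw_i^\ell-\bw_i^*\|^2,\qquad \rho=\frac{p_iL_i+p_i\sigma_i}{p_iL_i+\beta_i}.
\]
The real target is $\rho^2\le 1/c$. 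Check whether $\beta_i\ge p_i\sigma_i+cp_iL_i/2$ yields it directly. If it does not, fall back on an energy argument: the descent lemma plus strong convexity $\mu=\beta_i-p_i\sigma_i$ give
\[
\phi_i(\bw^{\ell+1})-\phi_i^*\le\big(1-\tfrac{\mu}{p_iL_i+\beta_i}\big)\big(\phi_i(\bw^\ell)-\phi_i^*\big),
\]
and the hypothesis should make this factor at most $1/c$. In either form we obtain
\[
\|\bw_i^{\ell}-\bw_i^*\|^2\le c^{-\ell}\|\bw_i^0-\bw_i^*\|^2,
\]
with $\bw_i^0=\bx^{\tk}$, possibly after absorbing a constant into the prefactor.

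\emph{Step 2: from iterates to the stopping test.} Use $\nabla\phi_i(\bw_i^*)=0$ and split the gradient as
\[
\nabla\phi_i(\bw^{\ell+1})=p_i\big(\nabla f_i(\bw^{\ell+1})-\nabla f_i(\bw_i^*)\big)+\beta_i(\bw^{\ell+1}-\bw_i^*).
\]
Then $\|a+b\|^2\le2\|a\|^2+2\|b\|^2$ and Assumption~\ref{ass-fi} give
\[
\|\nabla\phi_i(\bw^{\ell+1})\|^2\le 2(p_i^2L_i^2+\beta_i^2)\|\bw^{\ell+1}-\bw_i^*\|^2 .
\]
This is exactly where the constant $2(p_i^2L_i^2+\beta_i^2)$ in $\kappa_i$ comes from.

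\emph{Step 3: counting iterations.} Combining Steps 1 and 2,
\[
\|\nabla\phi_i(\bw^{\ell+1})\|^2\le 2(p_i^2L_i^2+\beta_i^2)\,c^{-(\ell+1)}\|\bx^{\tk}-\bw_i^*\|^2 .
\]
The right-hand side is at most $\varepsilon_i^{k+1}$ as soon as
\[
\ell+1\ge\log_c\Big\lceil \tfrac{2(p_i^2L_i^2+\beta_i^2)\|\bx^{\tk}-\bw_i^*\|^2}{\varepsilon_i^{k+1}}\Big\rceil ,
\]
that is, once $\ell\ge\kappa_i$. So the test in Algorithm~\ref{algorithm2} is triggered within the loop $\ell=0,\dots,\kappa_i$. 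The ceiling only enlarges the bound, so it causes no harm.

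The main obstacle is Step 1: obtaining a per-step contraction factor of exactly $1/c$ from the stated condition on $\beta_i$. The step size $1/(p_iL_i+\beta_i)$ is conservative, so the natural ratio $\mu/(p_iL_i+\beta_i)$ must be compared carefully against $1-1/c$. The factor $c/2$ in the hypothesis suggests the intended argument uses a sharper inequality than the crude operator-norm bound, such as co-coercivity or the descent lemma with curvature $p_iL_i$ plus an exact quadratic part $\beta_i$. I would try the function-value route first, and convert back to $\|\bw-\bw_i^*\|^2$ via $\mu$-strong convexity, so that only a harmless constant enters.
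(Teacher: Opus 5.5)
Your three-step skeleton is the right one, and Steps 2 and 3 are fine. The gap is Step 1. It is the only place where the hypothesis $\beta_i\ge p_i\sigma_i+cp_iL_i/2$ is used, you leave it unresolved, and neither route you propose closes it as stated.

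\emph{Your direct route.} With your factor $\rho=p_i(L_i+\sigma_i)/(p_iL_i+\beta_i)$, the inequality $\rho^2\le 1/c$ does not follow from the hypothesis. Take $c=4$, $\sigma_i=3L_i$ (admissible, since $\nabla^2 f_i\succeq -L_iI\succeq -3L_iI$) and $\beta_i=p_i\sigma_i+2p_iL_i$. Then $\rho^2=4/9>1/4$.

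\emph{Your function-value fallback.} This is weaker still. Its per-step factor is $1-\mu/(p_iL_i+\beta_i)=\rho$ itself, not $\rho^2$. Already for $\sigma_i=0$, $\beta_i=cp_iL_i/2$ this equals $2/(2+c)$, which exceeds $1/c$ whenever $c>2$. Moreover, returning to $\|\bw-\bw_i^*\|^2$ via strong convexity puts the factor $(p_iL_i+\beta_i)/(\beta_i-p_i\sigma_i)$ into $\kappa_i$. So it cannot simply be ``absorbed''.

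The missing piece is a sharper one-step estimate followed by an AM--GM step. Set $\mathbf{e}:=\bw_i^{\ell}-\bw_i^*$ and $\mathbf{h}:=\nabla f_i(\bw_i^{\ell})-\nabla f_i(\bw_i^*)$. From \eqref{GD} and $\nabla\phi_i(\bw_i^*)=0$,
\[
(p_iL_i+\beta_i)(\bw_i^{\ell+1}-\bw_i^*)=p_i(L_i\mathbf{e}-\mathbf{h}).
\]
Expand the square using $\|\mathbf{h}\|\le L_i\|\mathbf{e}\|$ and $\langle\mathbf{e},\mathbf{h}\rangle\ge-\sigma_i\|\mathbf{e}\|^2$:
\[
\|L_i\mathbf{e}-\mathbf{h}\|^2=L_i^2\|\mathbf{e}\|^2-2L_i\langle\mathbf{e},\mathbf{h}\rangle+\|\mathbf{h}\|^2\le 2L_i(L_i+\sigma_i)\|\mathbf{e}\|^2 .
\]
Since $p_iL_i+\beta_i\ge p_i(L_i+\sigma_i)+cp_iL_i/2$, AM--GM gives
\[
(p_iL_i+\beta_i)^2\ge 4\cdot p_i(L_i+\sigma_i)\cdot cp_iL_i/2=2c\,p_i^2L_i(L_i+\sigma_i).
\]
Hence $\|\bw_i^{\ell+1}-\bw_i^*\|^2\le c^{-1}\|\bw_i^{\ell}-\bw_i^*\|^2$ exactly, with no extra constant. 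This is where the $c/2$ in the hypothesis comes from.

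In your Hessian language, the fix is to also use $\nabla^2 f_i\succeq -L_iI$, which Assumption~\ref{ass-fi} provides. Let $\bar H_\ell$ be the averaged Hessian of $f_i$ along the segment. The eigenvalues of $L_iI-\bar H_\ell$ then lie in $[0,\min\{2L_i,L_i+\sigma_i\}]$, and $\min\{a,b\}^2\le ab$ gives the same $2L_i(L_i+\sigma_i)$ bound. With this contraction in place, your Steps 2 and 3 deliver the stated $\kappa_i$.
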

\begin{proof}
	The proof follows directly from the argument in \cite[Theorem 3.1]{zhou2023federated}.
\end{proof}
\begin{remark}
	$\nabla^2 f_i \succeq -\sigma_i I$  means $\nabla^2 f_i + \sigma_i I\succeq 0$, which is a positive semi-definite matrix. This standard assumption implies that the minimum eigenvalue of $\nabla^2 f_i$ is bounded below by $-\sigma_i$.  It holds for all convex functions (with $\sigma_i = 0$) and numerous non-convex functions.
\end{remark}
\section{Convergence analysis}\label{sec:convergence}
We establish the global convergence and convergence rate of \MA\ for the case of $ s=1 $. In this setting, the inertial parameter set reduces to a single element $ a_{0, \tk} $, which we denote simply as $ a_{\tk} $ for notational simplicity.
In addition to Assumptions \ref{ass-fi} and \ref{ass-f}, we introduce the following client selection scheme.
\begin{scheme}\label{ass-omega}
There exists a constant $s_0 \in \mathbb{N}_+$ such that each client is selected at least once within any consecutive $s_0$ CRs, i.e.,
for any $k \ge 0$ and any client $i \in [N]$, there exists an index 
$t \in \{k+1, \dots, k+s_0\}$ such that $i \in \mathcal{N}^t$.
\end{scheme}
\begin{remark}\label{rem:omega}
Scheme~\ref{ass-omega} can be approximately satisfied under standard random sampling strategies. 
In particular, suppose that at each round, a subset $\mathcal{N}^t$ is selected uniformly at random with a fixed participation rate $\rho = |\mathcal{N}^t|/N$, 
and the selections are independent across rounds. 
Then, for any client $i \in [N]$, the probability that it is selected at least once within $s_0$ consecutive rounds is given by
\[
1 - (1 - \rho)^{s_0},
\]
which approaches $1$ as $s_0$ increases. 
Therefore, Scheme~\ref{ass-omega} can be viewed as a mild assumption that holds with high probability in practical implementations.
\end{remark}
For notational simplicity, we denote
\begin{align*}
    \triangle\by_i^{k+1}       &:= \by_i^{k+1} - \by_i^k, &
    \triangle\oby_i^{k+1}      &:= \by^{k+1}_i - \bx^{\tk}, &
    \triangle\blam_i^{k+1}     &:= \blam_i^{k+1} - \blam_i^k, &
    \triangle\varepsilon_i^{k+1}&:= \varepsilon_i^{k+1} - \varepsilon_i^k, \\
    \triangle\bx^{\tk}         &:= \bx^{\tk} - \bx^{\tau_k}, &
    \triangle\hat\bx^{\tk}     &:= \hat\bx^{\tk} - \hat\bx^{\tau_k}, &
    \triangle\obx^{\tk}        &:= \bx^{\tk} - \hat\bx^{\tk},
\end{align*}
and let   $\bx^k \to \bx$ denote $\lim\limits_{k\to\infty} \bx^k = \bx$. Additionally, we write
\begin{eqnarray*}
	\begin{array}{rcl}
		\sum_i:=\sum\limits_{i=1}^N,\quad
		\bg_i^k:=p_i\nabla f_i(\by_i^k).
	\end{array}
\end{eqnarray*}
\subsection{Global convergence}
To establish the convergence of \MA, we first introduce the following pivotal lemma.
\begin{lemma}\label{basic-observations}
The following properties hold for the sequences generated by \MA:
	\begin{enumerate}[leftmargin=17pt, label=\alph*)]
		\item For $k\in\mathcal{K}$,
		\begin{eqnarray}\label{opt-con-xk1-1}
			\arraycolsep=1.0pt\def\arraystretch{1.5}
			\begin{array}{lcl}
				\sum_i  ( \beta_i(\by_i^k-\hat\bx^{\tk})+\blam_i^k)=0.
			\end{array}
		\end{eqnarray}
		
		\item For  any  $k\geq 0$ and $i\in[N]$,
		\begin{align}\label{opt-con-xk1-2}
			{\|\Phi_i^{k+1}\|^2\leq \varepsilon_i^{k+1},~~\text{where}~~ \Phi_i^{k+1}:=\bg_i^{k+1}  +  \blam_i^{k+1}.}
		\end{align}
		
		\item Under Assumption \ref{ass-fi}, for  any  $k\geq 0$ and $i\in[N]$,
		\begin{eqnarray}\label{opt-con-xk1-3}
			\arraycolsep=1.0pt\def\arraystretch{1.5}
			\begin{array}{lcl}
				\|\triangle \blam_i^{k+1} \|^2  \leq   \frac{6p_i^2L_i^2}{5} \|\triangle\by_i^{k+1} \|^2 -\frac{24}{1-\theta_{i}}\triangle\varepsilon_i^{k+1}.
			\end{array}
		\end{eqnarray}
		
		\item Under Scheme \ref{ass-omega}, for $i\in[N]$,
		\begin{align}\label{opt-con-xk1-4}
			\varepsilon_i^{k+1}\to 0.
		\end{align}
	\end{enumerate}
\end{lemma}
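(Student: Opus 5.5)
The plan is to prove the four parts separately. Each follows from the update rules of Algorithm~\ref{algorithm-ICEADMM}, the stopping rule of Algorithm~\ref{algorithm2}, and the initialization. Throughout I use the fact that every client, selected or not, keeps a well-defined $\mathbf{z}_i^k=\beta_i\by_i^k+\blam_i^k$: for selected clients this is \eqref{iceadmm-sub4}, and for unselected clients the copy rule \eqref{iceadmm-sub5} preserves it.

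For a), take $k\in\mathcal{K}$. Then \eqref{iceadmm-sub1} gives $\beta\hat\bx^{\tk}=\sum_i(\beta_i\by_i^k+\blam_i^k)$. Since $\beta=\sum_i\beta_i$, moving $\sum_i\beta_i\hat\bx^{\tk}$ to the right-hand side yields \eqref{opt-con-xk1-1} immediately.

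For b), I argue by induction on $k$.
\begin{itemize}
\item The base case uses the initialization $\blam_i^0=-p_i\nabla f_i(\by_i^0)$, so $\Phi_i^0=0$.
\item If $i\in\tauk$, Theorem~\ref{finite-stop} guarantees that Algorithm~\ref{algorithm2} terminates through its tolerance test within $\kappa_i$ steps. Hence $\|p_i\nabla f_i(\by_i^{k+1})+\blam_i^k+\beta_i(\by_i^{k+1}-\bx^{\tk})\|^2\le\varepsilon_i^{k+1}$. By \eqref{iceadmm-sub3}, the vector inside the norm is exactly $\bg_i^{k+1}+\blam_i^{k+1}=\Phi_i^{k+1}$.
\item If $i\notin\tauk$, then \eqref{iceadmm-sub5} gives $\Phi_i^{k+1}=\Phi_i^k$ and $\varepsilon_i^{k+1}=\varepsilon_i^k$, so the bound is inherited from step $k$.
\end{itemize}

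For c), if $i\notin\tauk$ both sides vanish, so there is nothing to prove. If $i\in\tauk$, write $\triangle\blam_i^{k+1}=(\Phi_i^{k+1}-\Phi_i^k)-(\bg_i^{k+1}-\bg_i^k)$. Apply $\|a+b\|^2\le(1+t)\|a\|^2+(1+1/t)\|b\|^2$ with $t=1/5$, putting the gradient difference in the $(1+t)$ slot. Assumption~\ref{ass-fi} then gives the term $\frac{6}{5}p_i^2L_i^2\|\triangle\by_i^{k+1}\|^2$. The remaining term is bounded by $6\cdot2(\|\Phi_i^{k+1}\|^2+\|\Phi_i^k\|^2)\le12(\varepsilon_i^{k+1}+\varepsilon_i^k)$, using b). Since $\varepsilon_i^{k+1}=\theta_i\varepsilon_i^k$, this equals $12(1+\theta_i)\varepsilon_i^k\le24\varepsilon_i^k$. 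Finally, $-\triangle\varepsilon_i^{k+1}=(1-\theta_i)\varepsilon_i^k$, so $24\varepsilon_i^k=-\frac{24}{1-\theta_i}\triangle\varepsilon_i^{k+1}$, which gives \eqref{opt-con-xk1-3}.

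For d), the sequence $\varepsilon_i^k$ is nonincreasing and is multiplied by $\theta_i<1$ at every iteration in which $i$ is selected. A client $i\in\mathcal{N}^{t}$ for a CR index $t\ge1$ is selected at every iteration $k$ with $\tk=t$; there are $k_0$ such iterations, so in particular at least one. Scheme~\ref{ass-omega} guarantees such a $t$ in every window of $s_0$ consecutive CRs. Hence after $m s_0 k_0$ iterations client $i$ has been selected at least $m$ times, so $\varepsilon_i^k\le\theta_i^{\lfloor k/(s_0k_0)\rfloor}\varepsilon_i^0\to0$.

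The only delicate point is the bookkeeping between iteration index $k$ and CR index $\tk=\lceil (k+1)/k_0\rceil$ in d). A secondary point is that in b) one must invoke Theorem~\ref{finite-stop} so that the $\kappa_i$-iteration cap never forces an exit without the tolerance test being satisfied.
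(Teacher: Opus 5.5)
Your proposal is correct and follows the paper's proof essentially step for step. Part a) uses the invariant $\bz_i^k=\beta_i\by_i^k+\blam_i^k$ together with \eqref{iceadmm-sub1}. Part b) uses the tolerance test plus the copy rule \eqref{iceadmm-sub5}; your induction from $\Phi_i^0=0$ replaces the paper's last-selection index $k_i$ and makes the never-selected case explicit. Part c) uses the same $t=1/5$ splitting. Part d) uses the same selection count, and your bound $\varepsilon_i^k\le\theta_i^{\lfloor k/(s_0k_0)\rfloor}\varepsilon_i^0$ is a cleaner version of the paper's sketch.

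One small correction: the appeal to Theorem~\ref{finite-stop} in b) is neither needed nor available here. That theorem requires $\nabla^2 f_i\succeq-\sigma_i I$ and $\beta_i\ge p_i\sigma_i+c\,p_iL_i/2$, and its iteration bound depends on $k$, so it cannot certify a fixed input $\kappa_i$. As in the paper, $\|\Phi_i^{k+1}\|^2\le\varepsilon_i^{k+1}$ holds simply because Algorithm~\ref{algorithm2} assigns $\by_i^{k+1}$ only when its tolerance test passes.
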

\begin{proof}
See Appendix \ref{lemma1} for detailed proof.
\end{proof}
From Lemma \ref{basic-observations}, we derive the following key decreasing property of the sequence $\{\widetilde\L^k\}$, which underlies our subsequent convergence analysis.
\begin{lemma}\label{lemma-decreasing-1}
Suppose Assumption~\ref{ass-fi} holds and each client $i \in [N]$ selects $\beta_i \geq 3p_iL_i$. 
Define the Lyapunov function
\begin{eqnarray}\label{def-c}
	\arraycolsep=0.5pt\def\arraystretch{1.5}
	\begin{array}{lll}
		\widetilde\L^k &:=& \L^k + \sum_i \frac{29 \varepsilon_i^k}{(1-\theta_i)\beta_i}
		+ \frac{\beta}{2}(a_{\tau_k} - a_{\tau_k}^2)\|\hat\bx^{\tau_k} - \hat\bx^{\tau_k-1}\|^2,\\
		\L^k &:=& \L(\bx^{\tau_k}, Y^k, \Lambda^k).
	\end{array}
\end{eqnarray}
Then the sequence $\{\widetilde{\mathcal L}^k\}$ satisfies
\begin{eqnarray*}
	\arraycolsep=1.0pt\def\arraystretch{1.5}
	\begin{array}{rcl}
		\widetilde\L^k - \widetilde\L^{k+1}
		&\geq&
		\sum_i \frac{\beta_i}{10} \|\by_i^{k+1} - \by_i^k\|^2
		+ \frac{\beta}{2}(1 - a_{\tk} - a_{\tau_k}) \|\triangle \hat\bx^{\tk}\|^2.
	\end{array}
\end{eqnarray*}
\end{lemma}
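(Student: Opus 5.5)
Write $\L^k-\L^{k+1}$ as the sum of three differences, along the order in which the variables are updated:
\[
\L^k-\L^{k+1}=\big[\L(\bx^{\tau_k},Y^k,\Lambda^k)-\L(\bx^{\tk},Y^k,\Lambda^k)\big]+\big[\L(\bx^{\tk},Y^k,\Lambda^k)-\L(\bx^{\tk},Y^{k+1},\Lambda^k)\big]+\big[\L(\bx^{\tk},Y^{k+1},\Lambda^k)-\L(\bx^{\tk},Y^{k+1},\Lambda^{k+1})\big].
\]
When $k\notin\mathcal K$ we have $\tk=\tau_k$, so the $\bx$-difference and the inertial term in $\widetilde\L$ are unchanged. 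Each term is bounded separately, and the error terms are then absorbed using Lemma~\ref{basic-observations}.

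\textbf{Dual and primal $\by$-steps (clients).} For $i\in\tauk$, update \eqref{iceadmm-sub3} makes the dual difference equal to $-\frac{1}{\beta_i}\|\triangle\blam_i^{k+1}\|^2$. By Lemma~\ref{basic-observations}\,c) this is at least
\[
-\tfrac{6p_i^2L_i^2}{5\beta_i}\|\triangle\by_i^{k+1}\|^2+\tfrac{24}{(1-\theta_i)\beta_i}\triangle\varepsilon_i^{k+1}.
\]
For the $\by$-difference, use the descent lemma for $f_i$ from Assumption~\ref{ass-fi}, write $\bg_i^{k+1}+\blam_i^k+\beta_i\triangle\oby_i^{k+1}=\Phi_i^{k+1}+(\blam_i^k-\blam_i^{k+1})+\beta_i\triangle\oby_i^{k+1}$, and use that $\L_i(\bx^{\tk},\cdot,\blam_i^k)$ is $(\beta_i-p_iL_i)$-strongly convex. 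This gives a gain of roughly $\frac{\beta_i-p_iL_i}{2}\|\triangle\by_i^{k+1}\|^2$ minus $\langle\Phi_i^{k+1},\triangle\by_i^{k+1}\rangle$. Young's inequality together with $\|\Phi_i^{k+1}\|^2\le\varepsilon_i^{k+1}$ (Lemma~\ref{basic-observations}\,b)) controls the inner product, and the resulting term $\frac{c}{\beta_i}\varepsilon_i^{k+1}$ is handled through $\varepsilon_i^{k+1}\le\frac{1}{1-\theta_i}(\varepsilon_i^k-\varepsilon_i^{k+1})$, since $\varepsilon_i^{k+1}=\theta_i\varepsilon_i^k$. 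With $\beta_i\ge3p_iL_i$, the quadratic coefficients combine to at least $\beta_i/10$. The $\varepsilon$-coefficients sum to at most $29/((1-\theta_i)\beta_i)$, which is exactly why the Lyapunov weight is $29$. Clients not in $\tauk$ contribute zero.

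\textbf{Server step ($k\in\mathcal K$).} $\L(\cdot,Y^k,\Lambda^k)$ is a quadratic in $\bx$ with Hessian $\beta I$, and by \eqref{opt-con-xk1-1} its minimizer is $\hat\bx^{\tk}$. Hence
\[
\L(\bx^{\tau_k},\cdot)-\L(\bx^{\tk},\cdot)=\tfrac\beta2\big(\|\bx^{\tau_k}-\hat\bx^{\tk}\|^2-\|\bx^{\tk}-\hat\bx^{\tk}\|^2\big).
\]
For $s=1$, $\bx^{\tau_k}-\hat\bx^{\tk}=-\triangle\hat\bx^{\tk}+a_{\tau_k}\mathbf d$ with $\mathbf d=\hat\bx^{\tau_k}-\hat\bx^{\tau_k-1}$, and $\bx^{\tk}-\hat\bx^{\tk}=a_{\tk}\triangle\hat\bx^{\tk}$. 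Expanding and applying $-2a\langle\mathbf u,\mathbf d\rangle\ge -a\|\mathbf u\|^2-a\|\mathbf d\|^2$ gives
\[
\tfrac\beta2\big[(1-a_{\tau_k}-a_{\tk}^2)\|\triangle\hat\bx^{\tk}\|^2-(a_{\tau_k}-a_{\tau_k}^2)\|\mathbf d\|^2\big].
\]
Adding the change of the inertial term, $\frac\beta2(a_{\tau_k}-a_{\tau_k}^2)\|\mathbf d\|^2-\frac\beta2(a_{\tk}-a_{\tk}^2)\|\triangle\hat\bx^{\tk}\|^2$, the $\mathbf d$ terms cancel exactly and the coefficient becomes $1-a_{\tau_k}-a_{\tk}$, as claimed.

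The main obstacle is the $\by$-step bookkeeping. Inexactness couples $\Phi_i^{k+1}$, $\triangle\blam_i^{k+1}$ and $\triangle\by_i^{k+1}$, and the Young's inequality constants must be tuned so that the quadratic coefficient reaches $\beta_i/10$ while the $\varepsilon$-coefficients total exactly $29$. I would fix these constants last, working backward from the target.
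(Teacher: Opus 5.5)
Your proposal is correct and follows essentially the same route as the paper. It uses the same three-block splitting of $\L^{k}-\L^{k+1}$, and the same quadratic identity in $\bx$ with minimizer $\hat\bx^{\tk}$, with Young's inequality on the cross term so that the inertial part of $\widetilde\L$ cancels the $\|\hat\bx^{\tau_k}-\hat\bx^{\tau_k-1}\|^2$ terms. The client-side bookkeeping is also the same: your $(\beta_i-p_iL_i)$-strong convexity of $\L_i(\bx^{\tk},\cdot,\blam_i^k)$ is exactly the paper's combination of \eqref{two-vecs-1} with the descent lemma, giving $\frac{p_iL_i-\beta_i}{2}\|\triangle\by_i^{k+1}\|^2+\langle\triangle\by_i^{k+1},\Phi_i^{k+1}\rangle$. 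With the Young cost set to $\frac{\beta_i}{10}\|\triangle\by_i^{k+1}\|^2$, the quadratic coefficients are $\frac13-\frac1{10}-\frac2{15}=\frac1{10}$ and the $\varepsilon$-weights sum to $5+24=29$ (the paper uses the slightly loose bound $\frac{5}{\beta_i}\|\Phi_i^{k+1}\|^2$), so the constants you planned to fix backward work out as claimed.
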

\begin{proof}
A detailed proof is provided in Appendix \ref{lemma2}.
\end{proof}
Lemma~\ref{lemma-decreasing-1} not only implies the non-increasing nature of $\{\widetilde{\mathcal{L}}^k\}$ but also the convergence to zero of the successive differences $\triangle \bx^{\tk}$, $\triangle\oby_i^{k+1}$, $\triangle \by_i^{k+1}$, $\triangle\blam_i^{k+1}$, $\triangle\hat\bx^{\tk}$, and $\triangle\obx^{\tk}$. Formal statements of these convergence results follow.
\begin{lemma}\label{L-bounded-decreasing-1}
Suppose Assumption \ref{ass-fi}  holds, each client  $i\in[N]$ selects $\beta_i\geq 3p_iL_i$, and the server follows Scheme \ref{ass-omega}. Then the following hold:
	\begin{enumerate}[leftmargin=17pt, label=\alph*)]
		\item The sequence $\{ \widetilde\L^k\}$ is non-increasing.
		
		\item For $k\geq1$, $\widetilde\L^k\geq f(\bx^{\tau_k})\geq f(\bx^*) >-\infty$.
		
		\item The following quantities satisfy:
		\begin{align}\label{limit-5-term-0-inexact}
			(\varepsilon_i^{k+1}, \triangle \bx^{\tk}, \triangle\oby_i^{k+1}, \triangle \by_i^{k+1}, \triangle \blam_i^{k+1}, 
			\triangle \hat\bx^{\tk}, \triangle\obx^{\tk}) \to 0.
		\end{align}
	\end{enumerate}
\end{lemma}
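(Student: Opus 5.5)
Part a) follows at once from Lemma~\ref{lemma-decreasing-1}. With $\beta_i\ge 3p_iL_i$ the first term on the right of the descent inequality is nonnegative. The second term is nonnegative provided $a_{\tk}+a_{\tau_k}<1$, and this holds because every inertial parameter is drawn from $(0,1/2)$. Hence $\widetilde\L^k-\widetilde\L^{k+1}\ge 0$. This strict margin $1-a_{\tk}-a_{\tau_k}>0$ will be used again below, so I will assume the $a$'s stay bounded away from $1/2$, say $a_{\tk}\le \bar a<1/2$.

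For b) I would drop the two nonnegative extra terms of $\widetilde\L^k$: $\varepsilon_i^k\ge0$, and $(a-a^2)\ge0$ for $a\in(0,1/2)$. It then remains to bound $\L^k$ from below. Write $\bg_i^k=p_i\nabla f_i(\by_i^k)$ and use Lemma~\ref{basic-observations}(b) in the form $\blam_i^k=-\bg_i^k+\Phi_i^k$. Combining the descent lemma for $f_i$ at $\bx^{\tau_k}$ with this identity gives
\[
p_if_i(\by_i^k)+\langle \by_i^k-\bx^{\tau_k},\blam_i^k\rangle+\tfrac{\beta_i}{2}\|\by_i^k-\bx^{\tau_k}\|^2
\ge p_if_i(\bx^{\tau_k})+\tfrac{\beta_i-p_iL_i}{2}\|\by_i^k-\bx^{\tau_k}\|^2+\langle \by_i^k-\bx^{\tau_k},\Phi_i^k\rangle .
\]
Because $\beta_i\ge3p_iL_i$, the error term is absorbed by Young's inequality, at a cost of order $\varepsilon_i^k/\beta_i$. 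The $\varepsilon$-term in $\widetilde\L^k$, with its generous constant $29/((1-\theta_i)\beta_i)$, pays for this. Summing over $i$ yields $\widetilde\L^k\ge f(\bx^{\tau_k})$. Coercivity (Assumption~\ref{ass-f}) together with continuity gives a global minimizer $\bx^*$, so $f(\bx^{\tau_k})\ge f(\bx^*)>-\infty$. One subtlety needs checking for $k\ge1$: the initialization guarantees $\Phi_i^0=0$, but for clients that were not selected the pair $(\by_i^k,\blam_i^k)$ is inherited from an earlier round. In that case the bound $\|\Phi_i^k\|^2\le\varepsilon_i^k$ still holds, since $\varepsilon$ is also carried over.

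For c) I would telescope the descent inequality. Parts a) and b) show that $\widetilde\L^k$ is monotone and bounded below, so
\[
\sum_k\Big(\sum_i\tfrac{\beta_i}{10}\|\triangle\by_i^{k+1}\|^2+\tfrac{\beta}{2}(1-2\bar a)\|\triangle\hat\bx^{\tk}\|^2\Big)<\infty .
\]
This gives $\triangle\by_i^{k+1}\to0$ and $\triangle\hat\bx^{\tk}\to0$. The limit $\varepsilon_i^{k+1}\to0$ is Lemma~\ref{basic-observations}(d). The multiplier differences then follow from \eqref{opt-con-xk1-3}. The term $-\triangle\varepsilon_i^{k+1}$ is summable, because $\varepsilon_i^k$ is nonincreasing and nonnegative, and $\|\triangle\blam_i^{k+1}\|^2$ is bounded by a summable quantity, so $\triangle\blam_i^{k+1}\to0$. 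The multiplier update \eqref{iceadmm-sub3} gives $\triangle\oby_i^{k+1}=\triangle\blam_i^{k+1}/\beta_i\to0$ for selected clients.

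The step I expect to be the main obstacle is extending this to unselected clients and deducing the convergence of $\triangle\bx^{\tk}$ and $\triangle\obx^{\tk}$. For $\triangle\obx^{\tk}$ the argument is direct: $\triangle\obx^{\tk}=a_{\tk}(\hat\bx^{\tk}-\hat\bx^{\tau_k})$, which tends to $0$. Then
\[
\triangle\bx^{\tk}=\triangle\hat\bx^{\tk}+\triangle\obx^{\tk}-\triangle\obx^{\tau_k}\to0 .
\]
For an unselected client, $\by_i^{k+1}=\by_i^{k}$, but $\bx^{\tk}$ may have moved since the client's last participation. Here I would use Scheme~\ref{ass-omega}: the last participation was at most $s_0$ CRs, hence at most $s_0k_0$ iterations, earlier. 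So $\by_i^{k+1}-\bx^{\tk}$ equals the value of $\triangle\oby_i$ at that round plus a sum of at most $s_0$ terms $\triangle\bx$, and each of these vanishes.
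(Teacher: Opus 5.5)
Your route is the paper's proof essentially step for step. Part a) comes from Lemma~\ref{lemma-decreasing-1}. Part b) uses the Lipschitz-gradient inequality for $f_i$, the identity $\blam_i^k=-\bg_i^k+\Phi_i^k$ and Young's inequality, with the cost (of order $\varepsilon_i^k/\beta_i$) paid by the $\varepsilon$-term of $\widetilde\L^k$. Part c) telescopes, then uses \eqref{opt-con-xk1-3}, the extrapolation identity, and Scheme~\ref{ass-omega} for unselected clients.

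There are three small points to fix.

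\textbf{Coercivity.} Assumption~\ref{ass-f} is not a hypothesis of this lemma. As in the paper, the lower bound should come from the standing assumption that every loss $\ell_i$ is bounded below. That gives $\inf f>-\infty$, which is what $f(\bx^*)$ stands for here.

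\textbf{The extrapolation identity.} In general $\triangle\obx^{\tk}=a_{\tk}(\hat\bx^{\tk}-\hat\bx^{\tk-1})$, not $a_{\tk}(\hat\bx^{\tk}-\hat\bx^{\tau_k})$. The two agree only for $k\in\mathcal{K}$. For $k\notin\mathcal{K}$ one has $\tau_k=\tk$, so your formula would wrongly give $0$. The limit still holds, because $\hat\bx^{\tk}-\hat\bx^{\tk-1}=\triangle\hat\bx^{\tau_{\ell+1}}$ with $\ell=(\tk-1)k_0\in\mathcal{K}$ and $\ell\ge k-k_0+1$.

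\textbf{The bound $a_{\tk}\le\bar a<1/2$.} This is what the paper tacitly uses when it reads $\triangle\hat\bx^{\tk}\to0$ off the telescoped sum, but you do not need it.
\begin{itemize}
\item Since $a_{\tk}+a_{\tau_k}<1$, telescoping still gives $\triangle\by_i^{k+1}\to0$, and hence $\triangle\blam_i^{k+1}\to0$ by \eqref{opt-con-xk1-3}.
\item For $k\in\mathcal{K}$ we have $\hat\bx^{\tk}=\frac1\beta\sum_i(\beta_i\by_i^k+\blam_i^k)$.
\item So $\triangle\hat\bx^{\tk}$ is a sum of at most $k_0$ increments $\frac1\beta\sum_i(\beta_i\triangle\by_i^{j+1}+\triangle\blam_i^{j+1})$, and each of these vanishes.
\end{itemize}
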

\begin{proof}
Refer to Appendix \ref{lemma3} for a detailed proof.
\end{proof}
Lemma \ref{L-bounded-decreasing-1} ensures that the sequence $\{\widetilde{\mathcal{L}}^k\}$ is non-increasing, bounded below, and that its successive differences vanish. These properties, in particular, guarantee the convergence of the sequences $\{f(\bx^{\tau_k})\}$, $\{F(Y^k)\}$, and $\{\mathcal{L}^k\}$, as formalized in the next theorem.
\begin{theorem}\label{global-obj-convergence-inexact}
Suppose Assumptions \ref{ass-fi} and \ref{ass-f} hold, each client $i\in[N]$ selects $\beta_i\geq 3p_iL_i$, and the server follows Scheme \ref{ass-omega}.  Then:
	\begin{enumerate}[leftmargin=17pt, label=\alph*)]
		\item The sequence $\{(\bx^{\tau_k},Y^k,\Lambda^k)\}$  is bounded.
		
		\item Three  sequences $\{\mathcal{L}^k\}$, $\{ F(Y^k)  \}$, and $\{f (\bx^{\tau_k})\}$ converge to the same value:
		\begin{eqnarray*}%\label{L-local-convergence-limit-inexact}
			\arraycolsep=1.0pt\def\arraystretch{1.0}
			\begin{array}{lll}
				\lim\limits_{k\to \infty} \mathcal{L}^k =  \lim\limits_{k\to \infty} F(Y^k)  = \lim\limits_{k\to \infty} f(\bx^{\tau_k}).
			\end{array}
		\end{eqnarray*}
		
		\item $\nabla F(Y^k)$ and $\nabla f(\bx^{\tau_k})$ eventually vanish:
		\begin{eqnarray*}%\label{L-local-convergence-limit-grad-inexact}
			\arraycolsep=1.0pt\def\arraystretch{1.0}
			\begin{array}{lll}
				\lim\limits_{k \to \infty}\nabla F(Y^k)  =\lim\limits_{k \to\infty} \nabla f(\bx^{\tau_k}) =0.
			\end{array}
		\end{eqnarray*}
	\end{enumerate}
\end{theorem}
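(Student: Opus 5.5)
Throughout, we rely on Lemmas \ref{basic-observations} and \ref{L-bounded-decreasing-1}, combined with coercivity.

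\textbf{Part a).} By Lemma \ref{L-bounded-decreasing-1} a)--b),
\[
f(\bx^{\tau_k})\le \widetilde\L^k\le \widetilde\L^1<\infty \quad\text{for all } k\ge 1 .
\]
Assumption \ref{ass-f} then makes $\{\bx^{\tau_k}\}$ bounded. Since $\triangle\oby_i^{k+1}=\by_i^{k+1}-\bx^{\tk}\to 0$, the sequences $\by_i^{k+1}$ stay within a vanishing distance of $\bx^{\tk}$, so each $\{\by_i^k\}$ is bounded. For the multipliers, Lemma \ref{basic-observations} b) gives
\[
\|\blam_i^{k}+p_i\nabla f_i(\by_i^{k})\|^2\le \varepsilon_i^{k}\le \varepsilon_i^0 .
\]
Lipschitz continuity of $\nabla f_i$ on the bounded set containing $\{\by_i^k\}$ bounds $\nabla f_i(\by_i^k)$, and therefore $\{\blam_i^k\}$ is bounded. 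This completes a).

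\textbf{Part c).} We first expand the gradient:
\[
\nabla f(\bx^{\tau_k})=\sum_i p_i\nabla f_i(\bx^{\tau_k})=\sum_i \bigl(p_i\nabla f_i(\bx^{\tau_k})-\bg_i^k\bigr)+\sum_i \Phi_i^k-\sum_i\blam_i^k .
\]
The first sum is bounded by $\sum_i p_iL_i\|\by_i^k-\bx^{\tau_k}\|$. Writing $\by_i^k-\bx^{\tau_k}=\triangle\oby_i^{k}$ up to the index shift, this tends to $0$ by \eqref{limit-5-term-0-inexact}, using also $\triangle\bx^{\tk}\to0$ and $\triangle\by_i^{k+1}\to 0$ to move between consecutive indices. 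The second sum vanishes by \eqref{opt-con-xk1-2} and \eqref{opt-con-xk1-4}.

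For the third sum we use \eqref{opt-con-xk1-1} at the aggregation steps $k\in\mathcal K$:
\[
\sum_i\blam_i^k=-\sum_i\beta_i(\by_i^k-\hat\bx^{\tk}) .
\]
This is small because $\by_i^k-\bx^{\tau_k}\to0$, $\bx^{\tau_k}-\bx^{\tk}\to0$ and $\bx^{\tk}-\hat\bx^{\tk}=\triangle\obx^{\tk}\to0$. For $k\notin\mathcal K$, the same bound follows from $\triangle\blam_i^{k+1}\to0$ and $\triangle\by_i^{k+1}\to 0$: the last aggregation step lies fewer than $k_0$ steps earlier, so the difference from that step is a sum of at most $k_0$ vanishing increments.

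Hence $\nabla f(\bx^{\tau_k})\to 0$. Finally, each block of $\nabla F(Y^k)$ is $p_i\nabla f_i(\by_i^k)$, which differs from $-\blam_i^k$ only by $\Phi_i^k\to0$. Moreover, $\|\nabla F(Y^k)\|$ is controlled through Lipschitz continuity by $\nabla f_i(\bx^{\tau_k})$ and $\|\by_i^k-\bx^{\tau_k}\|$.

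The main obstacle here is interpreting ``$\nabla F(Y^k)\to 0$''. Taken literally, it would require each block $p_i\nabla f_i(\by_i^k)$ to vanish, which is not true for heterogeneous clients since $\blam_i^*\neq 0$ in general. The plan is therefore to read it as the consensus gradient $\sum_i p_i\nabla f_i(\by_i^k)\to0$, which is exactly the aggregate shown above. I would flag this interpretation explicitly in the proof.

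\textbf{Part b).} Expand
\[
\L^k=F(Y^k)+\sum_i\langle \by_i^k-\bx^{\tau_k},\blam_i^k\rangle+\sum_i\frac{\beta_i}{2}\|\by_i^k-\bx^{\tau_k}\|^2 .
\]
The last two terms tend to $0$, since $\blam_i^k$ is bounded and $\by_i^k-\bx^{\tau_k}\to0$. Next, $|F(Y^k)-f(\bx^{\tau_k})|\to0$, because both are evaluated on a bounded set where the $f_i$ are Lipschitz and $\by_i^k-\bx^{\tau_k}\to0$.

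It remains to show these sequences actually converge. The sequence $\widetilde\L^k$ is non-increasing and bounded below, so it converges. Its extra terms, $\varepsilon_i^k$ and $\|\hat\bx^{\tau_k}-\hat\bx^{\tau_k-1}\|^2$, tend to $0$ by Lemma \ref{L-bounded-decreasing-1} c), with the coefficient $a_{\tau_k}-a_{\tau_k}^2$ bounded since $a\in(0,1/2)$. Hence $\L^k$ converges to the same limit. By the differences just shown, $F(Y^k)$ and $f(\bx^{\tau_k})$ converge to that same value as well.
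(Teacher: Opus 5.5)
Your proposal is correct and follows essentially the same route as the paper. Part a) uses coercivity, $\triangle\oby_i^{k}\to0$ and $\|\Phi_i^k\|^2\le\varepsilon_i^k$. Part b) combines convergence of $\widetilde\L^k$ with $\L^k-F(Y^k)\to0$ and $F(Y^k)-f(\bx^{\tau_k})\to0$. Part c) applies \eqref{opt-con-xk1-1} at aggregation steps to force $\sum_i\blam_i\to0$, then uses $\sum_i\Phi_i\to0$. The differences are minor: you pass from the last aggregation step to a general $k$ by telescoping at most $k_0-1$ vanishing increments $\triangle\blam_i$, whereas the paper bounds $\|\blam_i^{k+1}-\blam_i^{\ell+1}\|$ through $\Phi_i$, Lipschitz continuity and monotonicity of $\varepsilon_i$. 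Your reading of $\nabla F(Y^k)$ as the aggregate $\sum_i\bg_i^k$ is exactly the convention the paper's own proof uses.
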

\begin{proof}
The detailed proof can be found in Appendix \ref{theorem1}.
\end{proof}
While Theorem~\ref{global-obj-convergence-inexact} established the convergence of objective values, we now prove the convergence of the iterates ${(\bx^{\tau_k}, Y^k, \Lambda^k)}$ themselves, which requires slightly stronger conditions.

\begin{theorem}\label{global-convergence-inexact}
Suppose Assumptions \ref{ass-fi} and \ref{ass-f} hold, each client $i\in[N]$ selects $\beta_i\geq 3p_iL_i$, and the server follows Scheme \ref{ass-omega}.  Then:
	\begin{enumerate}[leftmargin=17pt, label=\alph*)]
		\item Any accumulating point $(\bx^{\infty}, Y^{\infty}, \Lambda^{\infty})$ of sequence $\{(\bx^{\tau_k}, Y^k,\Lambda^k)\}$ is a stationary point of  problem (\ref{FL-opt-ver1}), where $\bx^{\infty}$ is a stationary point of problem (\ref{FL-opt}).
		
		\item If further assuming that $\bx^{\infty}$ is isolated, then the whole sequence  converges to $(\bx^{\infty},Y^{\infty},\Lambda^{\infty})$.
	\end{enumerate}
\end{theorem}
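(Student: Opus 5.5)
For part a), I would fix an accumulation point $(\bx^{\infty},Y^{\infty},\Lambda^{\infty})$ of $\{(\bx^{\tau_k},Y^k,\Lambda^k)\}$. Such a point exists because Theorem~\ref{global-obj-convergence-inexact}~a) gives boundedness. Take a subsequence $k_j$ along which the triple converges. I would then pass to the limit in three relations, using the vanishing quantities in \eqref{limit-5-term-0-inexact} from Lemma~\ref{L-bounded-decreasing-1}~c).
\begin{itemize}
\item[(i)] From Lemma~\ref{basic-observations}~b) we have $\|p_i\nabla f_i(\by_i^{k+1})+\blam_i^{k+1}\|^2\le\varepsilon_i^{k+1}\to0$. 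Since $\triangle\by_i^{k+1}\to0$ and $\triangle\blam_i^{k+1}\to0$, the shifted sequences $(\by_i^{k_j+1},\blam_i^{k_j+1})$ converge to the same limit. Continuity of $\nabla f_i$ (Assumption~\ref{ass-fi}) then yields $p_i\nabla f_i(\by_i^\infty)+\blam_i^\infty=0$.
\item[(ii)] I would write $\by_i^k-\bx^{\tau_k}=\triangle\oby_i^{k+1}-\triangle\by_i^{k+1}+\triangle\bx^{\tk}$. All three terms vanish, which gives $\by_i^\infty=\bx^\infty$.
\item[(iii)] For $\sum_i\blam_i^\infty=0$ I would use Lemma~\ref{basic-observations}~a). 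For $k\in\mathcal K$ it reads $\sum_i\blam_i^k=\sum_i\beta_i(\hat\bx^{\tk}-\by_i^k)$. Here $\hat\bx^{\tk}-\bx^{\tk}=-\triangle\obx^{\tk}\to0$ and $\bx^{\tk}-\bx^{\tau_k}\to0$, so together with (ii) the right-hand side tends to $0$ along $k\in\mathcal K$.
\end{itemize}

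A technical point in (iii) is that the subsequence $k_j$ need not lie in $\mathcal K$. To handle this, I would pick the nearest $k'\in\mathcal K$ with $|k'-k_j|<k_0$. Since successive differences of $\by_i,\blam_i,\bx^{\tau_k}$ vanish and the gap is bounded by $k_0$, $\Lambda^{k'}-\Lambda^{k_j}\to0$. Hence the identity also holds in the limit along $k_j$. Combining (i)–(iii) shows that \eqref{opt-con-FL-opt-ver1} holds, and \eqref{grad-x-*=0} gives $\nabla f(\bx^\infty)=0$.

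For part b), I would use the standard Ostrowski-type argument. The sequence $\{\bx^{\tau_k}\}$ is bounded, and $\bx^{\tau_{k+1}}-\bx^{\tau_k}\to0$, so its set of accumulation points is compact and connected. By part a), every accumulation point of $\{\bx^{\tau_k}\}$ is a stationary point of \eqref{FL-opt}. Because $\bx^\infty$ is isolated, the connected limit set must reduce to $\{\bx^\infty\}$, and therefore $\bx^{\tau_k}\to\bx^\infty$.

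To spell out the connectedness step: suppose another accumulation point existed. Take a small sphere around $\bx^\infty$ containing no other stationary point. Since steps vanish, infinitely many iterates would land near that sphere, producing an accumulation point on it, which contradicts part a).

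It then remains to transfer convergence from $\bx^{\tau_k}$ to the other blocks. Since $\by_i^k-\bx^{\tau_k}\to0$ by (ii), $\by_i^k\to\bx^\infty=\by_i^\infty$. From (i) and $\varepsilon_i^{k}\to0$ we get $\blam_i^k+p_i\nabla f_i(\by_i^k)\to0$, so $\blam_i^k\to-p_i\nabla f_i(\bx^\infty)=\blam_i^\infty$. The main obstacle I expect is the bookkeeping that relates indices inside and outside $\mathcal K$ and the shift between $k$ and $k+1$. The bounded gap $k_0$ together with vanishing successive differences should resolve it.
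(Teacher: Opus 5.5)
Your proposal is correct and follows essentially the same route as the paper. In part a) you pass to the limit in the three stationarity relations using the vanishing quantities of Lemma~\ref{L-bounded-decreasing-1}~c); your nearest-$k'\in\mathcal K$ argument plays the role of the paper's appeal to \eqref{limit-K-grad-pi-exact}. In part b) you reprove the Mor\'e--Sorensen isolated-limit-point lemma via Ostrowski's connectedness argument, and your transfer to $\Lambda$ via $\blam_i^k+p_i\nabla f_i(\by_i^k)\to0$ justifies the step more cleanly than the paper's brief remark that $\{\Lambda^{k+1}\}$ converges ``analogously.''
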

\begin{proof}
	See Appendix \ref{theorem2} for the complete proof.
\end{proof}
\begin{remark}
The convergence guarantees in Theorems~\ref{global-obj-convergence-inexact} and~\ref{global-convergence-inexact} do not explicitly depend on the choice of $\mathcal{N}^{\tau_k}$ due to Scheme~\ref{ass-omega}. In particular, if $\mathcal{N}^{\tau_k}$ is generated randomly instead of deterministically satisfying Scheme~\ref{ass-omega}, the conclusions still hold with high probability.
\end{remark}

With the additional assumption of convexity, Theorems \ref{global-obj-convergence-inexact} and~\ref{global-convergence-inexact} imply the following corollary.

\begin{corollary}\label{L-global-convergence}
Suppose Assumptions \ref{ass-fi} and \ref{ass-f} hold, $f$ is convex, each client $i\in[N]$ selects $\beta_i\geq 3p_iL_i$, and the server follows Scheme \ref{ass-omega}.  Then:
	\begin{enumerate}[leftmargin=17pt, label=\alph*)]
		\item Three  sequences $\{\mathcal{L}^k\}$, $\{ F(Y^k)  \}$, and $\{f (\bx^{\tau_k})\}$ all converge to the optimal value of problem (\ref{FL-opt}):
		\begin{align}\label{L-global-convergence-limit}
			\lim_{k\to \infty} \L^k =  \lim_{k\to \infty} F(Y^k)  = \lim_{k\to \infty} f(\bx^{\tau_k})=  f(\bx^*).
		\end{align}
		
		\item Any accumulating point  $(\bx^{\infty},Y^{\infty},\Lambda^{\infty})$ of   sequence  $\{(\bx^{\tau_k},Y^k,\Lambda^k)\}$ is an optimal solution to problem (\ref{FL-opt-ver1}), where $\bx^{\infty}$ is an optimal solution to  problem  (\ref{FL-opt}).
		
		\item If $f$ is strongly convex, then whole sequence  converges the unique optimal solution $(\bx^*,Y^*,\Lambda^*)$ to   problem (\ref{FL-opt-ver1}), where $\bx^*$ is the unique optimal solution to   problem (\ref{FL-opt}).
	\end{enumerate}
\end{corollary}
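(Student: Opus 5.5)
The corollary follows from Theorems~\ref{global-obj-convergence-inexact} and~\ref{global-convergence-inexact} once convexity is used to turn stationarity into optimality. For a convex differentiable $f$, a point with $\nabla f(\bx)=0$ is a global minimizer. The only extra care needed is to pass from accumulation points to the limit of the whole sequence.

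\textbf{Part a).} By Theorem~\ref{global-obj-convergence-inexact}(a), the sequence $\{\bx^{\tau_k}\}$ is bounded, so it has a convergent subsequence $\bx^{\tau_{k_j}}\to\bx^\infty$. By Theorem~\ref{global-obj-convergence-inexact}(c) and continuity of $\nabla f$ (Assumption~\ref{ass-fi}), $\nabla f(\bx^\infty)=\lim_j\nabla f(\bx^{\tau_{k_j}})=0$. Convexity then makes $\bx^\infty$ a global minimizer, so $f(\bx^\infty)=f(\bx^*)$. Continuity of $f$ gives $f(\bx^{\tau_{k_j}})\to f(\bx^*)$. Since the full sequence $\{f(\bx^{\tau_k})\}$ converges by Theorem~\ref{global-obj-convergence-inexact}(b), its limit must equal this subsequential limit $f(\bx^*)$. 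Part (b) of that theorem then transfers the same limit to $\{\L^k\}$ and $\{F(Y^k)\}$, which yields \eqref{L-global-convergence-limit}.

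\textbf{Part b).} Let $(\bx^\infty,Y^\infty,\Lambda^\infty)$ be any accumulation point. Theorem~\ref{global-convergence-inexact}(a) shows that it satisfies \eqref{opt-con-FL-opt-ver1} and that $\nabla f(\bx^\infty)=0$. Convexity of $f$ makes $\bx^\infty$ optimal for \eqref{FL-opt}. Since $\by_i^\infty=\bx^\infty$, the point $(\bx^\infty,Y^\infty)$ is feasible for \eqref{FL-opt-ver1}, and its objective value is $F(Y^\infty)=f(\bx^\infty)$, which equals the optimal value of \eqref{FL-opt}. The two problems share the same optimal value, because any feasible point of \eqref{FL-opt-ver1} has all $\by_i$ equal to $\bx$. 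Hence $(\bx^\infty,Y^\infty)$ is optimal for \eqref{FL-opt-ver1}, and $\Lambda^\infty$ is the accompanying multiplier.

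\textbf{Part c).} If $f$ is strongly convex, its minimizer $\bx^*$ is unique. The multipliers are then determined by $\blam_i^*=-p_i\nabla f_i(\bx^*)$, and $\by_i^*=\bx^*$, so the stationary triple $(\bx^*,Y^*,\Lambda^*)$ is unique. By part b), every accumulation point of the bounded sequence is this triple. A bounded sequence whose accumulation points all coincide converges to that point, so the whole sequence converges to $(\bx^*,Y^*,\Lambda^*)$. Alternatively, one can note that $\bx^*$ is trivially isolated and invoke Theorem~\ref{global-convergence-inexact}(b).

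\textbf{Main point of care.} The step that needs attention is in part a). One must obtain that the limit of $\{f(\bx^{\tau_k})\}$ equals $f(\bx^*)$ for the whole sequence, not just along a subsequence. This is secured by combining the existence of the limit, from Theorem~\ref{global-obj-convergence-inexact}(b), with the identification of one subsequential limit, as done above.
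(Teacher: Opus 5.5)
Your proof is correct, but in parts a) and c) you argue by compactness, whereas the paper uses direct convexity estimates.

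In a), the paper writes $0\le f(\bx^{\tau_k})-f(\bx^*)\le\langle\nabla f(\bx^{\tau_k}),\bx^{\tau_k}-\bx^*\rangle\le\|\nabla f(\bx^{\tau_k})\|\,\|\bx^{\tau_k}-\bx^*\|$. It then concludes from boundedness and $\nabla f(\bx^{\tau_k})\to0$. This handles the whole sequence at once, without a subsequence and without the a priori existence of $\lim_k f(\bx^{\tau_k})$. Theorem~\ref{global-obj-convergence-inexact}(b) is used only to transfer the limit to $\L^k$ and $F(Y^k)$.

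In c), the paper uses quadratic growth with modulus $\theta$, namely $\frac{\theta}{2}\|\bx^{\tau_k}-\bx^*\|^2\le f(\bx^{\tau_k})-f(\bx^*)$. Together with part a), this gives $\bx^{\tau_k}\to\bx^*$. Then $\by_i^k\to\bx^*$ follows from $\by_i^k-\bx^{\tau_k}\to0$. Finally, $\|\blam_i^k-\blam_i^*\|^2\le 2\varepsilon_i^k+2p_i^2L_i^2\|\by_i^k-\bx^*\|^2\to0$. You instead observe that the stationary triple is unique and that a bounded sequence with a single accumulation point converges.

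The paper's route is quantitative: gradient bounds become objective-gap bounds, and objective-gap bounds become distance bounds. Yours is purely qualitative, but it is shorter and handles $(\bx,Y,\Lambda)$ in one stroke through Theorem~\ref{global-convergence-inexact}(a).

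Your part b) follows the paper. Your explicit reduction through the constraint $\by_i=\bx$ is a useful addition. It shows optimality for \eqref{FL-opt-ver1} using only convexity of $f$, whereas the paper's remark in Section~\ref{sec2} linking stationarity and global optimality for \eqref{FL-opt-ver1} assumes each $f_i$ is convex.
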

\begin{proof}
	For a detailed proof, see Appendix \ref{corollary1}.
\end{proof}
\begin{remark}
The implication that strong convexity entails coercivity allows us to omit Assumption~\ref{ass-f} in the strongly convex case.
\end{remark}

\subsection{Sublinear rate of convergence}
We now establish the sublinear rate of convergence for our algorithm. That is, we prove that the algorithm  needs at most $\mathcal{O}(1/\epsilon)$ iterates to reach an $\epsilon$ solution.
\begin{theorem}\label{complexity-thorem-gradient-inexact}
Suppose Assumption \ref{ass-fi} holds, each client $i\in[N]$ selects $\beta_i\geq 3p_iL_i$, and the server follows Scheme \ref{ass-omega}.  Then for $k>2s_0k_0$,
$$\min_{s=1,2,\cdots,k}\|\nabla f(\bx^{\tau_{s +1}})\|^2  \leq    \frac{2c k_0}{k},$$
where $c>0$ is a constant (specified in the proof) independent of $k$.
\end{theorem}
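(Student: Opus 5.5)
The plan is to telescope the sufficient-decrease inequality of Lemma~\ref{lemma-decreasing-1}, and then bound $\|\nabla f(\bx^{\tau_{s+1}})\|^2$ by the per-iteration decrease quantities plus the inexactness tolerances.

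\textbf{Step 1: telescoping.} Summing Lemma~\ref{lemma-decreasing-1} from $0$ to $k-1$ and using Lemma~\ref{L-bounded-decreasing-1}(b), which gives $\widetilde\L^k\ge f(\bx^*)$, yields
\[
\sum_{t=0}^{k-1}\Big(\sum_i\tfrac{\beta_i}{10}\|\triangle\by_i^{t+1}\|^2+\tfrac{\beta}{2}(1-a_{\tau_{t+1}}-a_{\tau_t})\|\triangle\hat\bx^{\tau_{t+1}}\|^2\Big)\le \widetilde\L^0-f(\bx^*)=:c_1 .
\]
Since $a_{\cdot}\in(0,1/2)$, I will assume a uniform margin, $1-a_{\tau_{t+1}}-a_{\tau_t}\ge\delta>0$, so the $\hat\bx$ term has a fixed positive coefficient. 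Separately, $\sum_t\varepsilon_i^t<\infty$ because $\varepsilon_i^{t+1}=\theta_i\varepsilon_i^t$ whenever client $i$ updates and stays constant otherwise. Scheme~\ref{ass-omega} ensures the decrease happens at least once every $s_0k_0$ iterations, so $\sum_t\varepsilon_i^t\le s_0k_0\varepsilon_i^0/(1-\theta_i)$.

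\textbf{Step 2: gradient bound at a global step.} At each global round I will write
\[
\nabla f(\bx^{\tk})=\sum_i p_i\nabla f_i(\bx^{\tk})=\sum_i\big(p_i\nabla f_i(\bx^{\tk})-\bg_i^{k+1}\big)+\sum_i\Phi_i^{k+1}-\sum_i\blam_i^{k+1}.
\]
The first sum is bounded by $\sum_ip_iL_i\|\by_i^{k+1}-\bx^{\tk}\|$. The second is bounded using Lemma~\ref{basic-observations}(b), which gives $\|\Phi_i^{k+1}\|^2\le\varepsilon_i^{k+1}$. For the third, I use Lemma~\ref{basic-observations}(a) at the next aggregation to get $\sum_i\blam_i = -\sum_i\beta_i(\by_i-\hat\bx)$, so it becomes a combination of $\by_i-\hat\bx$ terms. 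Then $\by_i^{k+1}-\bx^{\tk}$ is expanded as a telescoping sum of $\triangle\by_i$ over the at most $2s_0k_0$ iterations since client $i$ was last selected. Here Scheme~\ref{ass-omega} is used, and it explains the requirement $k>2s_0k_0$. The remaining pieces are $\triangle\hat\bx$ differences, and inertial terms $a\,(\hat\bx^{\tk}-\hat\bx^{\tau_k})$ coming from $\bx^{\tk}-\hat\bx^{\tk}$. After Cauchy–Schwarz, $\|\nabla f(\bx^{\tau_{s+1}})\|^2$ is bounded by a constant $c_2$ times the sum of $\|\triangle\by_i^t\|^2$, $\|\triangle\hat\bx^{\tau_t}\|^2$ and $\varepsilon_i^t$ over a window of length $O(s_0k_0)$ around round $s$.

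\textbf{Step 3: averaging.} Summing the Step 2 bound over the roughly $k/k_0$ global rounds, each iterate index is counted in at most $O(s_0)$ windows. Dividing by the number of rounds then gives $\min_s\|\nabla f(\bx^{\tau_{s+1}})\|^2\le 2ck_0/k$. Here $c$ collects $c_1$, $c_2$, $s_0$, and the summable-$\varepsilon$ constant. The factor $2$ and the threshold $k>2s_0k_0$ absorb the boundary windows; for instance, the count of rounds is at least $k/(2k_0)$.

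\textbf{Main obstacle.} The delicate step is Step 2: controlling the multiplier sum $\sum_i\blam_i^{k+1}$ through consensus errors $\by_i-\bx$ for clients that are stale under partial participation. I would handle it using the identity of Lemma~\ref{basic-observations}(a) together with Lemma~\ref{basic-observations}(c), which converts $\triangle\blam$ into $\triangle\by$ plus $\triangle\varepsilon$. A second concern is making sure the inertial term $\triangle\obx^{\tk}=a_{\tk}\triangle\hat\bx^{\tk}$ is indeed controlled by the $\hat\bx$ decrease term, which needs the uniform margin $\delta$ assumed in Step 1.
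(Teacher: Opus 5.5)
Your plan is essentially the paper's proof. You telescope the decrease of $\widetilde\L^k$, where your margin $\delta$ plays the role of the paper's assumption $a_{\tk}\le r<1/2$ behind $R_1,R_2$. You then bound $\|\nabla f(\bx^{\tau_{k+1}})\|^2$ at aggregation steps by those decrease terms, the tolerances $\varepsilon_i$, and the stale-client consensus errors controlled through Scheme~\ref{ass-omega} over an $O(s_0k_0)$ window, and finally average over the $\approx k/k_0$ rounds. One correction to your wording: for an idle client $\by_i$ does not move, so what telescopes is $\bx^{\tau_{k_i+1}}-\bx^{\tk}$, not $\triangle\by_i$, and this comes together with the term $\triangle\blam_i^{k_i+1}/\beta_i$.

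The main difference is how you handle $\sum_i\blam_i^{k+1}$: you apply Lemma~\ref{basic-observations}(a) at the next aggregation. The paper instead splits off $\sum_i(\bg_i^{k+1}-\bg_i^k)$ and uses the identity at the current step $k\in\mathcal{K}$. There it writes $\bx^{\tau_{k+1}}$ where the identity gives $\hat\bx^{\tau_{k+1}}$, dropping the inertial term $\bx^{\tk}-\hat\bx^{\tk}$, which your bookkeeping correctly keeps.
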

\begin{proof}
We defer the detailed proof to Appendix \ref{theorem3}.
\end{proof}
\begin{remark}
The convergence rate is derived only under the gradient Lipschitz continuity assumption (Assumption~\ref{ass-fi}), without requiring convexity or other structural conditions.
\end{remark}

\section{Numerical experiments}\label{sec:num}
This section presents numerical experiments to evaluate \MA. 
We compare its performance with several baseline FL methods, including \ADMM, \FA, \FP, \FAT, and \FS, using the objective value $f(\bx^{\tau_k})$, CR, and computational time (seconds) as metrics. Additionally, we examine the influence of four key parameters: the number of local iterations per CR, $k_0$; the client participation rate, $\rho \in (0, 1]$; the total number of clients, $N$; and the dimension of the solution, $n$.

To investigate these parameters, we adopt the following experimental setup. For problem \eqref{FL-opt}, we set $p_i = 1/N$ for all $i \in [N]$ and maintain this value throughout Sections \ref{linear} and \ref{ex-logist}. All algorithms are initialized at $\bx^0 = \by_i^0 = \mathbf{0}$ for $i \in [N]$. We simulate partial clients participation by randomly and uniformly selecting a subset $\mathcal{N}^\tau \subseteq [N]$ of size $\rho N$ in each round $\tau$. This stochastic scheme satisfies Scheme~\ref{ass-omega} with high probability, as established in Remark~\ref{rem:omega}.

\begin{enumerate}[leftmargin=17pt, label=\alph*)]
	\item \ADMM: Initialize $\by_i^0 = \blam_i^0 = \mathbf{0}$. For $i \in [N]$, set $\varepsilon_i^0 = k_0^2$, $\theta_i = 0.95$, and $\beta_i = 0.5 L_i / N$. The algorithm terminates when
	\begin{eqnarray}\label{stopping}
		\arraycolsep=1.0pt\def\arraystretch{1.5}
		\begin{array}{lll}
			\| \nabla f(\bx^{\tau_k}) \|^2 < \min\left\{ \frac{1}{5} \| \nabla f(\mathbf{0}) \|^2, \frac{5 \delta n N}{d} \right\},
		\end{array}
	\end{eqnarray}
	where $d$ is specified in Sections \ref{linear} and \ref{ex-logist}, with $\delta$ set to $10^{-3}$ and $10^{-7}$ for the respective sections.
	
	\item \MA: Initialization and parameter settings are the same as \ADMM. We implement three variants of \MA:
	\begin{itemize}
		\item \1A: $s = 1$, $r_{\tk} = \frac{k+2}{2}$, $a_{0, \tk} = 0.4\times\frac{r_{\tk-1}-1}{r_{\tk}}$, and $a_{j, \tk} = 0$ for $j \geq 1$.
		
		\item \2A: $s = 2$, $r_{\tk} = \frac{k+2}{2}$, $a_{0, \tk} = a_{1, \tk} = 0.4\times\frac{r_{\tk-1}-1}{r_{\tk}}$, and $a_{j, \tk} = 0$ for $j \geq 2$.
		
		\item \3A: $s = 3$, $r_{\tk} = \frac{k+2}{2}$, $a_{0, \tk} = a_{1, \tk} = a_{2, \tk} = 0.4\times\frac{r_{\tk-1}-1}{r_{\tk}}$, and $a_{j, \tk} = 0$ for $j \geq 3$.
	\end{itemize}

	\begin{remark}
	The coefficients \(a_{i,\tk}\) are designed to satisfy two criteria. 
		First, to ensure convergence, we require \(a_{i,\tk} \in (0, 1/2)\) for 
		all \(i\) and \(\tk\), as implied by the convergence condition in Lemma \ref{lemma-decreasing-1}. Second, to achieve acceleration, we adopt a diminishing schedule \(a_{i,\tk} = \alpha \cdot (r_{\tk-1} - 1)/r_{\tk}\) with \(r_{\tk} = (k+2)/2\), which is inspired by Nesterov's momentum schedule \cite{nesterov1983method} and ensures that the coefficients decay as training progresses. The scaling factor \(\alpha = 0.4\) is chosen to stay safely below the theoretical upper bound \(1/2\), and empirical studies (see Section \ref{sec:num}) show that the algorithm is robust to \(\alpha\) in a range around \(0.4\). For the multi-step cases (\(s=2,3\)), we set all coefficients to the same value for a fair comparison, as our focus is to demonstrate the benefit of incorporating multiple history steps rather than optimizing individual coefficients.
	\end{remark}
	
	\item \FA\ and \FP: Each selected client $i \in \tauk$ approximates the local subproblem via gradient descent, initialized at $\bx^{\tau_k}$ if $k \in \mathcal{K}$ and at $\by_i^k$ otherwise.
	
	\item \FAT\ and \FS: The two methods employ the partial model personalization from \cite{hanzely2021personalized}:
	\begin{eqnarray*}
		\arraycolsep=1.0pt\def\arraystretch{1.5}
		\begin{array}{lll}
			h_i(\bx, \mathbf{v}) = (1 - \alpha) f_i(\bx) + \alpha f_i(\mathbf{v}) + \frac{\mu}{2} \| \bx - \mathbf{v} \|^2,
		\end{array}
	\end{eqnarray*}
	with $\alpha = 0.5$ and $\mu = 0.001$. Their local subproblems are initialized and solved identically to their counterparts in \FA\ and \FP.
\end{enumerate}

These methods conduct global averaging exclusively when $k \in \mathcal{K}$. Unlike \MA, \ADMM, and \FA, which aggregate across all clients, methods like \FP, \FAT,~and \FS~ average only over the participating clients, i.e.,
\begin{eqnarray*}
	\arraycolsep=1.0pt\def\arraystretch{1.5}
	\begin{array}{lll}
		\bx^{\tau_k} = \frac{1}{|\mathcal{N}^{\tau_k}|} \sum\limits_{i \in \mathcal{N}^{\tau_k}} \by_i^k.
	\end{array}
\end{eqnarray*}

For a fair comparison, we employ the following stopping criterion: first, \ADMM\ is run until it satisfies condition \eqref{stopping}, producing a solution $\bx^{\tau_k}$. All other algorithms terminate once their solution $\bx$ satisfies:
\begin{eqnarray*}
	\arraycolsep=1.0pt\def\arraystretch{1.5}
	\begin{array}{lll}
		f(\bx) - f(\bx^{\tau_k}) \leq 2(1 + |f(\bx^{\tau_k})|) \cdot 10^{-4},
	\end{array}
\end{eqnarray*}
This ensures consistent solution quality across all methods.

All experiments were conducted using MATLAB on a desktop equipped with an Intel Core i7-12700 processor (12th Gen, 12 cores, 2.10 GHz) and 16 GB RAM.
\subsection{Linear regression}\label{linear}

In this experiment, each client $i$ has the local objective function:
\begin{eqnarray*}
	\begin{array}{lll}
		f_i(\bx) = \frac{1}{2d_i} \sum\limits_{t=1}^{d_i} (\langle \mathbf{a}_i^t, \bx \rangle - b_i^t)^2,
	\end{array}
\end{eqnarray*}
where $\mathbf{a}_i^t \in \mathbb{R}^n$ and $b_i^t \in \mathbb{R}$ represent the $t$-th feature vector and label for client $i$. To simulate non-i.i.d. data,  we first draw dataset sizes $d_1, \dots, d_N$ from a uniform distribution over $[50, 150]$, and set $d = \sum_{i=1}^N d_i$. 
A global dataset is then created, comprising $\lceil d/3 \rceil$ samples from a standard normal distribution, $\lceil d/3 \rceil$ samples from Student's $t$-distribution with 5 degrees of freedom, and the remaining $d - 2\lceil d/3 \rceil$ samples from a uniform distribution on $[-5, 5]$. 
All samples are shuffled and partitioned into $N$ subsets with sizes $d_1, \dots, d_N$, assigned to each client accordingly. 

\subsubsection{Impact of \texorpdfstring{$k_0$}{k0}}
To evaluate the impact of $k_0$, we fix  $(n,N,\rho){=}(100, 100,0.5)$. First, we perform all algorithms under $k_0\in\{1,10,30,50,70,90\}$ and report the results in Figure  \ref{LinReg-k0}. The figure reveals that all methods eventually converge to the same objective value with increasing CR, so subsequent results omit objective values. For $k_0=1$, \MA~exhibits slow initial progress but accelerates significantly in later stages. For $k_0>1$, \MA~consistently achieves convergence in the fewest CR. Moreover, \3A outperforms \2A, which in turn surpasses \1A, highlighting the advantage of multi-step inertial updates.
\begin{figure}[htbp]
\centering
\includegraphics[width=.49\linewidth]{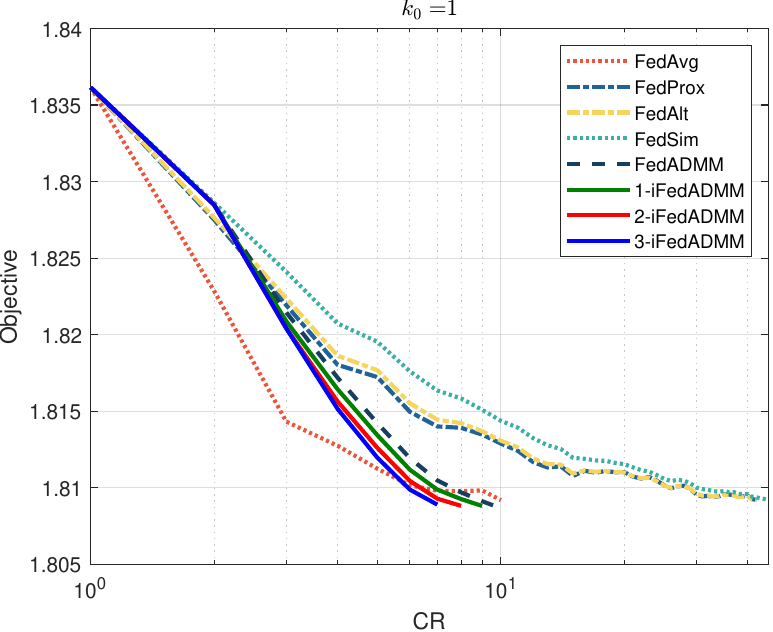}
\includegraphics[width=.49\linewidth]{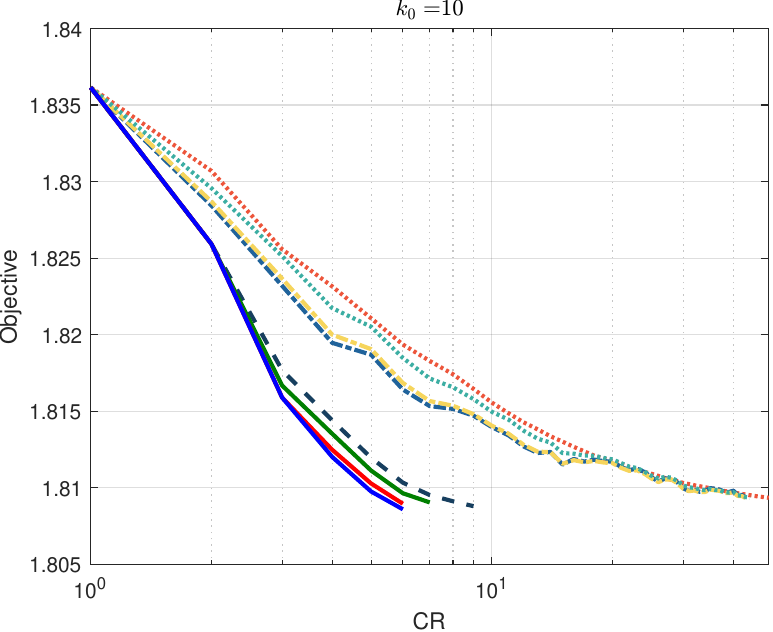}\\
\includegraphics[width=.49\linewidth]{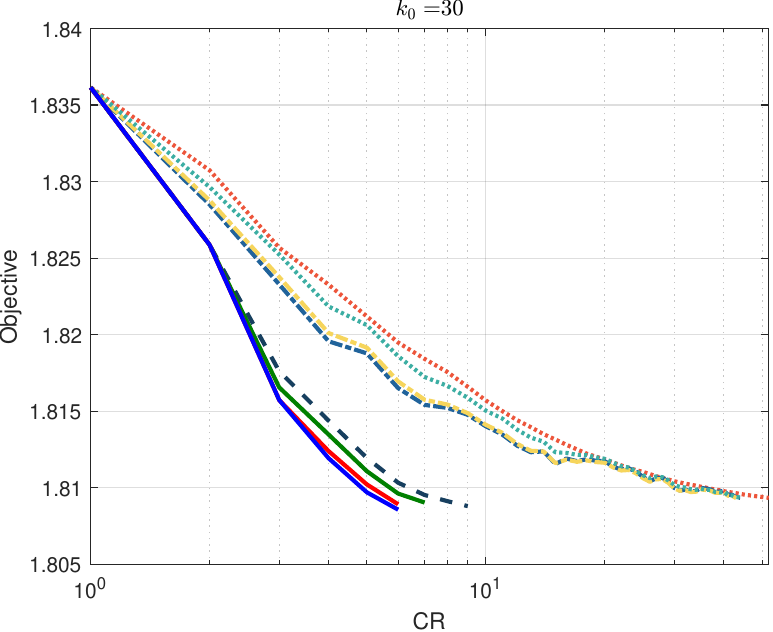}
\includegraphics[width=.49\linewidth]{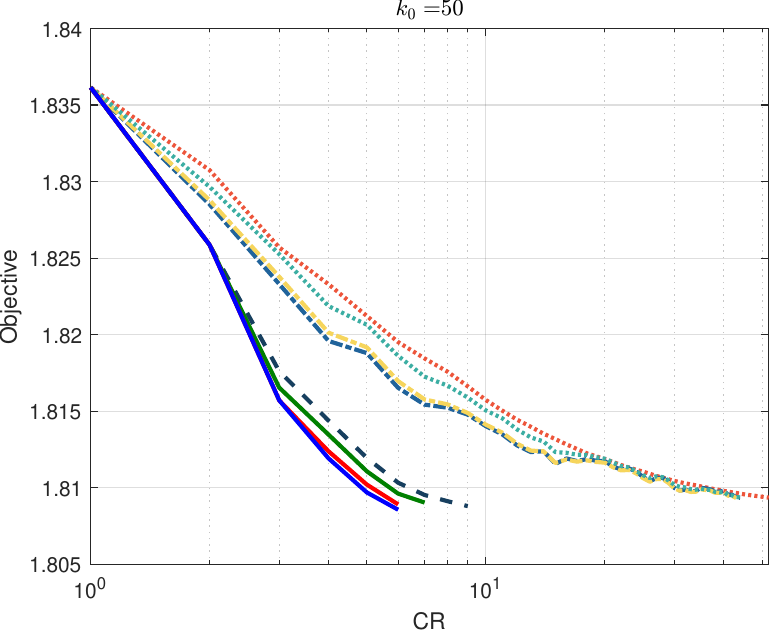}\\
\includegraphics[width=.49\linewidth]{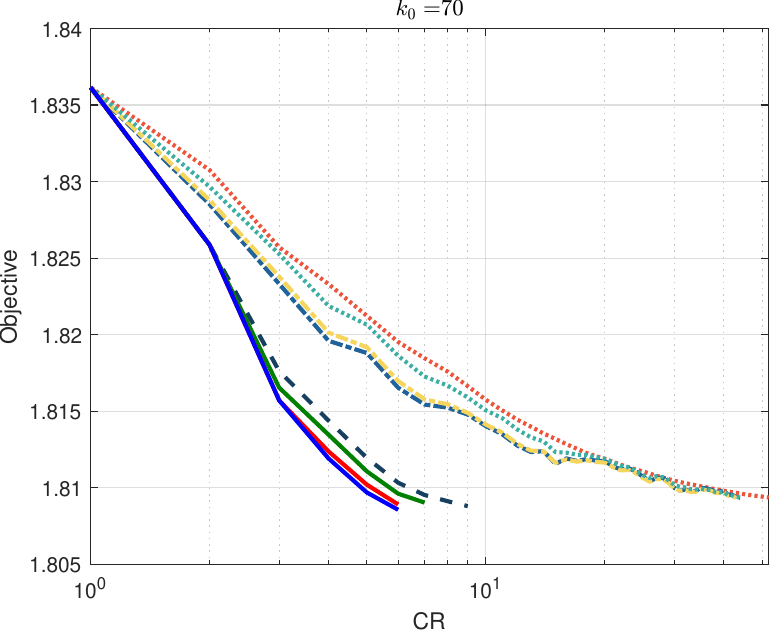}
\includegraphics[width=.49\linewidth]{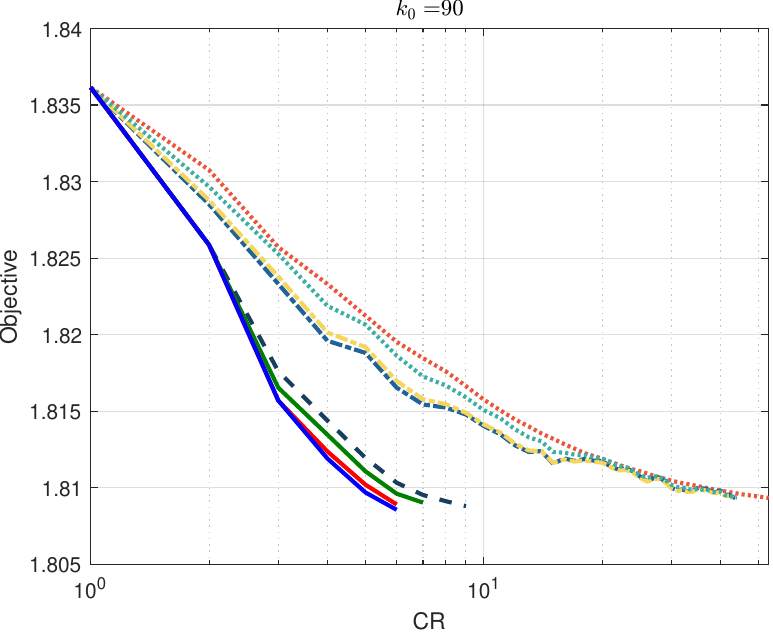}
\caption{Objective Value vs. CR.\label{LinReg-k0}}
\end{figure}

\begin{figure}[htbp]
\centering
\includegraphics[width=.49\linewidth]{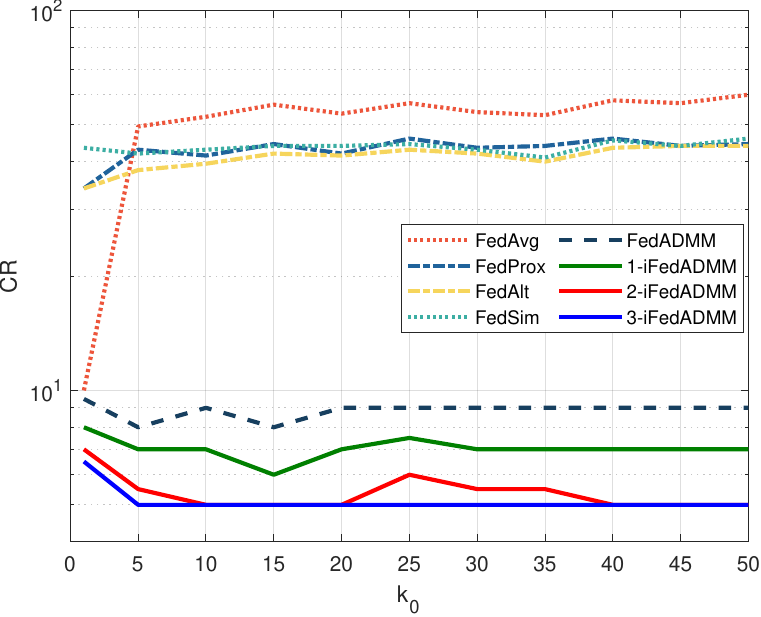}
\includegraphics[width=.49\linewidth]{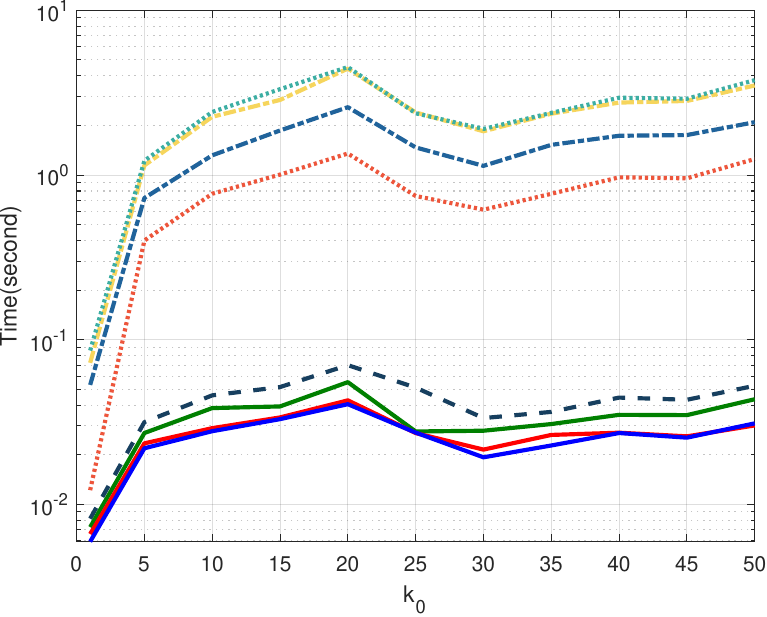}
\caption{Impact of $k_0$.\label{LinReg-k02}}
\end{figure}

In the second experiment, we investigate statistical reliability by repeating each run across 20 independent instances for each $k_0\in\{1,5,10,\cdots,50\}$, reporting the median results in Figure~\ref{LinReg-k02}. The results indicate that CR remains relatively stable for \MA~as $k_0$ increases. Although larger $k_0$ values lead to longer computation time, \MA~maintains the lowest CR and shortest runtime among all competitors. The consistent outperformance of \3A over \2A and \1A further validates the efficacy of higher-step inertia.
\subsubsection{Impact of \texorpdfstring{$\rho$}{rho}}
This experiment explores $\rho \in \{0.1, 0.2, \dots, 0.9\}$ with $(n, N, k_0) = (100, 100, 10)$, reporting medians over 20 trials in Figure~\ref{LinReg-rho}. As anticipated, a higher $\rho$ enhances efficiency for all methods, reducing CR and runtime.
Once again, \MA~achieves superior performance in almost all cases. Furthermore, \3A consistently outperforms \2A, which in turn exceeds \1A, underscoring the advantage of multi-step inertial technique.
\begin{figure}[htbp]
\centering
\includegraphics[width=.49\linewidth]{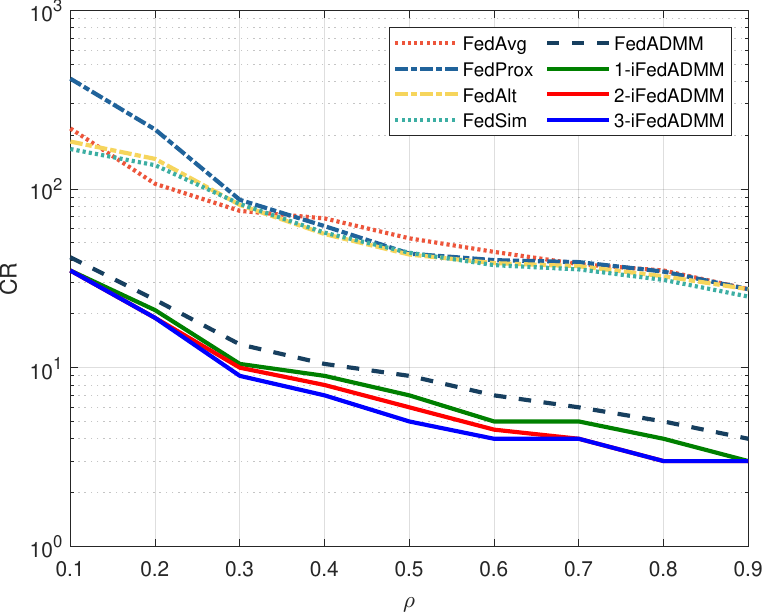}
\includegraphics[width=.49\linewidth]{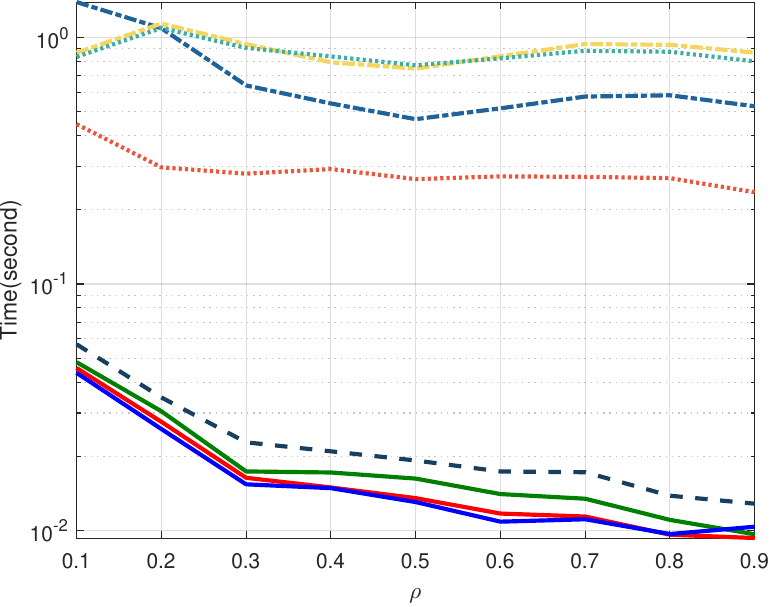}
\caption{Impact of $\rho$.\label{LinReg-rho}}
\end{figure}
\subsubsection{Impact of \texorpdfstring{$N$}{N}}
Similarly, we fix $(n, \rho, k_0) = (100, 0.5, 10)$ and vary $N \in \{50, 100, 150, 200\}$. Median results over 20 instances are shown in Figure~\ref{LinReg-N1}. It is evident that a larger $N$ (i.e., more clients) leads to fewer CR but longer computational time per round due to the increased workload. \MA~again proves most efficient, and the graded performance from \1A to \3A persists, reiterating the advantage of multi-step inertial scheme.
\begin{figure}[htbp]
\centering
\includegraphics[width=.49\linewidth]{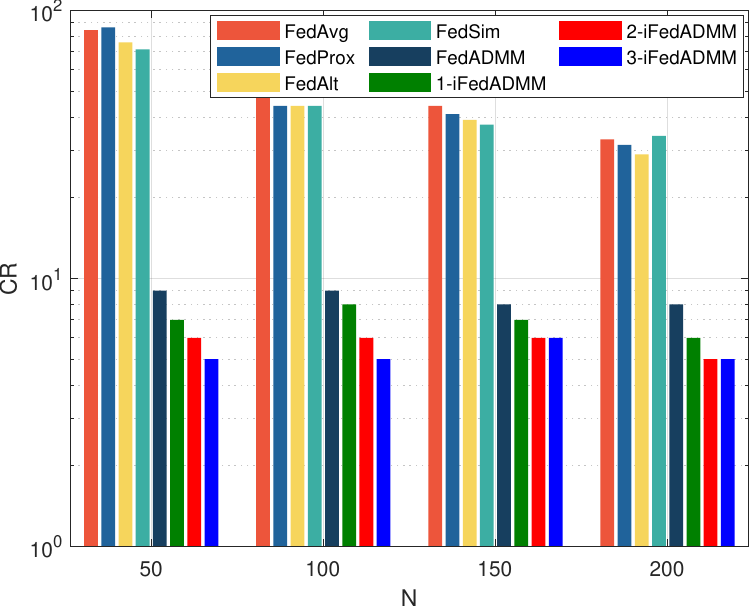}
\includegraphics[width=.49\linewidth]{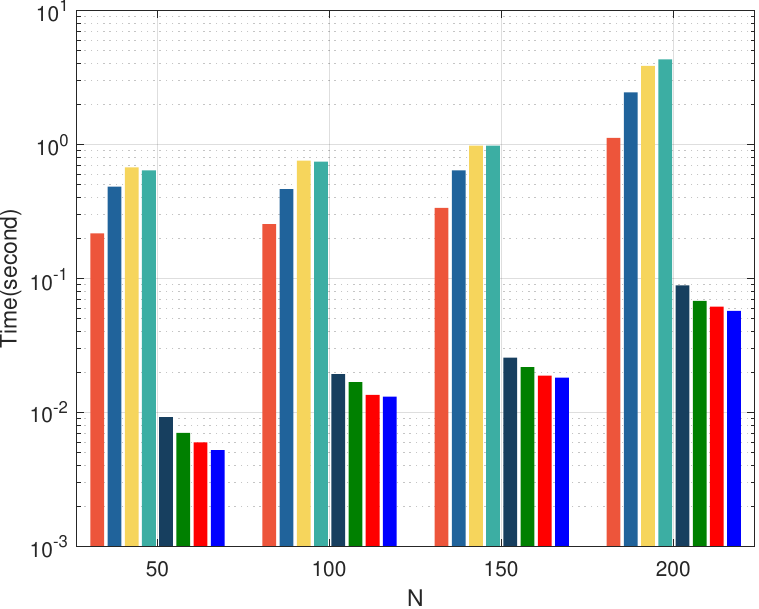}
\caption{Impact of $N$.\label{LinReg-N1}}
\end{figure}
\subsubsection{Impact of larger sizes}
We further evaluate scalability by fixing $(n, \rho, k_0) = (1000, 0.5, 10)$ and varying $N \in \{1000, 2000, 3000, 4000\}$.  Median results  from 20 trials are shown in Figure~\ref{LinReg-N2}. All algorithms exhibit a clear decrease in CR and an increase in computational time as $N$ grows. Notably, \MA~significantly outperforms other methods in runtime efficiency. Moreover, the performance order remains consistent: \3A surpasses \2A, which in turn exceeds \1A, further validating the advantage of multi-step inertial technique.
\begin{figure}[htbp]
\centering
\includegraphics[width=.49\linewidth]{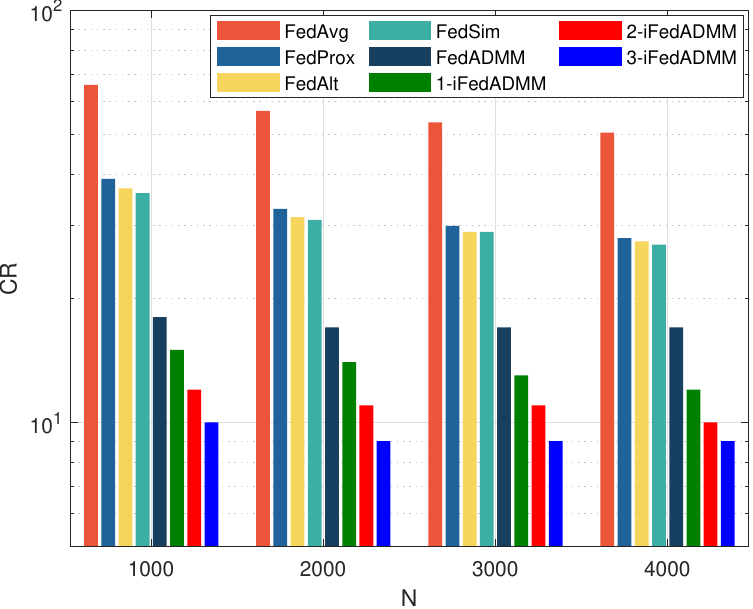}
\includegraphics[width=.49\linewidth]{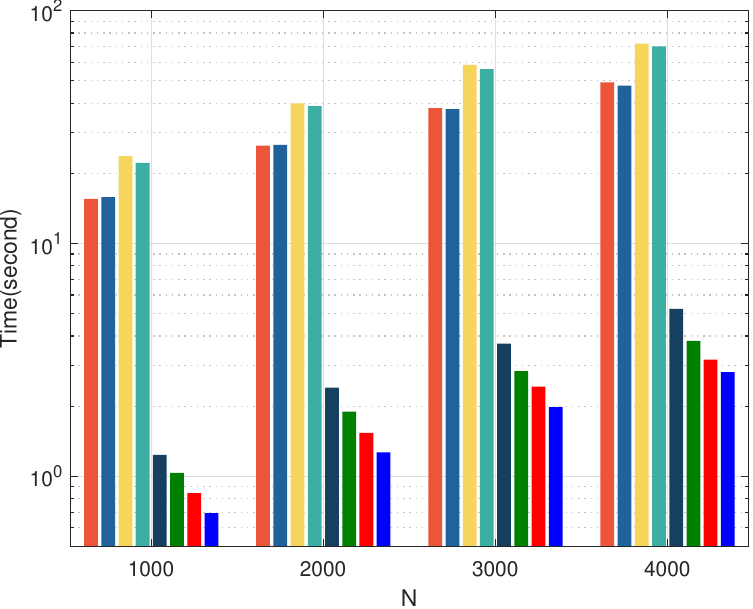}
\caption{Impact of larger sizes.\label{LinReg-N2}}
\end{figure}

\subsection{Logistic regression}\label{ex-logist}
In this experiment, each client $i$ has a local objective function for logistic regression with $\ell_2$-regularization:
$$
		f_i(\bx) = \frac{1}{d_i} \sum\limits_{t=1}^{d_i} \left[ \ln\left(1 + \exp(\langle \mathbf{a}_i^t, \bx \rangle)\right) - b_i^t \langle \mathbf{a}_i^t, \bx \rangle \right] + \frac{\lambda}{2} \| \bx \|^2,$$
where $\mathbf{a}_i^t \in \mathbb{R}^n$ is a feature vector, $b_i^t \in \{0,1\}$ is the corresponding binary label, and $\lambda = 0.001$ is a regularization parameter. We use two real-world datasets, summarized in Table~\ref{tab:datasets}, to generate the pairs $(\mathbf{a}, b)$. The samples are randomly divided among the $N$ clients.

\begin{table}[htbp]
	\renewcommand{\arraystretch}{1.25}
	\caption{Description of two real-world datasets.}
	\label{tab:datasets}
	\centering
	\begin{tabular}{ l l r r}
		\hline
		Dataset & Source & $n$ (features) & $d$ (samples) \\
		\hline
		{\tt Qsar Oral Toxicity} & UCI & 1024 & 8992 \\
		{\tt real-sim} & Libsvm & 20958 & 72309 \\
		\hline
	\end{tabular}
\end{table}
\subsubsection{Impact of \texorpdfstring{$k_0$}{k0}}
The effect of $k_0 \in \{1, 5, 10, \dots, 50\}$ is evaluated with $(N, \rho) = (100, 0.5)$ on the {\tt Qsar Oral Toxicity} dataset. Median results over 20 trials per $k_0$ value are shown in Figure~\ref{fig:effect-k0}. 
In contrast to \FA, \FP, \FAT, and \FS, whose CR decrease with $k_0$, \ADMM and \MA exhibit stable CR. Unsurprisingly, computational time rises with $k_0$ for all methods. Notably, \MA\ achieves the shortest runtime among all algorithms. Furthermore, the performance hierarchy remains consistent: \3A outperforms \2A, which in turn surpasses \1A, reinforcing the effectiveness of multi-step inertial technique.
\begin{figure}[htbp]
\centering
\includegraphics[width=.49\linewidth]{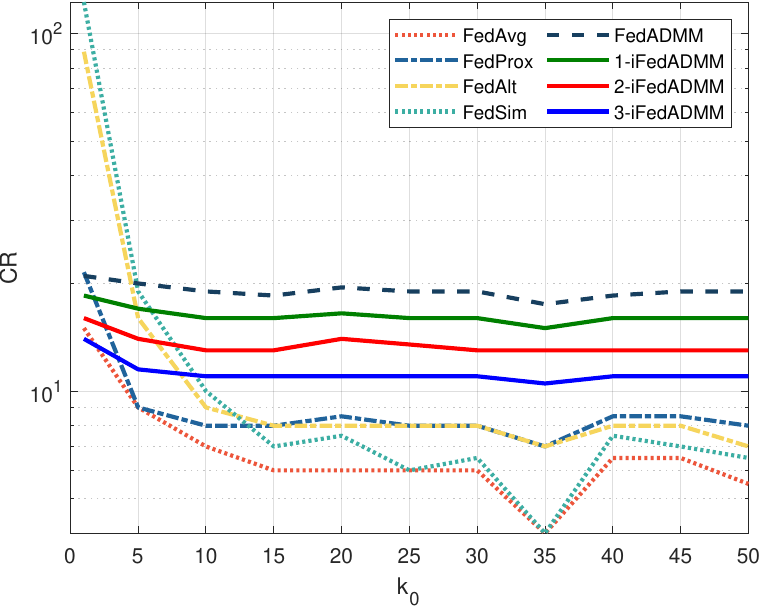}
\includegraphics[width=.49\linewidth]{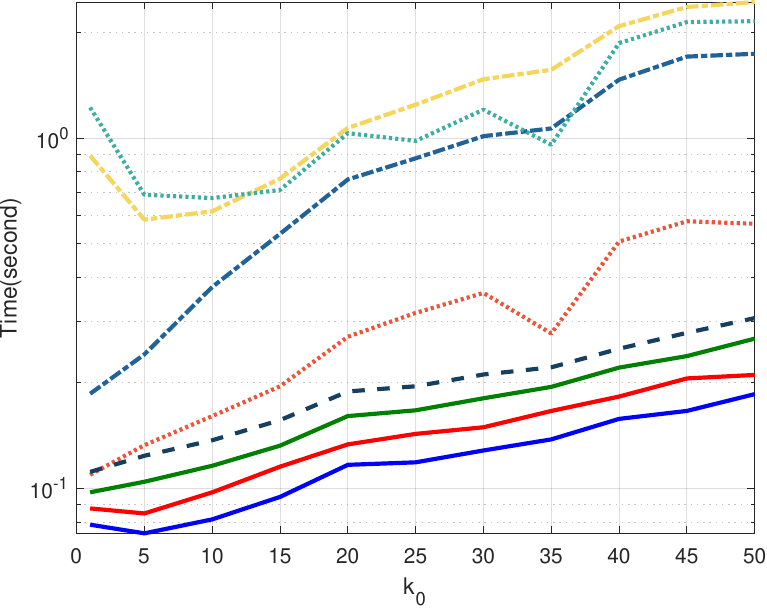}
\caption{Impact of $k_0$.\label{fig:effect-k0}}
\end{figure}

\subsubsection{Impact of \texorpdfstring{$\rho$}{rho}}
We investigate $\rho \in \{0.1, 0.2, \dots, 0.9\}$ with $(N, k_0) = (100, 10)$ using the {\tt Qsar Oral Toxicity} dataset.
Median results over 20 instances are presented in Figure~\ref{fig:effect-rho}. As $\rho$ increases, CR decreases for all algorithms except \FP, which maintains a stable CR count. 
Computational time trends diverge: it grows for \FP, \FAT, and \FS, but declines for others.
Consistently, \MA\ maintains the lowest runtime, and the consistent superiority of \3A over \2A over \1A underscores the efficacy of inertial steps.
\begin{figure}[htbp]
	\centering
	\includegraphics[width=.49\linewidth]{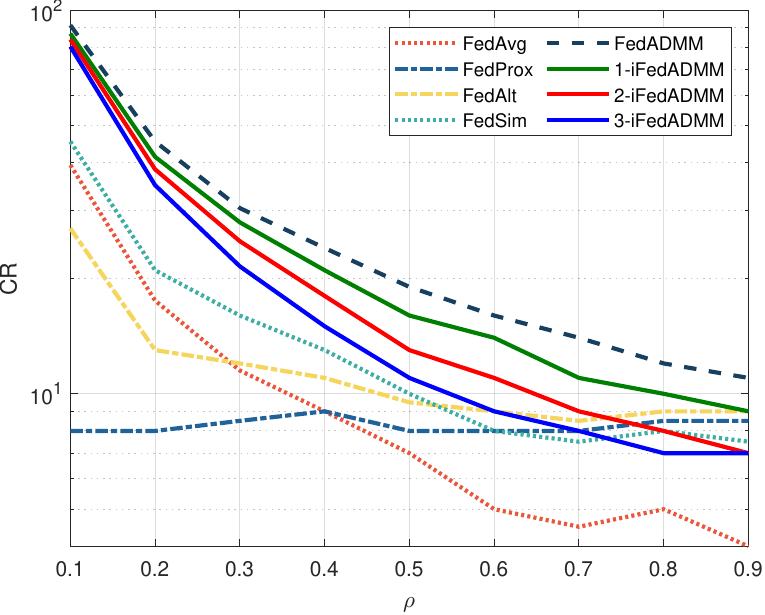}
	\includegraphics[width=.49\linewidth]{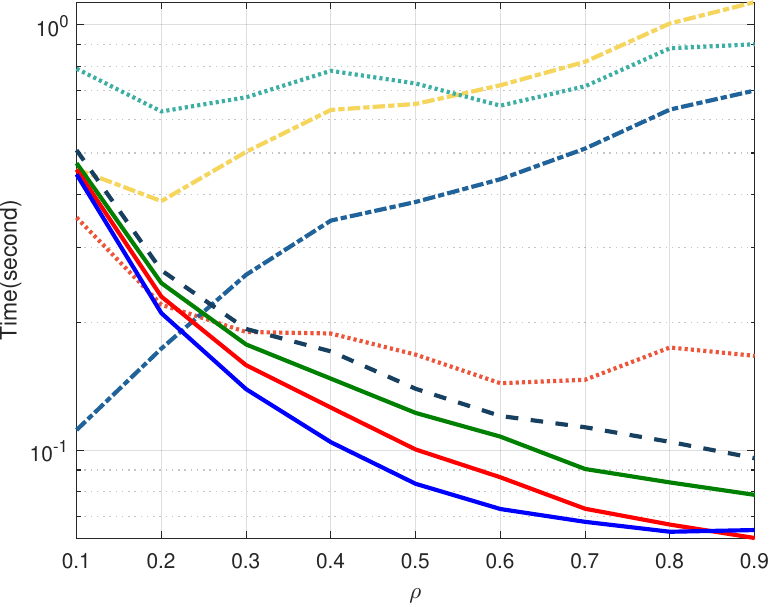}
	\caption{Impact of $\rho$.\label{fig:effect-rho}}
\end{figure}
\begin{figure}[htbp]
\centering
\includegraphics[width=.49\linewidth]{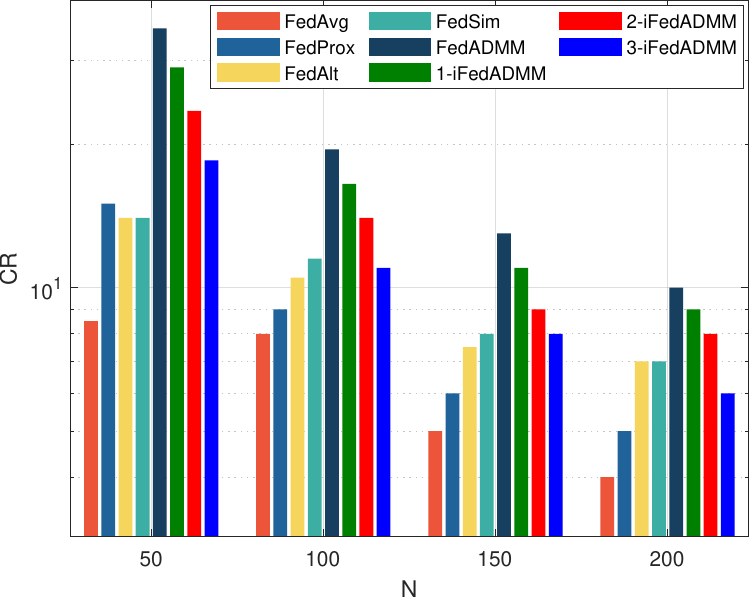}
\includegraphics[width=.49\linewidth]{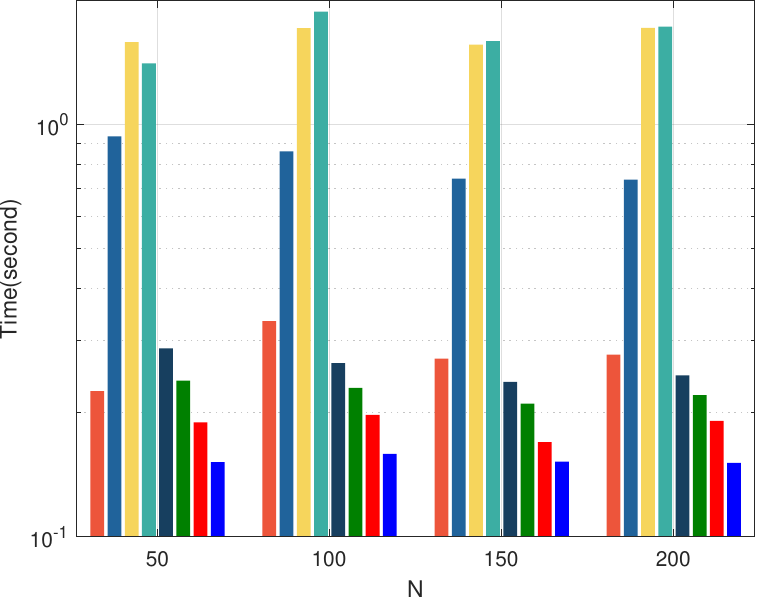}
\caption{Impact of $N$.\label{fig:effect-m}}
\end{figure}
\subsubsection{Impact of \texorpdfstring{$N$}{N}}
We assess scalability by varying $N \in \{50, 100, 150, 200\}$ on the {\tt Qsar Oral Toxicity} dataset, with $(\rho, k_0) = (0.5, 10)$. Figure~\ref{fig:effect-m} presents median results over 20 trials. As $N$ grows, all algorithms require fewer communication rounds (CR); the associated computational time increases moderately. \MA\ maintains the shortest runtime, and the consistent performance order—\3A leading \2A and \1A—again validates the multi-step inertial scheme.
\subsubsection{Impact of larger sizes}
We examine scalability on the high‑dimensional {\tt real‑sim dataset} ($n = 20,958$, $d = 72,309$) with $(\rho, k_0) = (0.5, 10)$ and $N \in  \{1000, 2000, 3000, 4000\}$. Median results over 20 trials are shown in Figure~\ref{fig:effect-hig}. As $N$ increases, both CR and computational time rise across all algorithms. \MA\ retains a decisive efficiency advantage, consistently achieving the shortest runtime. A clear performance hierarchy persists, with \3A leading, followed by \2A and then \1A, further underscoring the value of the multi‑step inertial scheme.
\begin{figure}[htbp]
\centering
\includegraphics[width=.49\linewidth]{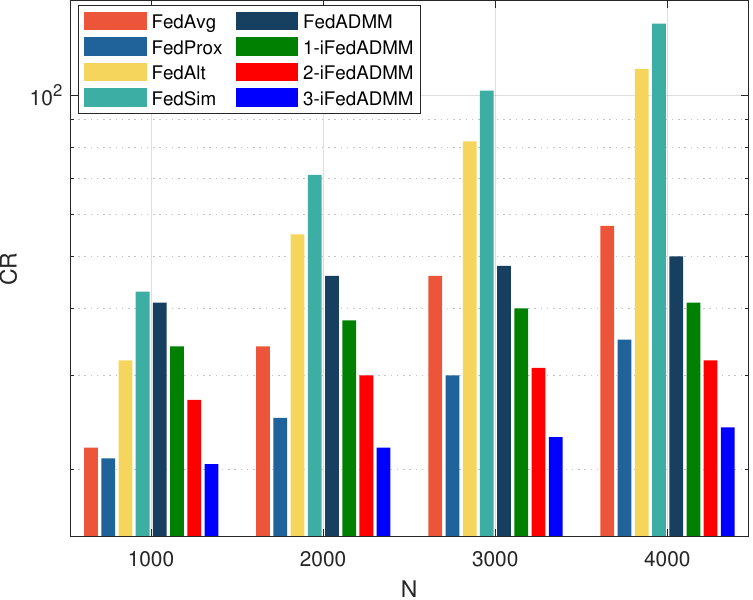}
\includegraphics[width=.49\linewidth]{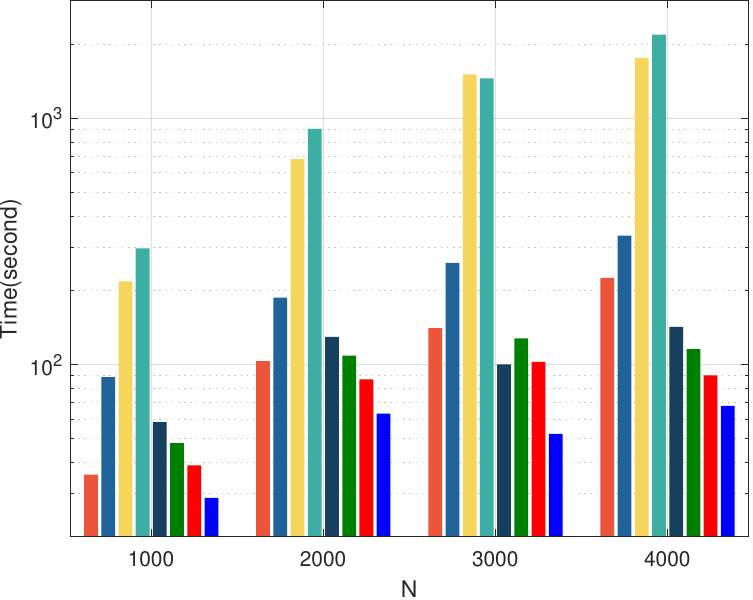}
\caption{Impact of larger sizes.\label{fig:effect-hig}}\vspace{-4mm}
\end{figure}

\section{Conclusion}\label{sec:conclusion}
We propose \MA, which extends \ADMM\ by incorporating a multi-step inertial technique during global updates to accelerate convergence and improve communication efficiency. 
For the single-step inertial variant, we establish global convergence to stationary points under mild assumptions and derive a non-ergodic $\mathcal{O}(1/K)$ convergence rate.
Numerical experiments on linear regression and logistic regression demonstrate that the proposed method significantly reduces the number of CRs required to achieve a desired accuracy, outperforming existing baselines.

To fully realize the potential of \MA, future research should address two frontiers. 
The first is to extend the convergence analysis from the single-step to the full multi-step inertial scheme, providing complete theoretical guarantees for \MA. 
The second involves incorporating SGD into the local updates by replacing full-batch gradients, thereby significantly reducing the per-round computational cost and further enhancing overall efficiency.
Establishing the convergence and convergence rate of this stochastic variant under non-i.i.d. data would be a valuable theoretical and practical contribution.

\newpage
\appendix
\section{Some basics}
For any two points $\bx_1$ and $\bx_2$, and for any $\bx_i \in \{\bx_1, \bx_2\}$, we have
\[
\bx_2 + t(\bx_1 - \bx_2) - \bx_i =
\begin{cases}
	(t-1)(\bx_1 - \bx_2) & \text{if } \bx_i = \bx_1, \\
	t(\bx_1 - \bx_2) & \text{if } \bx_i = \bx_2.
\end{cases}
\]
If the function is gradient Lipschitz continuous with constant $L$, then by the Mean Value Theorem, it follows that
\begin{eqnarray}\label{H-Lip-continuity-fxy}
	\arraycolsep=1.0pt\def\arraystretch{1.5}
	\begin{array}{llll}
		f(\bx_1) - f(\bx_2) -\langle\nabla  f(\bx_i), \bx_1-\bx_2\rangle 
		&=&  \int_0^1 \langle \nabla  f(\bx_2  +t(\bx_1-\bx_2  )) - \nabla  f(\bx_i ), \bx_1-\bx_2     \rangle dt  \\
		&\leq&  \int_0^1 L\|\bx_2  +t(\bx_1-\bx_2  ) -  \bx_i \|\| \bx_1-\bx_2     \| dt\\
		&=& \frac{L}{2}\| \bx_1-\bx_2\|^2.
	\end{array}
\end{eqnarray}
For any vectors $\bx_i$ and any $t > 0$, the following holds:
\begin{align}
	\|\bx_1-\bx_3\|^2  - \|\bx_2-\bx_3\|^2 =&  2\langle\bx_1-\bx_3,\bx_1-\bx_2\rangle
	-\|\bx_1-\bx_2\|^2,\label{two-vecs-1}\\
	2 \langle \bx_1, \bx_2\rangle \leq &  t\|\bx_1\|^2 + (1/t)\|\bx_2\|^2,\label{two-vecs-2}\\
	\|\bx_1+\bx_2\|^2 \leq & (1+t)\|\bx_1\|^2+(1+ 1/t)\|\bx_2\|^2,\label{two-vecs-3}\\
	\left\|\sum_{i=1}^N\bx_i\right\|^2 \leq & N\sum_{i=1}^N\|\bx_i\|^2.\label{two-vecs-4}
\end{align}

\section{Proofs of theorems in Section \ref{sec:convergence}}
\subsection{Proof of  Lemma \ref{basic-observations}}\label{lemma1}
\begin{proof} a) For each client $i\in[N]$ at $(k+1)$th iteration, let $k_i$ denote the largest integer in $[-1,k]$ such that $i\in\mathcal{N}^{\tau_{k_i+1}}$. This implies that client $i$ is not selected in any subsequent subset $\mathcal{N}^{\tau_{k_i+2}}, \mathcal{N}^{\tau_{k_i+3}} \cdots ,\tauk$. By update rule \eqref{iceadmm-sub5}, it follows that
	\begin{eqnarray}\label{par-invariance}
		\arraycolsep=0pt\def\arraystretch{1.5}
		\begin{array}{rcl}
			(\varepsilon_i^{\ell+1}, \by^{\ell+1}_i, \blam^{\ell+1}_i, \bz^{\ell+1}_i)\equiv(\varepsilon_i^{k_i+1}, \by^{k_i+1}_i,  \blam^{k_i+1}_i, \bz^{k_i+1}_i), \quad\forall \ell= k_i,k_i+1, \cdots ,k.
		\end{array}
	\end{eqnarray}
	For any client $i\in\tauk$, \eqref{iceadmm-sub4} holds directly. Now consider a client $i\notin\tauk$. If $k_i\geq0$, then since $i\in \mathcal{N}^{\tau_{k_i + 1}}$, $(\by^{k_i + 1}_i, \blam^{k_i + 1}_i, \bz^{k_i + 1}_i)$ satisfies \eqref{iceadmm-sub4}. By \eqref{par-invariance}, it follows that $(\by^{k + 1}_i, \blam^{k + 1}_i, \bz^{k + 1}_i)$ also satisfies \eqref{iceadmm-sub4}. If $k_i = -1$, indicating that client $i$ has never been selected, then by \eqref{par-invariance} and the initialization scheme, we conclude that
	$$\bz^{k+1}_i=\bz^{k_i+1}_i=\bz^{0}_i =\beta_i\by_i^{0} +\blam_i^{0}=\beta_i\by_i^{k+1} +\blam_i^{k+1}.$$
	Therefore, \eqref{iceadmm-sub4} remains valid in all cases. We thus conclude that \eqref{iceadmm-sub4} holds for every client $i \in [N]$ and all iterations $k \geq 0$. Now, for any $k \in \mathcal{K}$,
	\begin{eqnarray*}
		\arraycolsep=1.0pt\def\arraystretch{1.5}
		\begin{array}{lcl}
			\sum_i  (  \beta_i(\by_i^k-\hat\bx^{\tk})+\blam_i^k)\overset{\eqref{iceadmm-sub4}}= \sum_i ( \bz_i^k -\beta_i\hat\bx^{\tk})   \overset{\eqref{iceadmm-sub1}}{=} 0.
		\end{array}
	\end{eqnarray*}
    
	b) For any  client $i\in\tauk$,  the solution $\by_i^{k+1}$ obtained from \eqref{iceadmm-sub2}  satisfies
	\begin{align*}
		\Phi_i^{k+1} = \bg_i^{k+1}  +  \blam_i^{k+1}\overset{\eqref{iceadmm-sub3}}{=}& \bg_i^{k+1} + \blam_i^k  +\beta_i \triangle\oby_i^{k+1},\\
		\|\Phi_i^{k+1} \|^2\overset{\eqref{iceadmm-sub2}}{\leq}&\varepsilon_i^{k+1}.
	\end{align*}
	For any client $i\notin\tauk$, it holds that
	$$\Phi_i^{k_i+1}=\bg_i^{k_i+1}  +  \blam_i^{k_i+1},~~\|\Phi_i^{k_i+1} \|^2\leq\varepsilon_i^{k_i+1}$$
	since  $i\in\mathcal{N}^{\tau_{k_i+1}}$. Combined with \eqref{par-invariance}, this ensures that \eqref{opt-con-xk1-2} continues to hold. Therefore, we conclude that \eqref{opt-con-xk1-2} is satisfied for every client $i \in [N]$ and all iterations $k \geq 0$.

	c) For $i\in\tauk$, Assumption \ref{ass-fi} and \eqref{opt-con-xk1-2} imply that
	\begin{eqnarray} \label{gap-pi-k-eps}
		\arraycolsep=1.0pt\def\arraystretch{1.5}
		\begin{array}{lcl}
			\|\triangle   \blam_i^{k+1}\|^2 &=& \| \bg_i^{k+1} - \bg_i^k -(\Phi_i^{k+1}-\Phi_i^k)\|^2\\
			&\overset{\eqref{two-vecs-3}}\leq& \frac{6p_i^2L_i^2}{5}  \|\triangle\by_i^{k+1} \|^2+6\| \Phi_i^{k+1}-\Phi_i^k\|^2\\
			&\overset{\eqref{opt-con-xk1-2}}\leq& \frac{6p_i^2L_i^2}{5}  \|\triangle\by_i^{k+1} \|^2+12(\varepsilon_i^{k+1}+\varepsilon_i^k)\\
			&\overset{\eqref{iceadmm-sub20}}=&\frac{6p_i^2L_i^2}{5} \|\triangle\by_i^{k+1} \|^2+\frac{12(1+\theta_{i})}{1-\theta_{i}}(\varepsilon_i^k-\varepsilon_i^{k+1})\\
			&\leq&\frac{6p_i^2L_i^2}{5} \|\triangle\by_i^{k+1} \|^2-\frac{24}{1-\theta_{i}}\triangle\varepsilon_i^{k+1}.
		\end{array}
	\end{eqnarray}
	For $i\notin\tauk$, \eqref{iceadmm-sub5} gives $\triangle \blam_i^{k+1} = 0$, so the condition remains satisfied.
	
	d) For sufficiently large $k \in \mathcal{K}$, Scheme \ref{ass-omega} ensures that every client $i$ has been selected at least $k/(s_0k_0)$ times. Consequently, \eqref{iceadmm-sub20} is invoked at least $k/(s_0k_0)$ times, ensuring $\varepsilon_i^{k+1} \to 0$.
\end{proof}
\subsection{Proof of Lemma \ref{lemma-decreasing-1}}\label{lemma2}
\begin{proof}
	We bound the gap $(\L^{k+1}-\L^k)$ by decomposing it into three components:
	\begin{align}\label{three-cases}
		\L^{k+1}-\L^k  =: p_1^k+p_2^k+p_3^k,
	\end{align}
	where
	\begin{eqnarray*}
		\arraycolsep=1.0pt\def\arraystretch{1.5}
		\begin{array}{llllll}
			p_1^k&:=&\L(\bx^{\tk},Y^k,\Lambda^k)-\L(\bx^{\tau_k},Y^k,\Lambda^k),\\
			p_2^k&:=& \L(\bx^{\tk},Y^{k+1},\Lambda^k)-\L(\bx^{\tk},Y^k,\Lambda^k), \\
			p_3^k&:=& \L(\bx^{\tk},Y^{k+1},\Lambda^{k+1})-\L(\bx^{\tk},Y^{k+1},\Lambda^k).
		\end{array}
	\end{eqnarray*}
	Since $\beta_i\geq 3p_i L_i > 0$, we have
	\begin{eqnarray}\label{sigma-a-r}
		\arraycolsep=1.0pt\def\arraystretch{1.5}
		\begin{array}{llllll}
			0< p_i L_i\leq  \frac{\beta_i}{3}, ~~\forall i\in[N].
		\end{array}
	\end{eqnarray}
    
	{\underline{Bound  $p_1^k$.}}
	If  $k\in\mathcal{K}$, then $\tk=\tau_{k}+1$, and
	\begin{eqnarray*}
		\arraycolsep=1.0pt\def\arraystretch{1.5}
		\begin{array}{lcl}
			p_1^k
			&\overset{\eqref{Def-L}}=& \sum_i\langle\triangle\bx^{\tk}, -\blam_i^k\rangle+\sum_i (  \frac{\beta _i}{2}\|\by_i^k-\bx^{\tk}\|^2 -\frac{\beta _i}{2}\|\by_i^k-\bx^{\tau_k}\|^2 )\\
			&\overset{\eqref{two-vecs-1}}=& \sum_i    \langle\triangle\bx^{\tk}, -\blam_i^k-\beta _i(\by_i^k-\bx^{\tk})\rangle- \sum_i \frac{\beta_i}{2} \| \triangle\bx^{\tk}\|^2 \\
			&\overset{\eqref{opt-con-xk1-1}}=&\sum_i\langle\triangle\bx^{\tk}, \beta _i(\bx^{\tk}-\hat\bx^{\tk})\rangle
			- \sum_i \frac{\beta_i}{2} \| \triangle\bx^{\tk}\|^2 \\
			&=&\frac{\beta}{2}\left(\|\bx^{\tk}-\hat{\bx}^{\tk}\|^2-\|\bx^{\tau_k}-\hat{\bx}^{\tk}\|^2\right)\\
			&\overset{\eqref{accelerated}}=&\frac{\beta}{2}(a_{\tk}^2\|\hat{\bx}^{\tk}-\hat{\bx}^{\tk-1}\|^2-\|a_{\tau_k}(\hat{\bx}^{\tau_k}-\hat{\bx}^{\tau_k-1})-(\hat{\bx}^{\tk}-\hat{\bx}^{\tk-1})\|^2)\\
			&=&\frac{\beta}{2}((a_{\tk}^2-1)\|\hat{\bx}^{\tk}-\hat{\bx}^{\tk-1}\|^2
			-a_{\tau_k}^2\|\hat{\bx}^{\tau_k}-\hat{\bx}^{\tau_k-1}\|^2+2a_{\tau_k}\langle\hat{\bx}^{\tk}-\hat{\bx}^{\tk-1},\hat{\bx}^{\tau_k}-\hat{\bx}^{\tau_k-1}\rangle)\\
			&\overset{\eqref{two-vecs-2}}\leq&\frac{\beta}{2}((a_{\tk}^2-1)\|\hat{\bx}^{\tk}-\hat{\bx}^{\tk-1}\|^2-a_{\tau_k}^2\|\hat{\bx}^{\tau_k}-\hat{\bx}^{\tau_k-1}\|^2\\
            &&+a_{\tau_k}(\|\hat{\bx}^{\tk}-\hat{\bx}^{\tk-1}\|^2+\|\hat{\bx}^{\tau_k}-\hat{\bx}^{\tau_k-1}\|^2))\\
			&=&\frac{\beta}{2}((a_{\tk}^2+a_{\tau_k}-1)\|\hat{\bx}^{\tk}-\hat{\bx}^{\tk-1}\|^2+(a_{\tau_k}-a_{\tau_k}^2)\|\hat{\bx}^{\tau_k}-\hat{\bx}^{\tau_k-1}\|^2)
		\end{array}
	\end{eqnarray*}
	Adding $\frac{\beta}{2}((a_{\tk}-a_{\tk}^2)\|\hat{\bx}^{\tk}-\hat{\bx}^{\tk-1}\|^2-(a_{\tau_k}-a_{\tau_k}^2)\|\hat{\bx}^{\tau_k}-\hat{\bx}^{\tau_k-1}\|^2)$ to both sides of the inequality yields
	\begin{eqnarray}\label{p1}
		\arraycolsep=1.0pt\def\arraystretch{1.5}
		\begin{array}{lcl}
			&&p_1^k+\frac{\beta}{2}((a_{\tk}-a_{\tk}^2)\|\hat{\bx}^{\tk}-\hat{\bx}^{\tk-1}\|^2-(a_{\tau_k}-a_{\tau_k}^2)\|\hat{\bx}^{\tau_k}-\hat{\bx}^{\tau_k-1}\|^2)\\
			&\leq&\frac{\beta}{2}(a_{\tk}+a_{\tau_k}-1)\|\hat{\bx}^{\tk}-\hat{\bx}^{\tk-1}\|^2\\
			&=&\frac{\beta}{2}(a_{\tk}+a_{\tau_k}-1)\|\triangle\hat{\bx}^{\tk}\|^2.
		\end{array}
	\end{eqnarray}
where the last equality follows from $\tk=\tau_{k}+1$.
	
If  $k\notin\mathcal{K}$, $\tk=\tau_k$. In this case, \eqref{p1} continues to hold.

{\underline{Bound  $p_2^k$.}}  We proceed by considering two cases: $i\in\tauk$ and $i\notin\tauk$. For $i\in\tauk$, define $u_i$ as  follows:
\begin{eqnarray*}
\arraycolsep=1.0pt\def\arraystretch{1.5}
\begin{array}{lcl}
			u_i^k:
			&=&     \L_i(\bx^{\tk},\by_i^{k+1},\blam_i^k)-\L_i(\bx^{\tk},\by_i^k,\blam_i^k) \\
			&\overset{\eqref{Def-L}}=&      p_i  f_i(\by_i^{k+1})- p_i f_i(\by_i^k)   +  \langle \triangle \by_i^{k+1} , \blam_i^k\rangle+        \frac{\beta _i}{2}\|\triangle\oby_i^{k+1}\|^2 -\frac{\beta _i}{2}\|\by_i^k-\bx^{\tk}\|^2 \\
			&\overset{\eqref{two-vecs-1}}=& p_i  f_i(\by_i^{k+1})- p_i f_i(\by_i^k)+   \langle \triangle \by_i^{k+1} , \blam_i^k+\beta_i \triangle\oby_i^{k+1}\rangle-  \frac{\beta_i}{2} \| \triangle\by_i^{k+1}\|^2\\
			&\overset{\eqref{iceadmm-sub3} }=&      p_i  f_i(\by_i^{k+1})- p_i f_i(\by_i^k)  +   \langle \triangle \by_i^{k+1} , \blam_i^{k+1}\rangle-  \frac{\beta_i}{2} \| \triangle\by_i^{k+1}\|^2\\
			&\overset{\eqref{H-Lip-continuity-fxy}}\leq& \frac{p_iL_i-\beta_i}{2}\| \triangle\by_i^{k+1}\|^2 +   \langle \triangle\by_i^{k+1}, \bg_i^{k+1} + \blam_i^{k+1}\rangle   \\
			&\overset{\eqref{sigma-a-r} }\leq&- \frac{\beta_i}{3}\| \triangle\by_i^{k+1}\|^2 +   \langle \triangle\by_i^{k+1}, \bg_i^{k+1} + \blam_i^{k+1}\rangle    \\
			&\overset{ \eqref{opt-con-xk1-2} }=&- \frac{\beta_i}{3}\| \triangle\by_i^{k+1}\|^2 + \langle \triangle\by_i^{k+1}, \Phi_i^{k+1}\rangle     \\
			&\overset{\eqref{two-vecs-2}}\leq& -\frac{\beta _i}{3}\| \triangle\by_i^{k+1}\|^2+ \frac{5}{\beta_i}\|\Phi_i^{k+1}\|^2 +\frac{\beta _i}{10}\| \triangle\by_i^{k+1}\|^2  \\
			&\overset{\eqref{opt-con-xk1-2}}\leq&-\frac{7\beta _i}{30}\| \triangle\by_i^{k+1}\|^2+  \frac{5\varepsilon_i^{k+1}}{\beta_i} \\
			&\overset{\eqref{iceadmm-sub20}}\leq& -\frac{7\beta _i}{30}\| \triangle\by_i^{k+1}\|^2 - \frac{5\triangle\varepsilon_i^{k+1}}{(1-\theta_{i})\beta_i}.
		\end{array}
	\end{eqnarray*}
	For $i\notin\tauk$,   $ \by_i^{k+1}=\by_i^k$ and $\varepsilon_i^{k+1}=\varepsilon_i^k$ from \eqref{iceadmm-sub5} imply that the above condition again holds. Therefore, in both cases, we obtain
	\begin{eqnarray}\label{gap-2-1}
		\arraycolsep=0.5pt\def\arraystretch{1.5}
		\begin{array}{lcl}
			p_{2}^k = \sum_i  u_i^k  \leq \sum_i (-\frac{7\beta _i}{30}\| \triangle\by_i^{k+1}\|^2 - \frac{5\triangle\varepsilon_i^{k+1}}{(1-\theta_{i})\beta_i}) .
		\end{array}
	\end{eqnarray}
    
	{\underline{Bound  $p_3^k$.}} We consider two cases: $i\in\tauk$ and $i\notin\tauk$. If $i\in\tauk$, then clearly
	\begin{eqnarray*}
		\arraycolsep=1.0pt\def\arraystretch{1.5}
		\begin{array}{lcl}
			v_i^k :
			&=&     \L_i(\bx^{\tk},\by_i^{k+1},\blam_i^{k+1})-\L_i(\bx^{\tk},\by_i^{k+1},\blam_i^k) \\
			&\overset{\eqref{Def-L}}=&     \langle \triangle \oby_i^{k+1} ,\triangle \blam_i^{k+1}  \rangle
			\overset{\eqref{iceadmm-sub3}}=    \frac{1}{\beta _i}\|\triangle \blam_i^{k+1} \|^2 \\
			&\overset{\eqref{opt-con-xk1-3}}\leq&   \frac{6p_i^2L_i^2}{5\beta _i}  \|\triangle\by_i^{k+1} \|^2- \frac{24}{(1-\theta_{i})\beta_i} \triangle\varepsilon_i^{k+1}\\
			&\overset{\eqref{sigma-a-r}}\leq&   \frac{4\beta_i}{30}  \|\triangle\by_i^{k+1} \|^2- \frac{24}{(1-\theta_{i})\beta_i} \triangle\varepsilon_i^{k+1}.
		\end{array}
	\end{eqnarray*}
For $i \notin \mathcal{N}^{\tau_k}$, $\by_i^{k+1}{=}\by_i^k$ and $\varepsilon_i^{k+1}{=}\varepsilon_i^k$ from \eqref{iceadmm-sub5} ensure the final inequality continues to hold. Hence, in both cases we conclude that
\begin{eqnarray}\label{gap-3-1}
\arraycolsep=0.5pt\def\arraystretch{1.5}
\begin{array}{lcl}
p_3^k =  \sum_i v_i^k   \leq    \sum_i  ( \frac{4\beta_i}{30}  \|\triangle\by_i^{k+1} \|^2- \frac{24\triangle\varepsilon_i^{k+1}}{(1-\theta_{i})\beta_i} ).
\end{array}
\end{eqnarray}
Combining \eqref{three-cases}, \eqref{p1}, \eqref{gap-2-1}, and \eqref{gap-3-1} yields
\begin{eqnarray*}
		\arraycolsep=1.0pt\def\arraystretch{1.5}
		\begin{array}{lll}
			\L^{k+1}-\L^k
			&\leq&\sum_i\left(-\frac{\beta _i}{10}\|\triangle\by_i^{k+1}\|^2-\frac{29\triangle\varepsilon_i^{k+1}}{(1-\theta_{i})\beta_i}\right)-\frac{\beta}{2}(1-a_{\tk}-a_{\tau_k})\|\triangle\hat{\bx}^{\tk}\|^2\\
			&&-\frac{\beta}{2}((a_{\tk}-a_{\tk}^2)\|\hat{\bx}^{\tk}-\hat{\bx}^{\tk-1}\|^2-(a_{\tau_k}-a_{\tau_k}^2)\|\hat{\bx}^{\tau_k}-\hat{\bx}^{\tau_k-1}\|^2)
		\end{array}
	\end{eqnarray*}
	The desired result follows:
	\begin{eqnarray*}
		\arraycolsep=1.0pt\def\arraystretch{1.5}
		\begin{array}{lll}
			\widetilde{\L}^{k+1}-\widetilde{\L}^k
			&\leq&-\sum_i\frac{\beta _i}{10}\|\triangle\by_i^{k+1}\|^2-\frac{\beta}{2}(1-a_{\tk}-a_{\tau_k})\|\triangle\hat{\bx}^{\tk}\|^2
		\end{array}
	\end{eqnarray*}
	where $\widetilde{\L}^k = \L^k + \sum_i\frac{29 \varepsilon_i^k }{(1-\theta_{i})\beta_i} + \frac{\beta}{2}(a_{\tau_k}-a_{\tau_k}^2)\|\hat{\bx}^{\tau_k} - \hat{\bx}^{\tau_k-1}\|^2$.
\end{proof}

\subsection{Proof of Lemma \ref{L-bounded-decreasing-1}}\label{lemma3}
\begin{proof}
	a) This claim follows immediately from Lemma~\ref{lemma-decreasing-1}.
	
	b) Assumption \ref{ass-fi} yields
	\begin{eqnarray*}
		\arraycolsep=1pt\def\arraystretch{1.5}
		\begin{array}{ccll}
			p_if_i(\bx^{\tau_k})- p_if_i(\by_i^k)
			&\overset{\eqref{H-Lip-continuity-fxy}}{\leq}& \langle \triangle \oby_i^k,  -\bg_i^k \  \rangle    + \frac{L_ip_i}{2}\|\triangle \oby_i^k\|^2 \\
			&\overset{\eqref{sigma-a-r}}{\leq}& \langle \triangle \oby_i^k,  -\bg_i^k \  \rangle    + \frac{\beta_{i}}{6}\|\triangle \oby_i^k\|^2 \\
			&\overset{\eqref{opt-con-xk1-2}}{=}&  \langle \triangle \oby_i^k,  \blam_i^k -\Phi_i^k  \rangle    + \frac{\beta_{i}}{6} \|\triangle \oby_i^k\|^2\\
			&\overset{\eqref{two-vecs-2}}{\leq}&  \langle \triangle \oby_i^k,  \blam_i^k  \rangle    + \frac{\beta_{i}}{3}\|\triangle \oby_i^k\|^2+\frac{3}{2\beta_i}\|\Phi_i^k\|^2 + \frac{\beta_{i}}{6} \|\triangle \oby_i^k\|^2 \\
			&\overset{\eqref{opt-con-xk1-2}}{\leq}&  \langle \triangle \oby_i^k,  \blam_i^k  \rangle    +\frac{\beta_{i}}{2}\|\triangle \oby_i^k\|^2+\frac{3\varepsilon_i^k}{2\beta_i}.
		\end{array}
	\end{eqnarray*}
	Combining this with $1 > \theta_i \geq 1/2$ yields
	\begin{eqnarray*}
		\arraycolsep=1.0pt\def\arraystretch{1.5}
		\begin{array}{ccll}
			\widetilde \L^k
			&\overset{\eqref{def-c}}=&  \sum_i   (\L_i(\bx^{\tau_k},\by_i^k,\blam_i^k)+\frac{29\varepsilon_i^k}{(1-\theta_{i})\beta_i})+\frac{\beta}{2}(a_{\tau_k}-a_{\tau_k}^2)\|\hat{\bx}^{\tau_k}-\hat{\bx}^{\tau_k-1}\|^2\\
			&\overset{\eqref{Def-L}}{\geq}&  \sum_i (p_i  f_i(\by_i^k) + \langle \triangle \oby_i^k, \blam_i^k\rangle + \frac{\beta _i}{2}\|\triangle \oby_i^k\|^2+\frac{3\varepsilon_i^k}{2\beta_i})\\
			&\geq& \sum_i  p_if_i(\bx^{\tau_k}) = f(\bx^{\tau_k})  \geq f(\bx^*),
		\end{array}
	\end{eqnarray*}
	Since each $f_i$ is bounded below, the global objective satisfies $f(\bx^*) > -\infty$, and thus $\widetilde{\mathcal{L}}^k \geq f(\bx^{\tau_k}) \geq f(\bx^*) > -\infty$.
	
	c) From Lemma \ref{lemma-decreasing-1} and $\widetilde{\mathcal{L}}^k > -\infty$, it follows that
	\begin{eqnarray*}
		\arraycolsep=0pt\def\arraystretch{1.5}
		\begin{array}{lll}
			\sum\limits_{k\geq0}\!\left(\sum_i\!\frac{\beta _i}{10}\|\triangle\by_i^{k+1}\|^2
			\!+\!\frac{\beta}{2}(1\!-\!a_{\scriptscriptstyle\tk}\!-\!a_{\scriptscriptstyle\tau_k})\|\triangle\hat{\bx}^{\scriptscriptstyle\tk}\|^2\right)
			\leq\sum\limits_{k\geq0}\! (\widetilde\L^k\!-\!\widetilde\L^{k+1})\leq\widetilde\L^{0}\!-\! f(\bx^*) < +\infty.
		\end{array}
	\end{eqnarray*}
	This implies $\| \triangle \hat{\bx}^{\tk}\|\to0$ and $\|\triangle\by_i^{k+1}\|\to0$. Moreover, from \eqref{opt-con-xk1-3} and \eqref{opt-con-xk1-4}, $\|\triangle\by_i^{k+1}\|\to0$ yields $\| \triangle \blam_i^{k+1} \| \to 0$. Finally, since $\| \triangle \hat{\bx}^{\tk}\|\to0$, we conclude $\|\triangle\obx^{\tk}\|\to 0$.
	
	Given $\| \triangle \hat{\bx}^{\tk}\|\to0$ and $\|\triangle\obx^{\tk}\|\to 0$, we have
	\begin{align*}
		\| \triangle\bx^{\tk}\|=\|-\triangle\obx^{\tk}+\triangle \hat{\bx}^{\tk}-\triangle\obx^{\tau_k}\|
		\leq\|\triangle\obx^{\tk}\|+\|\triangle \hat{\bx}^{\tk}\|+\|\triangle\obx^{\tau_k}\|\to 0
	\end{align*}
	Finally, for  $i\in\tauk$, \eqref{iceadmm-sub3} gives $\|\beta _i \triangle\oby_i^{k+1}\| = \|  \triangle \blam_i^{k+1} \| \to 0$.
	For $i\notin\tauk$, we have $i\in\mathcal{N}^{\tau_{k_i+1}}$ with $k_i$ defined as in the proof of Lemma \ref{basic-observations} a). By Scheme \ref{ass-omega}, $\tk -\tau_{k_i+1}\leq s_0$, and thus
	\begin{eqnarray}\label{gap-xi-xtau}
		\arraycolsep=1.0pt\def\arraystretch{1.5}
		\begin{array}{lcl}
			\| \triangle\oby_i^{k+1}  \|^2 &\overset{\eqref{par-invariance}}{=}&\| \by_i^{k_i+1}-\bx^{\tk}  \|^2 \\
			&\overset{\eqref{two-vecs-3}}\leq& 2\| \triangle\oby_i^{k_i+1}\|^2+2\|\bx^{\tau_{k_i+1}}-\bx^{\tk}  \|^2 \\
			&\overset{\eqref{iceadmm-sub3}}{=}& \frac{2}{\beta_i^2}\| \triangle \blam_i^{k_i+1} \|^2+2\|\sum\limits_{t= \tau_{k_i+1}}^{\tk-1}(\bx^{ {t+1}}-\bx^{ t})  \|^2 \\
			&\overset{\eqref{two-vecs-4}}{\leq}&
			\frac{2}{\beta_i^2}\| \triangle \blam_i^{k_i+1} \|^2+2 s_0\sum\limits_{t= \tau_{k_i+1}}^{\tk-1}\|\bx^{ {t+1}}-\bx^{ t}\|^2\to 0,
		\end{array}
	\end{eqnarray}
	where the last relation follows from $\triangle\blam_i^k\to0$ and $\triangle\bx^{\tk}\to0$. This completes the proof.
\end{proof}

\subsection{Proof of  Theorem \ref{global-obj-convergence-inexact}}\label{theorem1}
\begin{proof}
	a) By Lemma~\ref{L-bounded-decreasing-1}, $\widetilde\L^1\geq f(\bx^{\tk})$ holds. This together with the coercivity of $f$ implies the boundedness of the sequence $\{\bx^{\tk}\}$. Moreover, since $\triangle\oby_i^{k+1}\to0$ by \eqref{limit-5-term-0-inexact}, the sequence $\{\by_i^{k+1}\}$ is also bounded, thereby delivering
	\begin{eqnarray*}
		\arraycolsep=0.0pt\def\arraystretch{1.5}
		\begin{array}{lcl}
			\|\blam_i^{k+1}\|  &\overset{\eqref{opt-con-xk1-2}}{=}&   \|\Phi_i^{k+1}-\bg_i^{k+1}\| \\
			&\leq& \|\Phi_i^{k+1}\| +  \|\bg_i^{k+1}\!-\!\bg_i^{0}\|+ \|\bg_i^{0}\|  \\
			&\overset{\eqref{opt-con-xk1-2},\eqref{Lip-r}}{\leq}&\sqrt{\varepsilon_i^{k+1}}\!+\! p_i L_i \|\by_i^{k+1}\!-\!\by_i^0\|+ \|\bg_i^{0}\| <+\infty.
		\end{array}
	\end{eqnarray*}
	This establishes the boundedness of $\{\blam_i^{k+1}\}$. We thus conclude that $\{(\bx^{\tk},Y^{k+1},\Lambda^{k+1})\}$ is bounded.
	
	b) From Lemma~\ref{L-bounded-decreasing-1}, the sequence $\{\widetilde{\mathcal{L}}^k\}$ is non-increasing and bounded below, hence it converges. Moreover, $\widetilde{\mathcal{L}}^{k+1} \to \mathcal{L}^{k+1}$ because $\varepsilon_i^{k+1} \to 0$ and $\triangle \hat\bx^{\tau_k} \to 0$ by \eqref{limit-5-term-0-inexact}. Finally, using $\triangle \by_i^{k+1} \to 0$ from \eqref{limit-5-term-0-inexact} and the boundedness of $\{\blam_i^{k+1}\}$, we obtain
	\begin{eqnarray*}%\label{L-f-pi}
		\arraycolsep=0.0pt\def\arraystretch{1.5}
		\begin{array}{lcl}
			\L^{k+1}- F(Y^{k+1})
			\overset{\eqref{Def-L}}{=}  \sum_i  \left(  \langle \triangle \oby_i^{k+1}, \blam_i^{k+1}\rangle +\frac{\beta _i}{2}\|\triangle \oby_i^{k+1}\|^2 \right)  \to0.
		\end{array}
	\end{eqnarray*}
	By the Mean Value Theorem,
	$$f_i(\by_i^{k+1})=f_i(\bx^{\tk})+\langle\triangle \oby^{k+1} , \nabla f_i(\bx_t)\rangle,$$
	where $\bx_t := (1-t)\bx^{\tk}+t\by_i^{k+1}$ for some $t\in(0,1)$. The boundedness of $\{\bx^{\tk}, \by_i^{k+1}\}$ implies that $\bx_t$ is also bounded.
	\begin{eqnarray*}
		\arraycolsep=0pt\def\arraystretch{1.5}
		\begin{array}{lcl}
			\|\nabla f_i(\bx_t)\|\leq\|\nabla f_i(\bx_t)-\nabla f_i(\by_i^0)\|+ \|\nabla f_i(\by_i^0)\|
			\overset{\eqref{Lip-r}}\leq L_i\|\bx_t-\by_i^0\|+ \|\nabla f_i(\by_i^0)\| <+\infty.
		\end{array}
	\end{eqnarray*}
	The boundedness of $\nabla f_i(\bx_t)$ follows. Combined with $\triangle \oby^{k+1}\to0$, this yields $f_i(\by_i^{k+1})-f_i(\bx^{\tk})\to0$.
	From this result, we obtain
	\begin{eqnarray*}
		\arraycolsep=0.0pt\def\arraystretch{1.5}
		\begin{array}{lcl}
			F(Y^{k+1})\!-\!f(\bx^{\tk})\!=\!\sum_i\left( p_if_i(\by_i^{k+1})\!-\!p_i f_i(\bx^{\tk})\right)\!\to\!0.
		\end{array}
	\end{eqnarray*}
    
	c) Let $\ell:=(\tk-1)k_0 \in\mathcal{K}$. Then for any $k$,
	\begin{eqnarray}\label{tau-tau-s}
		\arraycolsep=1.0pt\def\arraystretch{1.5}
		\begin{array}{rcll}
			\ell+1&=& (\tk-1)k_0+1 \leq k+1 \leq \tk k_0,\\
			\tau_{\ell+1}&=&\lceil (\ell+1)/k_0 \rceil = \lceil  \tk-1+1/k_0 \rceil= \tk.
		\end{array}
	\end{eqnarray}
	Since $\ell\in\mathcal{K}$, it follows that
	\begin{eqnarray*}
		\arraycolsep=1.0pt\def\arraystretch{1.5}
		\begin{array}{rcll}
			\sum_i  \blam_i^{\ell+1}
			&\overset{\eqref{limit-5-term-0-inexact}}{\to}& \sum_i   ( \beta_i(\triangle \oby_i^{\ell+1}-\triangle \by_i^{\ell+1}+\triangle\obx^{\tau_{\ell+1}})-\triangle \blam_i^{\ell+1}+ \blam_i^{\ell+1} )
			=\sum_i   ( \beta_i(\by_i^{\ell}-\hat{\bx}^{\tau_{\ell+1}})+\blam_i^{\ell})\overset{\eqref{opt-con-xk1-1}}{=} 0.
		\end{array}
	\end{eqnarray*}
	Note that $\{\varepsilon_i^{k+1}\}$ is non-increasing. From \eqref{tau-tau-s}, we thus have $\varepsilon_i^{\ell+1} \geq \varepsilon_i^{k+1}$, which implies
	\begin{eqnarray*}
		\arraycolsep=1.0pt\def\arraystretch{1.5}
		\begin{array}{lcl}
			\|\blam_i^{k+1}-\blam_i^{\ell+1}\|^2
			&\overset{\eqref{opt-con-xk1-2}}{=}&   \| \Phi_i^{k+1}-\Phi_i^{\ell+1}-\bg_i^{k+1} +\bg_i^{\ell+1} \|^2  \\
			&\overset{\eqref{opt-con-xk1-2},\eqref{Lip-r}}{\leq}&     3{\varepsilon_i^{k+1}}+ 3{\varepsilon_i^{\ell+1}} + 3p_i^2L_i^2 \| \by_i^{k+1}-\by_i^{\ell+1}\|^2\\
			&\overset{\eqref{tau-tau-s}}{\leq}&   6{\varepsilon_i^{\ell+1}}  + 3p_i^2L_i^2 \| \by_i^{k+1}-\bx^{\tk} + \bx^{\tau_{\ell+1}}- \by_i^{\ell+1}\|^2\\
			&\overset{\eqref{two-vecs-3}}{\leq}&   6{\varepsilon_i^{\ell+1}}  + 6p_i^2L_i^2 (\| \triangle \oby^{k+1}_i\|^2 + \|\triangle\oby^{{\ell+1}}_i\|^2) \overset{\eqref{limit-5-term-0-inexact}}{\to}0.
		\end{array}
	\end{eqnarray*}
	Combining the above two results yields
	\begin{eqnarray}\label{limit-K-grad-pi-exact}
		\arraycolsep=1.0pt\def\arraystretch{1.5}
		\begin{array}{lcl}
			\sum_i\blam_i^{k+1}\to0.
		\end{array}
	\end{eqnarray}
	Taking the limit in \eqref{opt-con-xk1-2} yields
	\begin{eqnarray*}%\label{limit-K-grad-i-exact}
		\arraycolsep=1.0pt\def\arraystretch{1.5}
		\begin{array}{lcl}
			\nabla F(Y^{k+1}) &=& \sum_i  \bg_i^{k+1}
			\overset{\eqref{limit-K-grad-pi-exact}}\to\sum_i  (\bg_i^{k+1} +\blam_i^{k+1})
			\overset{\eqref{opt-con-xk1-2}}=\sum_i\Phi_i^{k+1} \overset{\eqref{opt-con-xk1-4}}\to0,
		\end{array}
	\end{eqnarray*}
	which, together with $\triangle\oby_i^{k+1}\to0$ and Assumption \ref{ass-fi}, implies 
	\begin{eqnarray*}
		\arraycolsep=1.0pt\def\arraystretch{1.5}
		\begin{array}{lcl}
	\nabla f (\bx^{\tk})=\sum_i  p_i \nabla f_i(\bx^{\tk})\to 0,
		\end{array}
\end{eqnarray*}
	thus completing the proof.
\end{proof}

\subsection{Proof of  Theorem \ref{global-convergence-inexact}}\label{theorem2}

\begin{proof}
	a)  Let $(\bx^{\infty},Y^{\infty},\Lambda^{\infty})$ be an accumulating point of the sequence. From \eqref{opt-con-xk1-2} and \eqref{opt-con-xk1-4}, it follows that
	$$0= p_i \nabla f_i(\by^{\infty}_i) + \blam_i^{\infty}.$$
	From $\triangle\oby_i^{k+1}\to0$ and \eqref{limit-K-grad-pi-exact}, we obtain
	\begin{eqnarray*}
		\arraycolsep=1.0pt\def\arraystretch{1.5}
		\begin{array}{lcl}
			0=\by^{\infty}_i- \bx^{\infty},~~ 0=\sum_i \blam^{\infty}_i.
		\end{array}
	\end{eqnarray*}
	Therefore, by \eqref{opt-con-FL-opt-ver1},  $(\bx^{\infty},Y^{\infty},\Lambda^{\infty})$  is a stationary point of   \eqref{FL-opt-ver1}, and $\bx^{\infty}$ is a stationary point of \eqref{FL-opt}.
	
	b) By \cite[Lemma 4.10]{more1983computing}, $\triangle \bx^{\tk} \to0$ and the isolation of $\bx^{\infty}$ imply $\bx^{\tk}\to\bx^{\infty}$. Combined with $\triangle\oby_i^{k+1}\to0$, this yields convergence of $\{Y^{k+1}\}$ to $Y^{\infty}$, i.e., $\by_i^{k+1}\to\bx^{\infty}$.
	Finally, since $\triangle \blam_i^{k+1}\to0$ and each limit point satisfies $\blam_i^\infty =-p_i\nabla f_i(\by_i^{\infty})$ with isolated structure, the sequence $\{\Lambda^{k+1}\}$ converges analogously to $\{\bx^{\tk}\}$.

\end{proof}

\subsection{Proof of  Corollary \ref{L-global-convergence}}\label{corollary1}
\begin{proof} a) The convexity of $f$ and optimality of $\bx^*$ imply
	\begin{align}\label{convexity-optimality}
		f(\bx^{\tau_k}) \geq  f(\bx^{*}) \geq  f(\bx^{\tau_k}) + \langle \nabla f(\bx^{\tau_k}), \bx^{*}{-}\bx^{\tau_k} \rangle.~
	\end{align}
	By Theorem \ref{global-obj-convergence-inexact} c),
	$$\lim_{k\to\infty} \nabla f(\bx^{\tau_k})=0.$$
	Combining this with the boundedness of $\{\bx^{\tau_k}\}$ from Theorem \ref{global-obj-convergence-inexact} a) and taking limits in \eqref{convexity-optimality} yields  $f(\bx^{\tk})\to f(\bx^*)$. By Theorem \ref{global-obj-convergence-inexact} b),  this implies \eqref{L-global-convergence-limit}.
	
	b) The result follows from  Theorem \ref{global-convergence-inexact} a) and the fact that, under convexity of $f$,  stationary points coincide with global solutions.
	
	c) The strong convexity of $f$ implies the existence of a constant $\theta$ such that
	\begin{eqnarray*}
		\arraycolsep=1.0pt\def\arraystretch{1.5}
		\begin{array}{lcl}
			f(\bx^{\tau_k}) -f( \bx^*)
			\geq  \langle \nabla f( \bx^*),\bx^{\tau_k}-\bx^*\rangle + \frac{\theta}{2}\|\bx^{\tau_k}-\bx^*\|^2
			=  \frac{\theta}{2}\|\bx^{\tau_k}-\bx^*\|^2,
		\end{array}
	\end{eqnarray*}
	where the equality follows from \eqref{grad-x-*=0}. Taking the limit in the above inequality and using $ f(\bx^{\tau_k}) \to f( \bx^*)$ implies $\bx^{\tau_k}\to\bx^*$. Combined with $\triangle\oby_i^{k+1}\to0$ in \eqref{limit-5-term-0-inexact}, this gives $ \by_i^k\to\bx^*$. Finally, $ \blam_i^k\to\blam_i^*$ follows from
	\begin{eqnarray*}
		\arraycolsep=1.0pt\def\arraystretch{1.5}
		\begin{array}{lcl}\|\blam_i^k-\blam_i^{*}\|^2&\overset{\eqref{opt-con-FL-opt-ver1},\eqref{opt-con-xk1-2}}{=}\|\Phi_i^k -( \bg_i^k - p_i \nabla f_i(\bx^*))\|^2
			 \overset{\eqref{Lip-r}}{\leq}&2{\varepsilon_i^k}+  2p_i^2L_i^2\| \by_i^k-\bx^*\|^2\to0,
		\end{array}
	\end{eqnarray*}
	displaying the desired result.
\end{proof}

\subsection{Proof of  Theorem \ref{complexity-thorem-gradient-inexact}}\label{theorem3}
\begin{proof}
	The proof considers $k\in\mathcal{K}$ and estimates $\|\nabla f(\bx^{\tau_{k +1}})\|^2 $ via the decomposition
	\begin{eqnarray}\label{decom-grad-f-tau}
		\arraycolsep=1pt\def\arraystretch{1.5}
		\begin{array}{rcll}
			\|\nabla f(\bx^{\tau_{k +1}})\|^2
			&\overset{\eqref{two-vecs-4}}\leq& 3\|\nabla f(\bx^{\tau_{k +1}})-\sum_i \bg_i^{k +1}\|^2+ 3\|\sum_i \bg_i^{k +1}-\sum_i \bg_i^{k }\|^2+3\|\sum_i \bg_i^{k }\|^2\\
			& \overset{\eqref{Lip-r}}{\leq}& N\sum_i 3p_i^2L_i^2(\| \triangle \oby_i^{{k +1}}\|^2+\| \triangle \by_i^{{k +1}}\|^2)+ 3\|\sum_i \bg_i^{k }\|^2\\
			& \overset{\eqref{sigma-a-r}}{\leq}& N\sum_i \frac{\beta_i^2}{3} (\| \triangle \oby_i^{{k +1}}\|^2+   \| \triangle \by_i^{{k +1}}\|^2)+3\|\sum_i \bg_i^{k }\|^2,
		\end{array}
	\end{eqnarray}
	We now bound each term on the right-hand side, as detailed below.
	
	\underline{Bound $\sum_i \|\beta _i\triangle \by_i^{k+1} \|^2$.}  From Lemma \ref{lemma-decreasing-1},
	\begin{eqnarray*}
		\arraycolsep=1.0pt\def\arraystretch{1.5}
		\begin{array}{lll}
			\sum_i\frac{\beta _i}{10}\|\triangle\by_i^{k+1}\|^2+\frac{\beta}{2}(1-a_{\tk}-a_{\tau_k})\|\triangle\hat{\bx}^{\tk}\|^2
			\leq \widetilde\L^k- \widetilde\L^{k+1},
		\end{array}
	\end{eqnarray*}
	with $\overline\beta:=\max_{i\in[N]}\beta_i$, this becomes
	\begin{eqnarray}\label{gap-w-w-L-L}
		\arraycolsep=1.0pt\def\arraystretch{1.5}
		\begin{array}{lll}
			\sum_i\|\beta_i\triangle \by_i^{k+1} \|^2\leq 10 \overline\beta  \sum_i \frac{\beta _i}{10}\|\triangle \by_i^{k+1} \|^2
			\leq10 \overline\beta( \widetilde\L^k- \widetilde\L^{k+1}).
		\end{array}
	\end{eqnarray}
    similarly, we get
	\begin{eqnarray}\label{gap-w-w-L-y}
		\arraycolsep=1.0pt\def\arraystretch{1.5}
		\begin{array}{lll}			\sum_i\|\beta_i\triangle\hat{\bx}^{\tk}\|^2\leq2\bar{\beta}\sum_i\frac{\beta_i}{2}\|\triangle\hat{\bx}^{\tk}\|^2
			\leq\frac{2\bar{\beta}}{1-a_{\tk}-a_{\tau_k}}(\widetilde\L^k- \widetilde\L^{k+1})
		\end{array}
	\end{eqnarray}
	Substituting \eqref{gap-w-w-L-y} into the following equation yields
	\begin{eqnarray*}
		\arraycolsep=1.0pt\def\arraystretch{1.5}
		\begin{array}{lll}
			\sum_i\| \beta_i\triangle\bx^{\tk}\|^2
			&\leq&\sum_i(a_{\tk}\|\beta_i(\hat{\bx}^{\tk}-\hat{\bx}^{\tk-1})\|+\|\beta_i\triangle \hat{\bx}^{\tk}\|+a_{\tau_k}\|\beta_i(\hat{\bx}^{\tau_k}-\hat{\bx}^{\tau_k-1})\|)^2\\
			&\leq&3a_{\tk}^2\sum_i\|\beta_i(\hat{\bx}^{\tk}-\hat{\bx}^{\tk-1})\|^2+3\sum_i\|\beta_i\triangle \hat{\bx}^{\tk}\|^2+3a_{\tau_k}^2\sum_i\|\beta_i(\hat{\bx}^{\tau_k}-\hat{\bx}^{\tau_k-1})\|^2\\
			&\leq&\frac{1+a_{\tk}^2}{1-a_{\tk}-a_{\tau_k}}6\bar{\beta}(\widetilde\L^k- \widetilde\L^{k+1})+\frac{a_{\tau_k}^2}{1-a_{\tau_k}-a_{\tau_{k-1}}}6\bar{\beta}(\widetilde\L^{k-1}- \widetilde\L^k)
		\end{array}
	\end{eqnarray*}
	If $a_{\tk}\in(0,\frac{1}{2})$ and $a_{\tk}\leq r$,
	$\frac{1+a_{\tk}^2}{1-a_{\tk}-a_{\tau_k}}\leq\frac{1+r^2}{1-2r}=R_1$, $\frac{a_{\tau_k}^2}{1-a_{\tau_k}-a_{\tau_{k-1}}}\leq\frac{r^2}{1-2r}=R_2$, and the preceding expression simplifies to
	\begin{eqnarray}\label{gap-hatx-hatx-L-L}
		\arraycolsep=1.0pt\def\arraystretch{1.5}
		\begin{array}{lll}
			\sum_i\| \beta_i\triangle\bx^{\tk}\|^2
			\leq6R_1\bar{\beta}(\widetilde\L^k- \widetilde\L^{k+1})+6R_2\bar{\beta}(\widetilde\L^{k-1}- \widetilde\L^k).
		\end{array}
	\end{eqnarray}
	
	\underline{Bound $\sum_i\|\beta_i \triangle\oby_i^{k +1}  \|^2$.}  For $i\in\tauk$, the third inequality in \eqref{gap-pi-k-eps} gives
	\begin{eqnarray} \label{opt-con-xk1-3-1}
		\arraycolsep=0.5pt\def\arraystretch{1.5}
		\begin{array}{lcl}
			\|\triangle   \blam_i^{k+1}\|^2
			&\overset{\eqref{gap-pi-k-eps}}{\leq}&   \frac{6p_i^2L_i^2}{5}  \|\triangle\by_i^{k+1} \|^2+12(\varepsilon_i^{k+1}+\varepsilon_i^k)\\
			&\overset{(\ref{sigma-a-r})}{\leq}&\frac{2\beta_i^2}{15} \|\triangle\by_i^{k+1} \|^2+12(\varepsilon_i^{k+1}+\varepsilon_i^k)\\
			&\overset{(\ref{iceadmm-sub20})}{\leq}&\frac{2\beta_i^2}{15} \|\triangle\by_i^{k+1} \|^2+ 24\varepsilon_i^k.
		\end{array}
	\end{eqnarray}
	Thus,
	\begin{eqnarray} \label{gap-xi-xtau-109}
		\arraycolsep=0.5pt\def\arraystretch{1.5}
		\begin{array}{lcl}
			\|\beta _i \triangle\oby_i^{k +1}\|^2 \overset{ \eqref{iceadmm-sub3}}{=} \|  \triangle \blam_i^{k +1} \|^2
			\overset{\eqref{opt-con-xk1-3-1}}{\leq}\frac{2\beta_i^2}{15} \|\triangle\by_i^{k +1} \|^2 +{24}\varepsilon_i^{k }.
		\end{array}
	\end{eqnarray}
	For  $i\notin\tauk$, let $k_i$ be defined as in the proof of Lemma \ref{basic-observations} a) at iteration $k$. Then $i\in\mathcal{N}^{\tau_{k_i+1}}$, and \eqref{opt-con-xk1-3-1} holds at $k_i$, from which we obtain
	\begin{eqnarray}\label{gap-xi-xtau-11}
		\arraycolsep=1.0pt\def\arraystretch{1.5}
		\begin{array}{lcl}
			\|\beta_i \triangle\oby_i^{k +1}  \|^2
			&\overset{\eqref{gap-xi-xtau}}\leq& 2\| \triangle \blam_i^{k_i+1} \|^2+2s_0\beta_i^2\sum\limits_{t=\tau_{k_i+1}}^{\tk-1 }\| \bx^{ {t+1}}-\bx^{ {t}}\|^2\\
			&\overset{\eqref{opt-con-xk1-3-1}}\leq& \frac{4\beta_i^2}{15} \|\triangle\by_i^{k_i+1} \|^2 + 48\varepsilon_i^{k_i}+2s_0\beta_i^2\sum\limits_{t=\tau_{k_i+1}}^{\tk -1}\| \bx^{ {t+1}}-\bx^{ {t}}\|^2.
		\end{array}
	\end{eqnarray}
	By Scheme \ref{ass-omega}, $\tau_{k +1} -\tau_{k_i+1}\leq s_0$, so $k\geq k_i \geq k -(s_0-1)k_0-1\geq k -s_0k_0$. The non-increasing properties of $\{\widetilde\L^k\}$ and $\{\varepsilon_i^k\}$ then yield
	\begin{align} \label{Lki-Lks0}
		\widetilde\L^k \leq \widetilde\L^{k_i}\leq\widetilde\L^{k -s_0k_0}, \varepsilon_i^k\overset{\eqref{iceadmm-sub5}}=\varepsilon_i^{k_i}, i\notin\tauk.
	\end{align}
	Combining \eqref{gap-xi-xtau-109} and \eqref{gap-xi-xtau-11} yields
	\begin{eqnarray} \label{gap-xi-xtau-112}
		\arraycolsep=0.5pt\def\arraystretch{1.75}
		\begin{array}{lcl}
			&&\sum_i\|\beta_i \triangle\oby_i^{k +1}  \|^2\\ 
			&=&\sum\limits_{i\in\tauk}\|\beta_i \triangle\oby_i^{k +1}  \|^2 + \sum\limits_{i\notin\tauk}\|\beta_i \triangle\oby_i^{k +1}  \|^2\\
			&\leq&\sum\limits_{i\in\tauk}(\frac{2\beta_i^2}{15} \|\triangle\by_i^{k +1} \|^2 +{24}\varepsilon_i^{k })+ \sum\limits_{i\notin\tauk} (\frac{4\beta_i^2}{15} \|\triangle\by_i^{k_i+1} \|^2 + 48\varepsilon_i^{k_i}+ 2s_0\beta_i^2\sum\limits_{t=\tau_{k_i+1}}^{\tk -1}\| \bx^{ {t+1}}-\bx^{ {t}}\|^2 )\\
			&\leq&\sum_i(\frac{2\beta_i^2}{15} \|\triangle\by_i^{k +1} \|^2 + \frac{4\beta_i^2}{15} \|\triangle\by_i^{k_i+1} \|^2 )+ \sum_i  (2s_0\beta_i^2\sum\limits_{t=\tau_{k_i+1}}^{\tk -1}\| \bx^{ {t+1}}-\bx^{ {t}}\|^2+ {48} \varepsilon_i^k).
		\end{array}
	\end{eqnarray}
    
	\underline{Bound $\|\sum_i \bg_i^{k }\|^2 $.} For $k \in\mathcal{K}$,
	\begin{eqnarray*}
		\arraycolsep=1.0pt\def\arraystretch{1.5}
		\begin{array}{rcll}
			\sum_i \bg_i^{k } \overset{\eqref{opt-con-xk1-2}}= \sum_i (\Phi_i^{k }-  \blam_i^{k })\overset{\eqref{opt-con-xk1-1}}= \sum_i (\Phi_i^{k } + \beta_i(\by_i^{k }-\bx^{\tau_{k +1}}))
			= \sum_i (\Phi_i^{k } + \beta_i( \triangle \oby_i^{k +1}- \triangle \by_i^{k +1})),
		\end{array}
	\end{eqnarray*}
	and since $\|\Phi_i^k\|^2\leq \varepsilon_i^k $ by \eqref{opt-con-xk1-2}, it follows that
	\begin{eqnarray} \label{sum-g-i}
		\arraycolsep=0.5pt\def\arraystretch{1.75}
		\begin{array}{lcl}
			\|\sum_i \bg_i^{k }\|^2
			\overset{\eqref{two-vecs-4}}{\leq}   3 N \sum_i \varepsilon_i^{k }+ 3N\sum_i \beta_i^2\|  \triangle \oby_i^{k +1}\|^2+3N\sum_i \beta_i^2\| \triangle \by_i^{k +1}\|^2.
		\end{array}
	\end{eqnarray}
	Now, \eqref{decom-grad-f-tau} and  \eqref{sum-g-i}  together imply
	\begin{eqnarray}\label{error-grad-f-w}
		\arraycolsep=0pt\def\arraystretch{1.5}
		\begin{array}{rcll}
			&&\|\nabla f(\bx^{\tau_{k +1}})\|^2\\
			&\leq&\sum_i \frac{28N\beta_i^2}{3} (\| \triangle \oby_i^{{k +1}}\|^2{+} \| \triangle \by_i^{{k +1}}\|^2) + 9 N \sum_i \varepsilon_i^{k }\\
			&\overset{\eqref{gap-xi-xtau-112}}\leq&
			\frac{476N}{45}\sum_i  \|\beta_i\triangle\by_i^{k +1} \|^2
			+ \frac{112N}{45}\sum_i \|\beta_i\triangle\by_i^{k_i+1} \|^2+ {457N}\sum_i   \varepsilon_i^{ k}
			+  \frac{56Ns_0}{3}\sum\limits_{t=\tau_{k_i+1}}^{\tk -1} \sum_i \|\beta_i( \bx^{ {t+1}}-\bx^{ {t}})\|^2\\
			&\overset{\eqref{gap-w-w-L-L}}\leq& \frac{952N\overline{\beta}}{9}(\widetilde\L^k- \widetilde\L^{k+1})+\frac{224N\overline{\beta}}{9}(\widetilde\L^{k_i}- \widetilde\L^{k_i+1})+  {457N}\sum_i   \varepsilon_i^{ k}
			+  \frac{56Ns_0}{3}\sum\limits_{t=\tau_{k_i+1}}^{\tk -1} \sum_i \|\beta_i( \bx^{ {t+1}}-\bx^{ {t}})\|^2\\
			&\overset{\eqref{gap-hatx-hatx-L-L}}\leq& \frac{952N\overline{\beta}}{9}(\widetilde\L^k- \widetilde\L^{k+1})+\frac{224N\overline{\beta}}{9}(\widetilde\L^{k_i}- \widetilde\L^{k_i+1})+  {457N}\sum_i   \varepsilon_i^{ k}\\
			&&+  112Ns_0\overline{\beta}\sum\limits_{t=k_i+1}^k[R_1(\widetilde\L^{t}- \widetilde\L^{t+1})+R_2(\widetilde\L^{t-1}- \widetilde\L^{t})]\\
			&\leq& \frac{952N\overline{\beta}}{9}(\widetilde\L^k- \widetilde\L^{k+1})+{457N}\sum_i   \varepsilon_i^{ k}+112Ns_0R_1\overline{\beta}(\widetilde\L^{k_i}- \widetilde\L^{k+1})+112Ns_0R_2\overline{\beta}(\widetilde\L^{k_i}- \widetilde\L^k)\\
			&\leq& \frac{952N\overline{\beta}}{9}(\widetilde\L^k- \widetilde\L^{k+1})+{457N}\sum_i   \varepsilon_i^k+112Ns_0R\overline{\beta}(\widetilde\L^{k_i}- \widetilde\L^{k+1})\\
			&\overset{\eqref{Lki-Lks0} }{\leq}&  \frac{1960N s_0R\overline{\beta}}{9}(\widetilde\L^{ k -s_0k_0}- \widetilde\L^{k+1})+    {457N}\sum_i   \varepsilon_i^{ k}.
		\end{array}
	\end{eqnarray}
	where $R=R_1+R_2$. The non-increasing property of $\{\widetilde\L^k\}$  implies
	\begin{align}\label{L-2s-L-2s}
		\widetilde\L^{(p+1)k_0}-\widetilde\L^{pk_0+1}  \leq0, ~~\forall p\geq 0.
	\end{align}
	Note that $\{\varepsilon_i^k\}$ is non-increasing. Moreover, Scheme \ref{ass-omega} ensures that within every  $s_0k_0$ steps, each client $i\in[N]$ is selected at least once to update its parameters via \eqref{iceadmm-sub20}-\eqref{iceadmm-sub4}, so $\varepsilon_i^{(a+1)s_0k_0}\leq \theta_i \varepsilon_i^{as_0k_0}$ for all $i\in[N]$ and $a=1,2,\cdots$, and thus
	\begin{eqnarray}\label{sum-eps-i}
		\arraycolsep=0pt\def\arraystretch{1.5}
		\begin{array}{rcll}
			\sum\limits_{p=s_0}^{t} \varepsilon_i^{pk_0}  \leq\sum\limits_{a=1}^{\lceil  {t}/{s_0}\rceil } s_0 \varepsilon_i^{as_0k_0}
			\leq   \frac{s_0\varepsilon_i^{s_0k_0} (1-\theta_i^{\lceil  {t}/{s_0}\rceil})}{1-\theta_i} \leq \frac{s_0\varepsilon_i^0}{1-\theta_i}.
		\end{array}
	\end{eqnarray}
	Finally, \eqref{L-2s-L-2s} and \eqref{sum-eps-i} together yield
	\begin{eqnarray*}
		\arraycolsep=0pt\def\arraystretch{1.5}
		\begin{array}{rcll}
			&&(t-s_0+1)\min\limits_{p=s_0,s_0+1,\cdots,t}\|\nabla f(\bx^{\tau_{pk_0 +1}})\|^2\\
			&\leq&  \sum\limits_{p=s_0}^{t} \|\nabla f(\bx^{\tau_{pk_0 +1}})\|^2\\
			&\overset{\eqref{error-grad-f-w}}{\leq}&  \sum\limits_{p=s_0}^{t}  \frac{1960N s_0R\overline{\beta}}{9}(\widetilde\L^{(p-s_0)k_0}- \widetilde\L^{pk_0+1})+ \sum\limits_{p=s_0}^{t} {457N}\sum_i   \varepsilon_i^{pk_0} \\
			&\overset{\eqref{L-2s-L-2s},\eqref{sum-eps-i}}\leq& \frac{1960N s_0R\overline{\beta}}{9}[(\widetilde\L^{0}+\widetilde\L^{k_0}+\cdots + \widetilde\L^{s_0k_0})-(\widetilde\L^{(t-s_0)k_0+1}+\widetilde\L^{(t-s_0+1)k_0+1}+\cdots+\widetilde\L^{tk_0+1})]+\sum_i \frac{457N s_0\varepsilon_i^0}{ 1-\theta_i }\\
			&\leq&    \frac{1960N s_0(s_0+1)R\overline{\beta}}{9}   (\widetilde\L^{0} - f(\bx^*))+\sum_i \frac{457Ns_0\varepsilon_i^0}{1-\theta_i}= c,
		\end{array}
	\end{eqnarray*}
	where the last inequality follows from $ \widetilde\L^0\geq\widetilde\L^k \geq f(\bx^*)$ for all $ k\geq1$ by Lemma \ref{L-bounded-decreasing-1} a)b). Setting $k = t k_0$ with $k > 2 s_0 k_0$, we obtain
	\begin{align*}
		\min_{s=1,2,\cdots,k}\|\nabla f(\bx^{\tau_{s +1}})\|^2
		\leq\min_{p=s_0,s_0+1,\cdots,t}\|\nabla f(\bx^{\tau_{pk_0 +1}})\|^2
		\leq\frac{c}{t-s_0+1} \leq \frac{c k_0}{k -s_0k_0}\leq \frac{2c k_0}{k},
	\end{align*}
	establishing the desired result.
\end{proof}
\newpage
\bibliographystyle{abbrv}	
\bibliography{ref}

\end{document}